\newcommand\numberthis{\addtocounter{equation}{1}\tag{\theequation}}
\def\1{\bm{1}}
\DeclareMathAlphabet{\mathsfit}{\encodingdefault}{\sfdefault}{m}{sl}
\SetMathAlphabet{\mathsfit}{bold}{\encodingdefault}{\sfdefault}{bx}{n}
\def\sR{{\mathbb{R}}}
\newcommand{\E}{\mathbb{E}}
\newcommand{\R}{\mathbb{R}}
\DeclareMathOperator*{\argmax}{arg\,max}
\DeclareMathOperator*{\argmin}{arg\,min}
\newcommand*{\newreptext}[1]{%
  \begingroup % babel shorthand support
    \csname @safe@actives@true\endcsname
  \expandafter\endgroup
  \expandafter\newcommand\csname reptext@#1\endcsname
}
\newcommand*{\reptext}[1]{%
  \begingroup
  \csname @safe@actives@true\endcsname % babel shorthand support
  \@ifundefined{reptext@#1}{%
    \@latex@error{\string\reptext{#1} is undefined}\@ehc
    \endgroup
    \textbf{??}%
  }{%
    \endgroup
    \@nameuse{reptext@#1}%
  }%
}
\newtheorem{lemma}{Lemma}
\newtheorem{theorem}{Theorem}
\newtheorem{assumption}{Assumption}
\newtheorem{definition}{Definition}
\def\mathcolor#1#{\@mathcolor{#1}}
\def\@mathcolor#1#2#3{%
  \protect\leavevmode
  \begingroup
    \color#1{#2}#3%
  \endgroup
}
\title{Robust Deep Reinforcement Learning against Adversarial Perturbations on State Observations}
\author{%
 Huan Zhang\textsuperscript{*,1} \quad  Hongge Chen\textsuperscript{*,2} \quad Chaowei Xiao\textsuperscript{3} \\ \textbf{Bo Li\textsuperscript{4} \quad Mingyan Liu\textsuperscript{5}\quad Duane Boning\textsuperscript{2} \quad Cho-Jui Hsieh\textsuperscript{1}}\\
  \textsuperscript{1}UCLA\quad
  \textsuperscript{2} MIT\quad
  \textsuperscript{3}NVIDIA \quad
  \textsuperscript{4}UIUC \quad
  \textsuperscript{5}University of Michigan\\
  \texttt{ huan@huan-zhang.com, chenhg@mit.edu, chaoweix@nvidia.com,}\\
  \texttt{lbo@illinois.edu,mingyan@umich.edu,boning@mtl.mit.edu,chohsieh@cs.ucla.edu}\\
  \vspace*{-0.3cm}\\
  \textsuperscript{*}Huan Zhang and Hongge Chen contributed equally. \\
}
\begin{document}
\maketitle
\begin{abstract}
A deep reinforcement learning (DRL) agent observes its states through observations, which may contain natural measurement errors or adversarial noises. Since the observations deviate from the true states, they can mislead the agent into making suboptimal actions. Several works have shown this vulnerability via adversarial attacks, but existing approaches on improving the robustness of DRL under this setting have limited success and lack for theoretical principles. We show that naively applying existing techniques on improving robustness for classification tasks, like adversarial training, is ineffective for many RL tasks. We propose the state-adversarial Markov decision process (SA-MDP) to study the fundamental properties of this problem, and develop a theoretically principled policy regularization which can be applied to a large family of DRL algorithms, including proximal policy optimization (PPO), deep deterministic policy gradient (DDPG) and deep Q networks (DQN), for both discrete and continuous action control problems. We significantly improve the robustness of PPO, DDPG and DQN agents under a suite of strong white box adversarial attacks, including new attacks of our own. Additionally, we find that a robust policy noticeably improves DRL performance even without an adversary in a number of environments. Our code is available at \textcolor{blue}{\url{https://github.com/chenhongge/StateAdvDRL}}.

% Additionally, our training procedure can produce provable certificates for the robustness of a Deep RL agent.
\end{abstract}

\setlength{\abovedisplayskip}{3.0pt plus 0.0pt minus 2.0pt}
\setlength{\belowdisplayskip}{3.0pt plus 0.0pt minus 2.0pt}
\setlength{\abovedisplayshortskip}{0.0pt plus 1.0pt}
\setlength{\belowdisplayshortskip}{2.0pt plus 0.0pt minus 1.0pt}
\setlength{\topsep}{2.0pt plus 0.0pt minus 0.0pt}

\titlespacing*{\subsection}{0pt}{2pt}{2pt}

\vspace{-0.25cm}
\section{Introduction}
\vspace{-0.6em}
% Deep Q-learning (DQN)~\citep{mnih2015human} achieves human-level performance on many Atari game environments. DQN uses a neural network functional approximator $f(s)$ to estimate the state-action value $Q(s,a)$: given an input frame $(s)$, the neural network predict each action's value. Many variants of DQN has been developed to further improve its performance, including prioritized experience replay, DoubleDQN to improve training stability, DuelingDQN, NoisyNet to allow better exploration
With deep neural networks (DNNs) as powerful function approximators, deep reinforcement learning (DRL) has achieved great success on many complex tasks~\citep{mnih2015human,lillicrap2015continuous,schulman2015trust,silver2016mastering,gu2016continuous} and even on some safety-critical applications (e.g., autonomous driving~\citep{voyage,sallab2017deep,pan2017virtual}).
%Among them, Deep Q Networks (DQN)~\citep{mnih2015human} and Deep Deterministic Policy Gradient algorithms (DDPG)~\citep{lillicrap2015continuous} have become the roots of many enhanced algorithms~\citep{wang2015dueling,hessel2018rainbow,lowe2017multi,fujimoto2018addressing,barth2018distributed}. 
Despite achieving super-human level performance on many tasks, the existence of adversarial examples~\citep{szegedy2013intriguing} in DNNs and many successful attacks to DRL~\citep{huang2017adversarial,behzadan2017vulnerability,lin2017tactics,pattanaik2018robust,xiao2019characterizing} motivate us to study robust DRL algorithms.

When an RL agent obtains its current state via observations, the observations may contain uncertainty that naturally originates from unavoidable sensor errors or equipment inaccuracy. A policy not robust to such uncertainty can lead to catastrophic failures (e.g., the navigation setting in  Figure~\ref{fig:example_uncertainty}). 
To ensure safety under the \emph{worst case} uncertainty, we consider the adversarial setting where the state observation is adversarially perturbed from $s$ to $\nu(s)$, yet the underlying true environment state $s$ is unchanged. 
This setting is aligned with many adversarial attacks on state observations (e.g., \citep{huang2017adversarial,lin2017tactics}) and
% To obtain a better theoretical understanding, we first consider a Markov Decision Process (MDP) under the existence of an optimal adversary for perturbing state observations (named SA-MDP). We derive the basic policy evaluation algorithm and discuss a few unusual properties of SA-MDP: an optimal policy may not exist for SA-MDP, however we can bound the loss of expected reward for a given MDP policy $\pi$. From the analysis of SA-MDP, we provide connections to DRL in both discrete and continuous action spaces. We derive theoretically principled robustness regularizers for DQN and DDPG which are related to an upper bound on performance loss.
cannot be characterized by existing tools such as partially observable Markov decision process (POMDP), because the conditional observation probabilities in POMDP cannot capture the adversarial (worst case) scenario. Studying the fundamental principles in this setting is crucial.

Before basic principles were developed, several early approaches~\citep{behzadan2017whatever,mandlekar2017adversarially,pattanaik2018robust} extended existing adversarial defenses for supervised learning,
e.g., adversarial training~\citep{kurakin2016adversarial,madry2017towards,zhang2019theoretically} to improve robustness
\begin{wrapfigure}[14]{r}{0.41\textwidth}
    \centering
    \includegraphics[width=\linewidth]{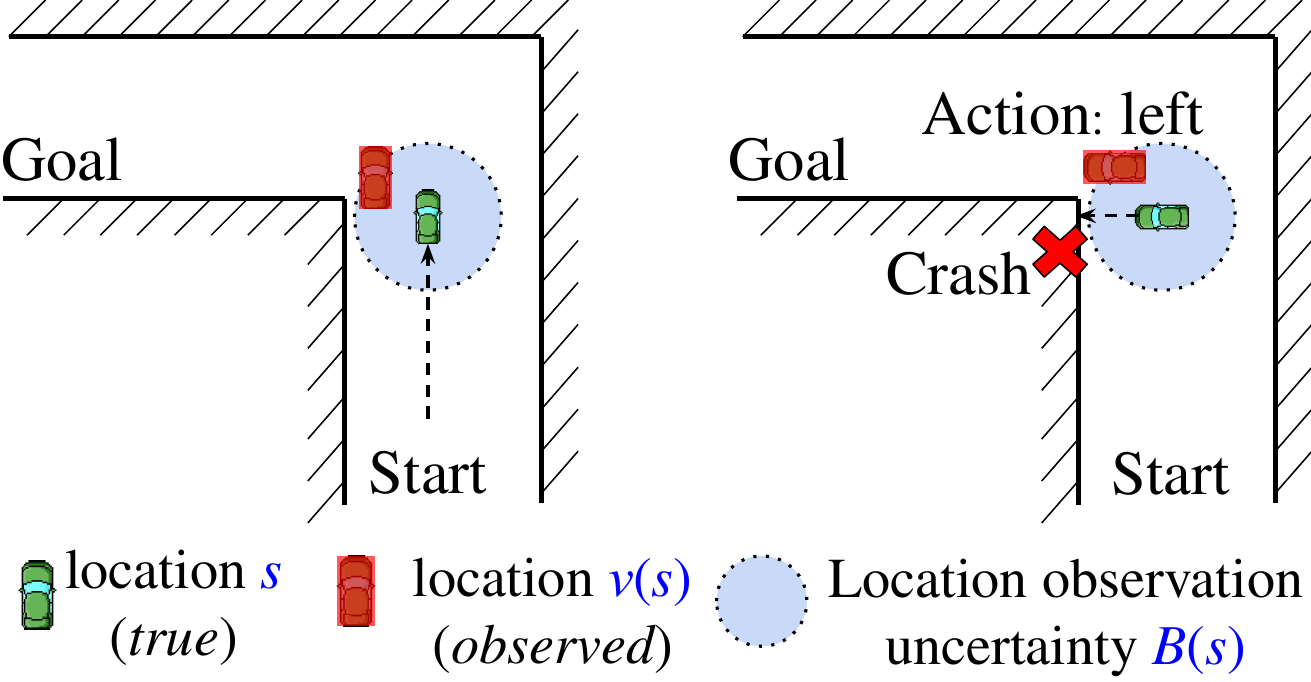}
    \vspace{-1.5em}
    \caption{A car observes its location through sensors (e.g., GPS) and plans its route to the goal. Without considering the uncertainty in observed location (e.g., error of GPS coordinates), an unsafe policy may crash into the wall because $s \neq \nu(s)$.
    %because the observed location and true location differ.
    }
    \label{fig:example_uncertainty}
\end{wrapfigure}
under this setting.
Specifically, we can attack the agent and generate trajectories adversarially during training time, and apply any existing DRL algorithm to hopefully obtain a robust policy. Unfortunately, we show that for most environments, naive adversarial training (e.g., putting adversarial states into the replay buffer) leads to unstable training and deteriorates agent performance~\citep{behzadan2017whatever,fischer2019online}, or does not significantly improve robustness under strong attacks. Since RL and supervised learning are quite different problems, naively applying techniques from supervised learning to RL without a proper theoretical justification can be unsuccessful.
% Additionally, DRL agents can be brittle even without any adversarial attacks -- an agent may fail occasionally but catastrophically during regular (non-adversarial) rollouts, and debugging these failure cases can be quite challenging~\citep{uesato2018rigorous}. In Figure~\ref{fig:clean_worst_case} we show a few DDPG agents achieving high median rewards. However, during 100 episodes (each episode is randomly initialized with a small noise added in OpenAI Gym) we observe occasional low reward runs. 
% In practical applications, such a small noise can be naturally prevalent and thus prohibits the use of DRL in safety critical domains like autonomous driving. The agents trained using our proposed robust policy optimization objective (SA-DDPG) can obtain significantly better worst case reward with much less variance. 
To summarize, we study the theory and practice of robust RL against perturbations on state observations:

\begin{itemize}[wide,itemsep=0pt]
    \item We formulate the perturbation on state observations as a modified Markov decision process (MDP), which we call state-adversarial MDP (SA-MDP), and study its fundamental properties. We show that under an optimal adversary, a stationary and Markovian optimal policy may not exist for SA-MDP.
    \item Based on our theory of SA-MDP, we propose a theoretically principled robust policy regularizer which is related to the total variation distance or KL-divergence on perturbed policies. It can be practically and efficiently applied to a wide range of RL algorithms, including PPO, DDPG and DQN.
    \item We conduct experiments on 10 environments ranging from Atari games with discrete actions to complex control tasks in continuous action space. Our proposed method significantly improves robustness under strong white-box attacks on state observations, including two \emph{strong} attacks we design, the robust Sarsa attack (RS attack) and maximal action difference attack (MAD attack). 
\end{itemize}

%Additionally, we demonstrate that we can achieve better performance than baselines under environment parameter changes (e.g., changes in mass and friction). 
% Moreover, since our method utilizes certifiable neural network training as a keystone, we can obtain certain robustness guarantees for the learned policy.

\setlength{\floatsep}{5pt}
\setlength{\textfloatsep}{5pt}

\vspace{-0.3cm}

\section{Related Work}
\vspace{-3pt}
\paragraph{Robust Reinforcement Learning}
Since each element of RL (observations, actions, transition dynamics and rewards) can contain uncertainty, robust RL has been studied from different perspectives. Robust Markov decision process (RMDP)~\citep{iyengar2005robust,nilim2004robustness} considers the worst case perturbation from transition probabilities, and has been extended to distributional settings~\citep{xu2010distributionally} and partially observed MDPs~\citep{osogami2015robust}. The agent observes the original true state from the environment and acts accordingly, but the environment can choose from a set of transition probabilities that minimizes rewards. 
Compared to our SA-MDP where the adversary changes only observations, in RMDP the ground-truth states are changed so RMDP is more suitable for modeling \emph{environment parameter changes} (e.g., changes in physical parameters like mass and length, etc). 
RMDP theory has inspired robust deep Q-learning~\citep{shashua2017deep} and policy gradient algorithms~\citep{mankowitz2018learning,derman2018soft,mankowitz2019robust} that are robust against small environmental changes.
%In contrast, we consider adversarial perturbations on state observations and does not change the environment.
% our state-adversarial MDP (SA-MDP) has properties different from MDP and RMDP.

Another line of works~\citep{pinto2017robust,li2019robust} consider the adversarial setting of multi-agent reinforcement learning~\citep{tan1993multi,bu2008comprehensive}. 
In the simplest two-player setting (referred to as minimax games~\citep{littman1994markov}), each agent chooses an action at each step, and the environment transits based on both actions. The regular $Q$ function $Q(s, a)$ can be extended to $Q(S, a, o)$ where $o$ is the opponent's action and Q-learning is still convergent. This setting can be extended to deep Q learning and policy gradient algorithms~\citep{li2019robust,pinto2017robust}. \citet{pinto2017robust} show that learning an opponent simultaneously can improve the agent's performance as well as its robustness against environment turbulence and test conditions (e.g., change in mass or friction). \citet{gu2019adversary} carried out real-world experiments on the two-player adversarial learning game. 
\citet{tessler2019action} considered adversarial perturbations on the action space. \citet{fu2017learning} investigated how to learn a robust reward.
All these settings are different from ours: we manipulate only the state observations but do not change the underlying environment (the true states) directly. 
\vspace{-4pt}
\paragraph{Adversarial Attacks on State Observations in DRL}
\citet{huang2017adversarial} evaluated the robustness of deep reinforcement learning policies through an FGSM based attack on Atari games with discrete actions. \citet{kos2017delving} proposed to use the value function to guide adversarial perturbation search. \citet{lin2017tactics} considered a more complicated case where the adversary is allowed to attack only a subset of time steps, and used a generative model to generate attack plans luring the agent to a designated target state. \citet{behzadan2017vulnerability} studied black-box attacks on DQNs with discrete actions via transferability of adversarial examples. \citet{pattanaik2018robust} further enhanced adversarial attacks to DRL with multi-step gradient descent and better engineered loss functions. They require a critic or $Q$ function to perform attacks. Typically, the critic learned during agent training is used. We find that using this critic can be sub-optimal or impractical in many cases, and propose our two \emph{critic-independent} and strong attacks (RS and MAD attacks) in Section~\ref{sec:attack}.
We refer the reader to recent surveys~\citep{xiao2019characterizing,ilahi2020challenges} for a taxonomy and a comprehensive list of adversarial attacks in DRL setting.

\vspace{-4pt}
\paragraph{Improving Robustness for State Observations in DRL}
For discrete action RL tasks, \citet{kos2017delving} first presented preliminary results of adversarial training on Pong (one of the simplest Atari environments) using weak FGSM attacks on pixel space. \citet{behzadan2017whatever} applied adversarial training to several Atari games with DQN, and found it challenging for the agent to adapt to the attacks during training time.
% one must attack only a portion of frames, and even then the agent performance still suffers under test time attacks. 
These early approaches achieved much worse results than ours: for Pong, \citet{behzadan2017whatever} can improve reward under attack from $-21$ (lowest) to $-5$, yet is still far away from the optimal reward ($+21$).
Recently, \citet{mirman2018distilled,fischer2019online} treat the {\em discrete action} outputs of DQN as labels, and apply existing certified defense for classification~\citep{mirman2018differentiable} to robustly predict actions using imitation learning. 
% DQN only has limited number of actions, and the $Q$ network is similar to classifying actions given states, thus allowing imitation learning. 
This approach outperforms~\citep{behzadan2017whatever}, but it is unclear how to apply it to environments with continuous action spaces. Compared to their approach, our SA-DQN does not use imitation learning and achieves better performance on most environments.

For continuous action RL tasks (e.g., MuJoCo environments in OpenAI Gym), \citet{mandlekar2017adversarially} used a weak FGSM based attack with policy gradient to adversarially train a few simple RL tasks. \citet{pattanaik2018robust} used stronger multi-step gradient based attacks; however, their evaluation focused on robustness against environmental changes rather than state perturbations. Unlike our work which first develops principles and then applies to different DRL algorithms, these works directly extend adversarial training in supervised learning to the DRL setting and do not reliably improve test time performance under strong attacks in Section~\ref{sec:exp}. A few concurrent works \citep{russo2019optimal,shen2020deep} consider a smoothness regularizer similar to ours: \citep{russo2019optimal} studied an attack setting to MDP similar to ours and proposed Lipschitz regularization, but it was applied to DQN with discrete actions only. % proposed Lipschitz regularized DQN policies with discrete actions under an attack setting to MDP similar to ours; 
\citep{shen2020deep} adopted virtual adversarial training also for the continuous-action settings but focused on improving generalization instead of robustness.
% \cite{shen2020deep} uses virtual adversarial training while our work considers adversarial training as well as certified robust training. (merged to one sentence)
In our paper, we provide theoretical justifications for our robustness regularizer from the perspective of constrained policy optimization~\citep{achiam2017constrained}, systematically 
apply our approach to multiple RL algorithms (PPO, DDPG and DQN), 
propose more effective adversarial attacks and conduct comprehensive empirical evaluations under a suit of strong adversaries.

Other related works include \citep{havens2018online}, which proposed a meta online learning procedure with a master agent detecting the presence of the adversary and switching between a few sub-policies, but did not discuss how to train a single agent robustly. \citep{chen2018gradient} applied adversarial training specifically for RL-based path-finding algorithms. \citep{lutjens2019certified} considered the worst-case scenario during rollouts for existing DQN agents to ensure safety, but it relies on an existing policy and does not include a training procedure. 

% Robust DRL for perturbations on state observations, especially for continuous action space tasks, still has many open challenges and existing approaches lack proper theoretical justifications.

% \label{sec:cert_def}

\vspace{-0.2cm}
\vspace{-0.2cm}
\section{Methodology}
\vspace{-0.9em}
\subsection{State-Adversarial Markov Decision Process (SA-MDP)}

\paragraph{Notations} A Markov decision process (MDP) is defined as $(\mathcal{S}, \mathcal{A}, R,  p, \gamma)$, where $\mathcal{S}$ is the state space, $\mathcal{A}$ is the action space, $R: \mathcal{S} \times \mathcal{A} \times \mathcal{S} \rightarrow \sR$ is the reward function, and $p: \mathcal{S} \times \mathcal{A} \rightarrow  \mathcal{P}(\mathcal{S})$ is the transition probability of environment, where $\mathcal{P}(\mathcal{S})$ defines the set of all possible probability measures on $\mathcal{S}$. The transition probability $p(s' | s, a)\!=\!\mathrm{Pr}(s_{t+1}\!=\!s' | s_t\!=\!s, a_t\!=\!a)$, where $t$ is the time step. We denote a stationary policy as $\pi: \mathcal{S} \rightarrow \mathcal{P}(\mathcal{A})$, the set of all stochastic and Markovian policies as $\Pi_\text{MR}$, the set of all deterministic and Markovian policies as $\Pi_\text{MD}$. Discount factor $0 < \gamma < 1$.

\begin{wrapfigure}[12]{r}{0.4\textwidth}
    \centering
    \vspace{-1.45em}
    \includegraphics[width= \linewidth]{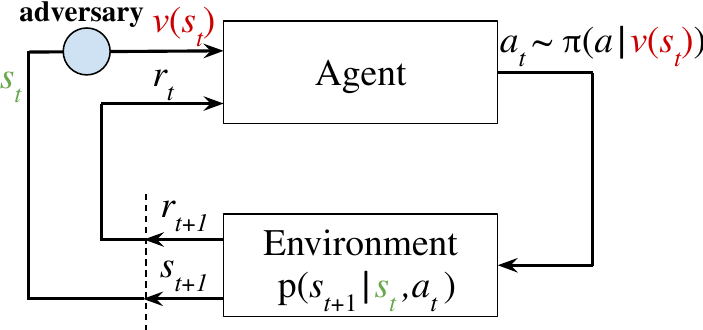}
    \caption{Reinforcement learning with perturbed state observations. The agent observes a perturbed state $\nu(s_t)$ rather than the true environment state $s_t$.}
    \label{fig:state_adversarial_rl}
\end{wrapfigure}
In state-adversarial MDP (SA-MDP), we introduce an adversary $\nu(s): \mathcal{S} \rightarrow \mathcal{S}$~\footnote{Our analysis also holds for a stochastic adversary. The optimal adversary is deterministic (see Lemma \ref{lemma:equivalence_optimal}).}. The adversary perturbs only the state observations of the agent, such that the action is taken as $\pi(a|\nu(s))$; the environment still transits from the true state $s$ rather than $\nu(s)$ to the next state. Since $\nu(s)$ can be different from $s$, the agent's action from $\pi(a|\nu(s))$ may be sub-optimal, and thus the adversary is able to reduce the reward. In  real world RL problems, the adversary can be reflected as the worst case noise in measurement or state estimation uncertainty. Note that this scenario is different from the two-player Markov game~\citep{littman1994markov} where both players see unperturbed true environment states and interact with the environment directly; the opponent's action can change the true state of the game.

To allow a formal analysis, we first make the assumption for the adversary $\nu$:
\begin{assumption}[Stationary, Deterministic and Markovian Adversary]
\label{assumpt:stationary_markovian}
$\nu(s)$ is a deterministic function $\nu: \mathcal{S} \rightarrow \mathcal{S}$ which only depends on the current state $s$, and $\nu$ does not change over time.
\end{assumption}
This assumption holds for many adversarial attacks~\citep{huang2017adversarial,lin2017tactics,kos2017delving,pattanaik2018robust}. These attacks only depend on the current state input and the policy or Q network so they are Markovian; the network parameters are frozen at test time, so given the same $s$ the adversary will generate the same (stationary) perturbation. We leave the formal analysis of non-Markovian, non-stationary adversaries as future work.
% which satisfies Assumption~\ref{assumpt:stationary_markovian}. Also, we allow the adversary to perturb the observation at every step.

If the adversary can perturb a state $s$ arbitrarily without bounds, the problem can become trivial. To fit our analysis to the most realistic settings, we need to restrict the power of an adversary. We define perturbation set $B(s)$, to restrict the adversary to perturb a state $s$ only to a predefined set of states:
\begin{definition}[Adversary Perturbation Set]
\label{assumpt:bounded_power} We define a set $B(s)$ which contains all allowed perturbations of the adversary. Formally, $\nu(s) \in B(s)$ where $B(s)$ is a set of states and $s \in \mathcal{S}$.
\end{definition}
$B(s)$ is usually a set of task-specific ``neighboring'' states of $s$ (e.g., bounded sensor measurement errors), which makes the observation still meaningful (yet not accurate) even with perturbations. After defining $B$, an SA-MDP can be represented as a 6-tuple $(\mathcal{S}, \mathcal{A}, B, R,  p, \gamma)$.

% \paragraph{Analysis of SA-MDP}
\paragraph{Analysis of SA-MDP} We first derive Bellman Equations and a basic policy evaluation procedure, then we discuss the possibility of obtaining an optimal policy for SA-MDP.
The adversarial value and action-value functions under $\nu$ in an SA-MDP are similar to those of a regular MDP:
\begin{align*}
\tilde{V}_{\pi\circ\nu}(s) = \E_{\pi\circ\nu} \left [ \sum_{k=0}^{\infty} \gamma^k r_{t+k+1} | s_t = s \right ], \quad
\tilde{Q}_{\pi\circ\nu}(s,a) = \E_{\pi\circ\nu} \left [ \sum_{k=0}^{\infty} \gamma^k r_{t+k+1} | s_t = s, a_t=a \right ],
\end{align*}
where the reward at step-$t$ is defined as $r_t$ and $\pi\circ\nu$ denotes the policy under observation perturbations: $\pi(a|\nu(s))$. Based on these two definitions, we first consider the simplest case with \emph{fixed} $\pi$ and $\nu$:
\newreptext{theorem_fixed_pi_nu}{
\begin{theorem}[Bellman equations for fixed $\pi$ and $\nu$]
Given $\pi: \mathcal{S} \rightarrow \mathcal{P}(\mathcal{A})$ and $\nu: \mathcal{S} \rightarrow \mathcal{S}$, we have
\begin{align*}
\tilde{V}_{\pi\circ\nu}(s) &= \sum_{a \in \mathcal{A}} \pi(a|\nu(s)) \sum_{s' \in \mathcal{S}} p(s'|s,a) \left [R(s, a, s') + \gamma \tilde{V}_{\pi\circ\nu}(s^\prime) \right ]\\
\tilde{Q}_{\pi\circ\nu}(s,a) &= \sum_{s' \in \mathcal{S}} p(s'|s,a) \left [R(s, a, s') + \gamma \sum_{a' \in \mathcal{A}} \pi(a'|\nu(s')) \tilde{Q}_{\pi\circ\nu}(s', a') \right ].
\end{align*}
\vspace{-1em}
\label{thm:bellman_nu}
\end{theorem}
}
\reptext{theorem_fixed_pi_nu}
The proof of Theorem~\ref{thm:bellman_nu} is simple, as when $\pi, \nu$ are fixed, they can be ``merged'' as a single policy, and existing results from MDP can be directly applied. Now we consider a more complicated case, where we want to find the value functions under \emph{optimal adversary} $\nu^*(\pi)$, minimizing the total expected reward for a \emph{fixed} $\pi$. The optimal adversarial value and action-value functions are defined as:
\begin{align*}
\tilde{V}_{\pi \circ \nu^*}(s) = \min_{\nu} \tilde{V}_{\pi\circ\nu}(s), \quad
\tilde{Q}_{\pi \circ \nu^*}(s,a) = \min_{\nu} \tilde{Q}_{\pi\circ\nu}(s,a).
\end{align*}
\newreptext{theorem_bellman_optimal}{
\begin{theorem}[Bellman contraction for optimal adversary]
\label{thm:optimal_adversary}
% We can derive Bellman equations for $\tilde{V}^\pi(s)$ and $\tilde{Q}^\pi(s,a)$:
%\begin{align*}
%\tilde{V}^\pi(s) &= \min_{\nu} \sum_{a \in \mathcal{A}} \pi(a|\nu(s)) \sum_{s' \in \mathcal{S}} p(s'|s,a) \left [R(s, a, s') + \gamma \tilde{V}^\pi(s') \right ]\\
%\tilde{Q}^\pi(s,a) &= \sum_{s' \in \mathcal{S}} p(s'|s,a) \left [R(s, a, s') + \gamma \min_{\nu} \sum_{a' \in \mathcal{A}} \pi(a'|\nu(s')) \tilde{Q}^\pi(s', a') \right ]
%\end{align*}
Define Bellman operator $\mathscr{L}: \sR^{|\mathcal{S}|} \rightarrow \sR^{|\mathcal{S}|}$,
\begin{align}
(\mathscr{L_pi}\tilde{V})(s)=\min_{s_\nu \in B(s)}\sum_{a \in \mathcal{A}} \pi(a|s_\nu) \sum_{s' \in \mathcal{S}} p(s'|s,a) \left [R(s, a, s') + \gamma \tilde{V}(s') \right ].
\end{align}
\normalsize
The Bellman equation for optimal adversary $\nu^*$ can then be written as:
$\tilde{V}_{\pi \circ \nu^*}=\mathscr{L_pi}\tilde{V}_{\pi \circ \nu^*}$.
Additionally, $\mathscr{L}$ is a contraction that converges to $\tilde{V}_{\pi \circ \nu^*}$.
\end{theorem}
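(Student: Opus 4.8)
The plan is to prove the two claims separately: first that $\mathscr{L}_\pi$ is a $\gamma$-contraction in the sup-norm (so Banach's fixed-point theorem guarantees a unique fixed point to which repeated application of $\mathscr{L}_\pi$ converges), and then that this unique fixed point is exactly $\tilde{V}_{\pi\circ\nu^*}$. For the contraction, I would fix a perturbation $s_\nu \in B(s)$ and abbreviate the inner expression as $h(s,s_\nu,\tilde{V}) = \sum_{a}\pi(a|s_\nu)\sum_{s'}p(s'|s,a)[R(s,a,s')+\gamma\tilde{V}(s')]$, so that $(\mathscr{L}_\pi\tilde{V})(s)=\min_{s_\nu\in B(s)} h(s,s_\nu,\tilde{V})$. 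For any two value functions $U,V$, subtracting gives $h(s,s_\nu,U)-h(s,s_\nu,V)=\gamma\sum_a\pi(a|s_\nu)\sum_{s'}p(s'|s,a)(U(s')-V(s'))$, whose absolute value is at most $\gamma\|U-V\|_\infty$ because the nonnegative weights $\pi(a|s_\nu)p(s'|s,a)$ sum to one. Combined with the elementary inequality $|\min_x f(x)-\min_x g(x)|\le\max_x|f(x)-g(x)|$ applied at each $s$, this yields $\|\mathscr{L}_\pi U-\mathscr{L}_\pi V\|_\infty\le\gamma\|U-V\|_\infty$, a contraction with modulus $\gamma<1$; uniqueness of the fixed point and convergence of the iterates follow immediately.

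For identifying the fixed point, the clean route is to observe that, for a \emph{fixed} policy $\pi$, the optimal adversary is itself solving an ordinary MDP in which its ``action'' at true state $s$ is the choice of perturbation $s_\nu\in B(s)$. This induced MDP has the same state space $\mathcal{S}$, transition kernel $\bar{p}(s'|s,s_\nu)=\sum_a\pi(a|s_\nu)p(s'|s,a)$ and expected reward $\bar{R}(s,s_\nu)=\sum_a\pi(a|s_\nu)\sum_{s'}p(s'|s,a)R(s,a,s')$, with the adversary \emph{minimizing} the discounted return. Under this identification, $\mathscr{L}_\pi$ is precisely the (minimizing) Bellman optimality operator of the induced MDP, and $\tilde{V}_{\pi\circ\nu^*}$, defined as $\min_\nu\tilde{V}_{\pi\circ\nu}$, is its optimal value function. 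Standard MDP theory then supplies both the optimality equation $\tilde{V}_{\pi\circ\nu^*}=\mathscr{L}_\pi\tilde{V}_{\pi\circ\nu^*}$ and the existence of a stationary deterministic optimal adversary, so that minimizing over the restricted adversary class of Assumption~\ref{assumpt:stationary_markovian} loses nothing. Since the contraction already established a unique fixed point, $\tilde{V}_{\pi\circ\nu^*}$ must be that fixed point, which is exactly what value iteration converges to.

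The step I expect to be the main obstacle is this fixed-point identification, specifically justifying Bellman's principle of optimality for the adversary: that the optimal continuation value from any successor state $s'$ is again $\tilde{V}_{\pi\circ\nu^*}(s')$, so the one-step recursion closes. Making the induced-MDP reduction explicit — with the asymmetry that $p$ and $R$ are evaluated at the \emph{true} state $s$ while $\pi$ is evaluated at the \emph{perturbed} state $s_\nu$ — is what turns this into a routine invocation of classical results; one must also confirm that restricting to stationary, deterministic, Markovian adversaries does not change the infimum, which follows from the existence of an optimal stationary deterministic policy in the induced MDP. Alternatively, one can avoid explicit MDP machinery and argue the equation directly: the $\ge$ inequality by plugging the optimal $\nu^*$ into the one-step expansion, and the $\le$ inequality by constructing, from any minimizing first-step perturbation together with the optimal tail adversary, an admissible adversary attaining the right-hand side.
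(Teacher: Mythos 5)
Your proposal is correct and takes essentially the same route as the paper: the paper's Lemma~\ref{lemma:equivalence_optimal} likewise recasts the fixed-$\pi$ adversary problem as an ordinary MDP whose policy is the adversary (there with negated rewards and a large penalty constant $C$ to enforce $\hat{a} \in B(s)$, where you more cleanly use state-dependent action sets $B(s)$ and a minimizing optimality operator), and the paper's contraction argument uses the identical inequality $\min_{x_1} f(x_1) - \min_{x_2} g(x_2) \leq \max_x \bigl( f(x) - g(x) \bigr)$ together with the Banach fixed-point theorem, just as you do. Your deviations (direct constrained action sets instead of the reward-penalty device, expected reward $\bar{R}(s,s_\nu)$ instead of the paper's conditional-reward marginalization) are presentational simplifications rather than a genuinely different method.
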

}
\reptext{theorem_bellman_optimal}
\begin{wrapfigure}[13]{r}{0.29\textwidth}
    \vspace{-18pt}
    \centering
    \includegraphics[width=1.0\linewidth]{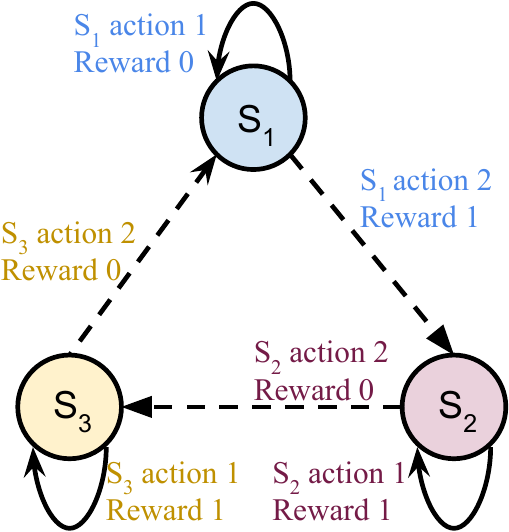}
    \caption{A toy environment.}
    \label{fig:mdp_graph_main}
\end{wrapfigure}
Theorem~\ref{thm:optimal_adversary} says that given a \emph{fixed} policy $\pi$, we can evaluate its performance (value functions) under the optimal (strongest) adversary, through a Bellman contraction. It is functionally similar to the ``policy evaluation'' procedure in regular MDP. 
The proof of Theorem~\ref{thm:optimal_adversary} is in the same spirit as the proof of Bellman optimality equations for solving the optimal policy for an MDP; the important difference here is that we solve the optimal adversary, for a \emph{fixed} policy $\pi$.
% Based on Theorem~\ref{thm:optimal_adversary} we have a policy evaluation algorithm for SA-MDP (Algorithm~\ref{alg:policy_nu_evaluation} in Appendix~\ref{sec:proofs}), that computes $V_{\pi \circ \nu^*}(s)$ for each $s \in \mathcal{S}$. 
Given $\pi$, value functions for MDP and SA-MDP can be vastly different. Here we show a 3-state toy environment in Figure~\ref{fig:mdp_graph_main}; an optimal MDP policy is to take action 2 in $S_1$, action 1 in $S_2$ and $S_3$. 
Under the presence of an adversary $\nu(S_1)=S_2$, $\nu(S_2)=S_1$, $\nu(S_3)=S_1$, 
this policy receives 
zero total reward as the adversary can make the action $\pi(a|\nu(s))$ totally wrong regardless of the states. 
On the other hand, a policy taking random actions on all three states (which is a non-optimal policy for MDP) is unaffected by the adversary and obtains non-zero rewards in SA-MDP. Details are given in Appendix~\ref{sec:example_mdp}.

Finally, we discuss our ultimate quest of finding an \emph{optimal} policy $\pi^*$ under the strongest adversary $\nu^* (\pi)$ in the SA-MDP setting (we use the notation $\nu^*(\pi)$ to explicit indicate that $\nu^*$ is the optimal adversary for a given $\pi$). An optimal policy should be the best among all policies on every state:
\begin{align}
\label{eq:optimal_v}
\tilde{V}_{{\pi^*} \circ \nu^*(\pi^*)}(s) \geq \tilde{V}_{\pi \circ \nu^*(\pi)}(s)\quad \text{for }\forall s\in \mathcal{S}\text{ and }\forall\pi, 
\end{align}
where both $\pi$ and $\nu$ are not fixed. The first question is, what policy classes we need to consider for $\pi^*$. In MDPs, deterministic policies are sufficient. We show that this does not hold anymore in SA-MDP:

\newreptext{theorem_stochastic_better}{
\begin{theorem}
\label{eq:stochastic_can_be_better}
There exists an SA-MDP and some stochastic policy $\pi \in \Pi_\text{MR}$ such that we cannot find a better deterministic policy $\pi' \in \Pi_\text{MD}$ satisfying $\tilde{V}_{\pi' \circ \nu^*(\pi')}(s) \geq \tilde{V}_{\pi \circ \nu^*(\pi)}(s)$ for all $s \in \mathcal{S}$.
\end{theorem}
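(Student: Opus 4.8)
The statement is an existence claim, so the plan is to exhibit one concrete finite SA-MDP together with a single stochastic policy that no deterministic policy can dominate on every state. The construction I would use encodes a ``coordination/hedging'' phenomenon: two states that the adversary can freely swap in observation space, paired with a reward that pays off only when the action matches the true (hidden) state. Concretely, take $\mathcal{S} = \{s_1, s_2, s_\star\}$ with $s_\star$ absorbing, $\mathcal{A} = \{a_1, a_2\}$, any fixed $0 < \gamma < 1$, and reward $R(s_i, a_i, \cdot) = 1$ for $i \in \{1,2\}$ and $0$ otherwise; from either $s_1$ or $s_2$ every action transitions to $s_\star$, and $s_\star$ self-loops with zero reward. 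The crucial ingredient is the perturbation set $B(s_1) = B(s_2) = \{s_1, s_2\}$ (with $B(s_\star) = \{s_\star\}$): at both $s_1$ and $s_2$ the adversary may present either observation, hence can read off the agent's action for each candidate observation and select the one that yields the smaller value.

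Next I would exhibit the stochastic policy and evaluate it. Take $\pi$ uniform, $\pi(a_1 \mid s) = \pi(a_2 \mid s) = \tfrac12$ for every $s$. Because $\pi$ produces the same action distribution regardless of the observation, the composed policy $\pi \circ \nu$ is independent of $\nu$; invoking the Bellman contraction of Theorem~\ref{thm:optimal_adversary} (or simply reading off the one-step value, since $s_\star$ contributes nothing) gives $\tilde{V}_{\pi \circ \nu^*(\pi)}(s_1) = \tilde{V}_{\pi \circ \nu^*(\pi)}(s_2) = \tfrac12$ and value $0$ at $s_\star$.

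Then I would enumerate the finitely many deterministic policies. A policy $\pi' \in \Pi_\text{MD}$ is fixed by the pair $(\pi'(s_1), \pi'(s_2)) \in \{a_1, a_2\}^2$, giving four cases, each evaluated through its own operator $\mathscr{L}_{\pi'}$ from Theorem~\ref{thm:optimal_adversary}. Since $B(s_1) = B(s_2)$, at true state $s_1$ the adversary may display $s_2$ (triggering $\pi'(s_2)$) and symmetrically at $s_2$, so the adversarial value is the $\min$ over the two assigned actions. One checks that the ``matched'' assignment $(a_1, a_2)$ and the ``swapped'' assignment $(a_2, a_1)$ both collapse to value $0$ on both states, while each constant assignment $(a_1, a_1)$ or $(a_2, a_2)$ gives value $1$ on one state and $0$ on the other. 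Thus every deterministic policy has value $0$ on at least one of $s_1, s_2$, strictly below $\tfrac12$ there, so none can satisfy $\tilde{V}_{\pi' \circ \nu^*(\pi')}(s) \ge \tilde{V}_{\pi \circ \nu^*(\pi)}(s)$ for all $s \in \mathcal{S}$ (the constraint at $s_\star$ being vacuous, as every policy scores $0$ there).

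The hard part is not the verification but the design: I must arrange the rewards and perturbation sets so that the optimal adversary can punish \emph{every} deterministic commitment on some state, which is precisely what the mutual swappability $B(s_1) = B(s_2)$ together with the diagonal reward achieves. The one point demanding care in the write-up is that each deterministic policy must be scored under its \emph{own} optimal adversary $\nu^*(\pi')$, which I would justify rigorously by appealing to the contraction $\mathscr{L}_{\pi'}$ of Theorem~\ref{thm:optimal_adversary} rather than by an ad hoc one-step argument, and by noting that the absorbing state $s_\star$ keeps all value functions well-defined for $0 < \gamma < 1$.
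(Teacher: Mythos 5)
Your proposal is correct, and it follows the same overall strategy as the paper: prove the existence claim by exhibiting a concrete SA-MDP, showing a uniform stochastic policy is immune to the adversary (because its action distribution is observation-independent), and then exhaustively checking that every deterministic policy is punished on at least one state. The difference is in the instance. The paper reuses its recurrent 3-state toy environment from Appendix A, where the uniform policy $p_{11}=p_{21}=p_{31}=0.5$ achieves discounted value $50$ at every state while the eight deterministic policies achieve either $0$ everywhere, $(0,100,100)$, or roughly $33$ at each state --- so each fails the domination condition at some state; that example does double duty, since the same environment is used to prove the non-existence of an optimal policy (Theorem 4). Your construction is more surgical: the absorbing zero-reward state reduces all value computations to a single-step ``matching'' payoff, making the hedging phenomenon and the $\min$ over swapped observations transparent, at the cost of the example not being reusable for the stronger non-existence result. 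Your case analysis is accurate (both the matched and swapped assignments collapse to $(0,0)$, each constant assignment to $(1,0)$ or $(0,1)$, all below the stochastic policy's $\tfrac12$), and your appeal to the contraction of Theorem 2 to justify evaluating each $\pi'$ under its own optimal adversary mirrors the paper's reliance on that machinery. One pedantic note: a policy in $\Pi_\text{MD}$ also assigns an action at $s_\star$, so there are formally eight deterministic policies rather than four, but the choice at $s_\star$ is value-irrelevant since $s_\star$ is absorbing with zero reward, so your enumeration by the pair $\left(\pi'(s_1), \pi'(s_2)\right)$ loses nothing.
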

}
\reptext{theorem_stochastic_better}
The proof is done by constructing a counterexample where some stochastic policies are better than \emph{any} other deterministic policies in SA-MDP (see Appendix~\ref{sec:example_mdp}). Contrarily, in MDP, for any stochastic policy we can find a deterministic policy that is at least as good as the stochastic one. 
%This conclusion also aligns with many works using randomization to defend against adversarial examples -- randomization can help under strong adversaries.
Unfortunately, even looking for both deterministic and stochastic policies still cannot always find an optimal one:
% we show the following surprising negative results in Theorem~\ref{eq:stochastic_can_be_better} and Theorem~\ref{eq:non_exist_optimal}:

\newreptext{theorem_non_optimal}{
\begin{theorem}
\label{eq:non_exist_optimal}
Under the optimal $\nu^*$, an optimal policy $\pi^* \in \Pi_\text{MR}$ does not always exist for SA-MDP.
\end{theorem}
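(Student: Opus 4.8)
The plan is to establish non-existence by a counterexample, exactly as one refutes a universally quantified claim: I would exhibit a concrete SA-MDP together with two distinct states $s_1, s_2 \in \mathcal{S}$ for which the pointwise-best achievable values are realized only by conflicting policies. Concretely, for each state define the best attainable adversarial value $\bar{V}(s) := \sup_{\pi \in \Pi_\text{MR}} \tilde{V}_{\pi \circ \nu^*(\pi)}(s)$, where each inner quantity is well defined through the Bellman contraction of Theorem~\ref{thm:optimal_adversary}. If I can build an example in which (i) some policy $\pi_1$ attains $\tilde{V}_{\pi_1 \circ \nu^*(\pi_1)}(s_1) = \bar{V}(s_1)$ but has $\tilde{V}_{\pi_1 \circ \nu^*(\pi_1)}(s_2) < \bar{V}(s_2)$, and symmetrically (ii) some $\pi_2$ attains the best value at $s_2$ while being strictly suboptimal at $s_1$, then no policy can dominate all others on every state: any candidate $\pi^*$ satisfying \eqref{eq:optimal_v} would have to realize $\bar{V}$ simultaneously at both $s_1$ and $s_2$, which the incompatibility forbids. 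The theorem then follows by contradiction.

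For the construction I would extend the toy environment of Figure~\ref{fig:mdp_graph_main}, choosing the perturbation sets $B(\cdot)$ so that the optimal adversary can ``swap'' the agent's observation between states whose locally optimal actions disagree. The mechanism I want to engineer is a genuine trade-off: because the adversary's best response $\nu^*(\pi)$ depends on $\pi$, steering the policy to resist the attack that hurts $s_1$ forces behavior at the states reachable from $s_1$ that simultaneously exposes the agent to a different attack hurting $s_2$. In the simplest such design, protecting $s_1$ and protecting $s_2$ demand incompatible action distributions on a shared confusable region, so the two objectives cannot both be met. This is the same flavor of obstruction already appearing in Theorem~\ref{eq:stochastic_can_be_better}, and I expect the counterexample here to be a refinement of that one, now arranged so that even after allowing all of $\Pi_\text{MR}$ the two states pull apart.

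To verify the example I would compute $\tilde{V}_{\pi \circ \nu^*(\pi)}$ for each relevant policy by solving the contraction $\mathscr{L}$ of Theorem~\ref{thm:optimal_adversary}, exploiting that for fixed $\pi$ the optimal adversary is the per-state minimization $\min_{s_\nu \in B(s)}$ inside $\mathscr{L}$; from these values I would read off $\bar{V}(s_1), \bar{V}(s_2)$ and certify the strict two-sided trade-off (i)--(ii). The main obstacle is the design-and-verification loop: since $\nu^*$ is itself a best response to $\pi$, the map $\pi \mapsto \tilde{V}_{\pi \circ \nu^*(\pi)}(\cdot)$ is nonlinear and non-monotone in $\pi$, so I cannot optimize each state's value independently — I must choose $\mathcal{S}, \mathcal{A}, B, R, p$ so that the adversary's induced coupling \emph{provably} yields a strict conflict at the two states, and then rule out \emph{all} of $\Pi_\text{MR}$ rather than a finite family. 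Making this coupling tight, and confirming that $\bar{V}(s_1)$ and $\bar{V}(s_2)$ are genuinely unattainable at once (not merely attained by distinct policies that tie elsewhere), is the delicate part.
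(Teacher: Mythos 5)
Your plan coincides with the paper's proof: the paper establishes non-existence by exactly this counterexample strategy, and in fact the unmodified $3$-state environment of Figure~\ref{fig:mdp_graph_main} (with $B(S_1)=B(S_2)=B(S_3)=\{S_1,S_2,S_3\}$, Appendix~\ref{sec:example_mdp}) already realizes the two-state trade-off you describe, with $\pi_1$ ($p_{11}=p_{21}=p_{31}=1$) attaining $\tilde{V}=100$ at $S_2,S_3$ but $0$ at $S_1$, while the uniform policy $\pi_2$ attains $50$ everywhere. The ``delicate part'' you flag --- ruling out all of $\Pi_\text{MR}$ rather than a finite family --- is resolved there by a forcing argument rather than by computing $\bar{V}$ exactly: any policy matching $\pi_1$'s value $100$ at $S_2,S_3$ must set $p_{21}=p_{31}=1$ (the only way to collect $+1$ every step) and then $p_{11}=1$ (else the adversary sets $\nu(S_2)=\nu(S_3)=S_1$ and degrades those values), so the only candidate dominating $\pi_1$ is $\pi_1$ itself, which $\pi_2$ strictly beats at $S_1$ ($50>0$).
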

}
\reptext{theorem_non_optimal}
The proof follows the same counterexample as in Theorem~\ref{eq:stochastic_can_be_better}. The optimal policy $\pi^*$ requires to have $\tilde{V}_{{\pi^*} \circ \nu^*(\pi^*)}(s) \geq \tilde{V}_{\pi \circ \nu^*(\pi)}(s)$ for all $s$ and any $\pi$. In an SA-MDP, sometimes we have to make a trade-off between the value of states and no policy can maximize the values of all states.
% We provide a toy example in Appendix~\ref{sec:example_mdp} to see this trade-off.
% This result can be analogous to the hardness of adversarial learning in general, and the trade off between clean and robust accuracy. 
% {\color{red}(Cho: I don't feel it's really similar to the hardness of adversarial learning. Instead, maybe we should say that this is a surprising result since the optimal policy exists in two-player adversarial learning game. This demonstrates that the state perturbation of our SA-MDP is fundamentally different from the adversarial two-player game. )}
% However, not all hopes are lost and we show that under certain assumptions, the loss in performance can be bounded:

Despite the difficulty of finding an optimal policy under the optimal adversary, we show that under certain assumptions, the loss in performance due to an optimal adversary can be bounded:
%\newreptext{theorem_policy_diff}{
\begin{theorem}
\label{thm:optimal_distance}
Given a policy $\pi$ for a non-adversarial MDP and its value function is $V_\pi(s)$. Under the optimal adversary $\nu$ in SA-MDP, for all $s \in \mathcal{S}$ we have
\begin{equation}
\label{eq:difference_tv}
\max_{s\in\mathcal{S}}\big\{V_\pi(s) - \tilde{V}_{\pi \circ \nu^* (\pi)}(s) \big\}\leq \alpha \max_{s\in\mathcal{S}}\max_{\hat{s} \in B(s)} \mathrm{D}_{\mathrm{TV}}(\pi(\cdot|s),\pi(\cdot|\hat{s}))
\end{equation}
where $\mathrm{D}_{\mathrm{TV}}(\pi(\cdot|s),\pi(\cdot|\hat{s}))$ is the total variation distance between $\pi(\cdot|s)$ and $\pi(\cdot|\hat{s})$, and $\alpha:=2[1+\frac{\gamma}{(1-\gamma)^2}]\max_{(s,a,s^\prime)\in \mathcal{S}\times\mathcal{A}\times\mathcal{S}}|R(s,a,s^\prime)|$ is a constant that does not depend on $\pi$.
%(detailed in Appendix). \textcolor{red}{We still need a proof for this!}
\end{theorem}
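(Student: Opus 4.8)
The plan is to reduce the claim to a bound on the gap between the value of $\pi$ in the clean MDP and its value once the optimal observation-adversary is folded into the policy, and then to control that gap with the Bellman contraction established in Theorem~\ref{thm:optimal_adversary}. Write $R_{\max}:=\max_{(s,a,s')}|R(s,a,s')|$ and $\beta:=\max_{s}\max_{\hat s\in B(s)}\mathrm{D}_{\mathrm{TV}}(\pi(\cdot|s),\pi(\cdot|\hat s))$, so the goal is $\max_s\{V_\pi(s)-\tilde V_{\pi\circ\nu^*(\pi)}(s)\}\le\alpha\beta$. First I would note that the identity perturbation $\hat s=s$ is always feasible (Definition~\ref{assumpt:bounded_power} gives $s\in B(s)$), so $\tilde V_{\pi\circ\nu^*}(s)=\min_\nu \tilde V_{\pi\circ\nu}(s)\le V_\pi(s)$ for every $s$; hence the quantity in question is nonnegative and equals $\|V_\pi-\tilde V_{\pi\circ\nu^*}\|_\infty$.

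Next I would exploit Theorem~\ref{thm:optimal_adversary}: $\tilde V_{\pi\circ\nu^*}$ is the unique fixed point of the $\gamma$-contraction $\mathscr{L}$. Writing $Q_\pi(s,a):=\sum_{s'}p(s'|s,a)[R(s,a,s')+\gamma V_\pi(s')]$, we have $V_\pi(s)=\sum_a\pi(a|s)Q_\pi(s,a)$, while $(\mathscr{L}V_\pi)(s)=\min_{s_\nu\in B(s)}\sum_a\pi(a|s_\nu)Q_\pi(s,a)$. Thus the one-step residual is $(V_\pi-\mathscr{L}V_\pi)(s)=\max_{s_\nu\in B(s)}\sum_a(\pi(a|s)-\pi(a|s_\nu))Q_\pi(s,a)\ge 0$. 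Standard contraction bookkeeping (telescoping $V_\pi-\tilde V_{\pi\circ\nu^*}=\sum_{n\ge0}(\mathscr{L}^nV_\pi-\mathscr{L}^{n+1}V_\pi)$ and using $\|\mathscr{L}^nV_\pi-\mathscr{L}^{n+1}V_\pi\|_\infty\le\gamma^n\|V_\pi-\mathscr{L}V_\pi\|_\infty$) then converts a bound on the residual into a bound on the full gap with a $\tfrac{1}{1-\gamma}$ factor.

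To bound the residual I would use the pairing inequality $|\sum_a(\pi(a|s)-\pi(a|s_\nu))f(a)|\le(\max_a f-\min_a f)\,\mathrm{D}_{\mathrm{TV}}(\pi(\cdot|s),\pi(\cdot|s_\nu))$ together with $\mathrm{D}_{\mathrm{TV}}\le\beta$, and split $Q_\pi=\bar R+\gamma\bar V_\pi$ with $\bar R(s,a)=\sum_{s'}p(s'|s,a)R(s,a,s')$ and $\bar V_\pi(s,a)=\sum_{s'}p(s'|s,a)V_\pi(s')$. The reward part has span $\le 2R_{\max}$, contributing $2R_{\max}\beta$, and the future part is controlled by $\|V_\pi\|_\infty\le R_{\max}/(1-\gamma)$, contributing $\tfrac{2\gamma R_{\max}}{1-\gamma}\beta$. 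These two pieces are exactly the ``$1$'' and the ``$\tfrac{\gamma}{(1-\gamma)^2}$'' appearing in $\alpha$, once the discounted future piece (and only that piece) is passed through the geometric $\tfrac{1}{1-\gamma}$ factor coming from the contraction step.

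The main obstacle is precisely this constant bookkeeping. The crudest accounting multiplies the whole residual bound $2R_{\max}\beta+\tfrac{2\gamma R_{\max}}{1-\gamma}\beta$ by $\tfrac{1}{1-\gamma}$, which over-amplifies the immediate-reward discrepancy and yields the looser constant $\tfrac{2R_{\max}}{(1-\gamma)^2}$ rather than the sharper $2[1+\tfrac{\gamma}{(1-\gamma)^2}]R_{\max}$. To recover the stated $\alpha$ I expect to track the immediate-reward discrepancy separately from the discrepancy caused by the adversary steering the agent into worse future states---e.g.\ via the performance-difference/visitation-distribution identity $V_\pi-V_{\pi'}=\tfrac{1}{1-\gamma}\,\E_{s\sim d^\pi}[\sum_a(\pi(a|s)-\pi'(a|s))Q_{\pi'}(s,a)]$ applied to the composed policy $\pi'=\pi\circ\nu^*$ (which, by Assumption~\ref{assumpt:stationary_markovian} and Theorem~\ref{thm:optimal_adversary}, is a legitimate stationary Markovian policy with per-state TV distance $\le\beta$ from $\pi$, where $d^\pi$ denotes the discounted state-visitation distribution under $\pi$)---so that the reward range enters additively with coefficient $1$ while only the future-value term carries the $\tfrac{\gamma}{(1-\gamma)^2}$ amplification. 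Getting these two contributions cleanly separated, rather than bounded jointly, is the delicate step.
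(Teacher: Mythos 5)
Your proposal is on the right track and its intended endgame is in fact the paper's route, but it stops short of the theorem: both of the arguments you actually sketch deliver only the weaker constant $\frac{2R_{\max}}{(1-\gamma)^2}$, not the stated $\alpha = 2\bigl[1+\frac{\gamma}{(1-\gamma)^2}\bigr]R_{\max}$. The parts that work: the composed policy $\pi'(\cdot|s) := \pi(\cdot|\nu^*(s))$ is indeed a legitimate stationary Markovian policy with per-state total variation distance at most $\beta := \max_s\max_{\hat s\in B(s)}\mathrm{D}_{\mathrm{TV}}(\pi(\cdot|s),\pi(\cdot|\hat s))$ from $\pi$ (the paper makes exactly this substitution in its last line), and your contraction/telescoping argument via the operator of Theorem~\ref{thm:optimal_adversary}, with the residual bound $\|V_\pi-\mathscr{L}V_\pi\|_\infty \le \bigl(2R_{\max}+\frac{2\gamma R_{\max}}{1-\gamma}\bigr)\beta$ and the geometric factor $\frac{1}{1-\gamma}$, is a correct proof of the bound with constant $\frac{2R_{\max}}{(1-\gamma)^2}$. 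You correctly diagnose that this is looser than $\alpha$. (The nonnegativity observation, incidentally, is neither needed nor formally available: Definition~\ref{assumpt:bounded_power} does not actually stipulate $s\in B(s)$.)

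The genuine gap is the step you defer as ``delicate,'' and the tool you name there cannot close it. In the exact performance-difference identity $V_\pi(s_0)-V_{\pi'}(s_0)=\frac{1}{1-\gamma}\E_{s\sim d^{\pi}_{s_0}}\bigl[\sum_a(\pi(a|s)-\pi'(a|s))Q_{\pi'}(s,a)\bigr]$, the prefactor $\frac{1}{1-\gamma}$ multiplies \emph{both} pieces of any split $Q_{\pi'}=\bar R+\gamma \bar V_{\pi'}$, so the immediate-reward discrepancy enters with coefficient $\frac{1}{1-\gamma}$, not $1$: you get $\frac{1}{1-\gamma}\bigl(2R_{\max}+\frac{2\gamma R_{\max}}{1-\gamma}\bigr)\beta = \frac{2R_{\max}}{(1-\gamma)^2}\beta$ again, i.e.\ no improvement over your contraction route. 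The asymmetry in $\alpha$ (coefficient $1$ on the reward term, $\frac{\gamma}{(1-\gamma)^2}$ on the correction) comes from a different mechanism, which the paper imports wholesale by quoting Theorem 1 of \citet{achiam2017constrained}: there the first-order term is an expectation of the reward-weighted policy difference under the \emph{normalized} visitation distribution $d^\pi_{s_0}$, hence bounded by $2R_{\max}\max_s\mathrm{D}_{\mathrm{TV}}$ with no $(1-\gamma)^{-1}$ amplification, while all discount amplification is concentrated in a separate correction term $\frac{2\gamma}{(1-\gamma)^2}\max_s\bigl\{\E_{a\sim\pi',s'}[R]\bigr\}\E_{s\sim d^\pi_{s_0}}[\mathrm{D}_{\mathrm{TV}}(\pi(\cdot|s),\pi'(\cdot|s))]$, whose derivation rests on a visitation-shift lemma bounding the discrepancy between $d^{\pi'}_{s_0}$ and $d^{\pi}_{s_0}$ in terms of $\frac{\gamma}{1-\gamma}$ times the expected per-state TV distance. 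That lemma is the missing idea in your proposal: without importing it (or reproving it), your plan terminates at the weaker constant, and the theorem's $\alpha$ is never reached.
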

%}
%\reptext{theorem_policy_diff}
Theorem~\ref{thm:optimal_distance} says that as long as differences between the action distributions under state perturbations (the term $\mathrm{D}_{\mathrm{TV}}(\pi(\cdot|s),\pi(\cdot|\hat{s}))$) are not too large, the performance gap between $\tilde{V}_{\pi \circ \nu^*}(s)$ (state value of SA-MDP) and $V_\pi(s)$ (state value of regular MDP) can be bounded. An important consequence is the motivation of regularizing $\mathrm{D}_{\mathrm{TV}}(\pi(\cdot|s),\pi(\cdot|\hat{s}))$ during training to obtain a policy robust to strong adversaries. The proof is based on tools developed in constrained policy optimization~\citep{achiam2017constrained}, which gives an upper bound on value functions given two policies with bounded divergence. In our case, we desire that a bounded state perturbation $\hat{s}$ produces bounded divergence between $\pi(\cdot|s)$ and $\pi(\cdot|\hat{s})$. 
% Detailed proofs will be provided in Section~\ref{sec:proofs}.

\begin{comment}
The proof is based on Theorem 1 in~\cite{achiam2017constrained}, which upper bounds $V^\pi(s_0) - \tilde{V}^{\pi^\prime}(s_0)$ of two arbitrary policies $\pi$ and $\pi^\prime$ at any state $s_0$ using two terms:
\begin{align}
\begin{split}
V^\pi(s_0)-V^{\pi^\prime}(s_0)&\leq-\mathbb{E}_{\substack{s\sim d^\pi_{s_0}\\a\sim\pi(\cdot|s)\\s'\sim p(\cdot|a,s)}}\Big[\big(\frac{\pi^\prime(a|s)}{\pi(a|s)}-1\big)R(s,a,s^\prime)\Big]\\&+\frac{2\gamma}{(1-\gamma)^2}\max_s\Big\{\mathbb{E}_{\substack{a\sim\pi^\prime(\cdot|s)\\s^\prime\sim p(\cdot|a,s)}}\big[R(s,a,s^\prime)\big]\Big\}\mathbb{E}_{s\sim d_{s_0}^\pi}\big[\mathrm{D}_{TV}(\pi(\cdot|s),\pi^\prime(\cdot|s))\big].
\end{split}
\end{align}
Both terms can be upper bounded by $\alpha \max_{s\in\mathcal{S}}\max_{\hat{s} \in B(s)} \mathrm{D}_{\mathrm{TV}}(\pi(\cdot|s),\pi^\prime(\cdot|s))$. Then we simply replace $\pi^\prime$ with $\pi\circ\nu^*$ and Theorem~\ref{thm:optimal_distance} follows. A detailed proof will be provided in Section~\ref{sec:proofs} in the Appendix. 
\end{comment}

% In existing reinforcement literature, policy regularization has also been used, however 

We now study a few practical DRL algorithms, including both deep Q-learning (DQN) for discrete actions and actor-critic based policy gradient methods (DDPG and PPO) for continuous actions.

\subsection{State-Adversarial DRL for Stochastic Policies: A Case Study on PPO}
\label{sec:sa_ppo}
We start with the most general case where the policy $\pi(a|s)$ is stochastic (e.g., in PPO~\cite{schulman2017proximal}). The total variation distance is not easy to compute for most distributions, so we upper bound it again by KL divergence: %$\mathrm{D}_{\mathrm{TV}}(\pi(a|s),\pi(a|\hat{s})) \leq (\mathrm{D}_{\mathrm{KL}}(\pi(a|s)\|\pi(a|\hat{s})))^2$. 
$\mathrm{D}_{\mathrm{TV}}(\pi(a|s),\pi(a|\hat{s})) \leq \sqrt{\frac{1}{2}\mathrm{D}_{\mathrm{KL}}(\pi(a|s)\|\pi(a|\hat{s}))}$. 
When Gaussian policies are used, we denote $\pi(a|s) \sim \mathcal{N}(\mu_s, \Sigma_s)$ and $\pi(a|\hat{s}) \sim \mathcal{N}(\mu_{\hat{s}}, \Sigma_{\hat{s}})$. The KL-divergence can be given as:
\begin{equation}
\label{eq:gaussian_kl_divergence}
    \mathrm{D}_{\mathrm{KL}}(\pi(a|s)\|\pi(a|\hat{s})) = \frac{1}{2} \left ( \log | \Sigma_{\hat{s}} \Sigma_s^{-1}| + \mathrm{tr}(\Sigma_{\hat{s}}^{-1} \Sigma_s) + (\mu_{\hat{s}} - \mu_s)^\top \Sigma_{\hat{s}}^{-1} (\mu_{\hat{s}} - \mu_s) - |\mathcal{A}| \right).
\end{equation}
Regularizing KL distance~\eqref{eq:gaussian_kl_divergence} for all $\hat{s} \in B(s)$ will lead to a smaller upper bound in~\eqref{eq:difference_tv}, which is directly related to agent performance under optimal adversary. In PPO, the mean terms $\mu_s$, $\mu_{\hat{s}}$ are produced by neural networks: $\mu_{\theta_\mu} (s)$ and $\mu_{\theta_\mu} (\hat{s})$, and we assume $\Sigma$ is a diagonal matrix independent of state $s$ ($\Sigma_{\hat{s}} = \Sigma_s = \Sigma$). Regularizing the above KL-divergence over all $s$ from sampled trajectories and all $\hat{s} \in B(s)$ leads to the following state-adversarial regularizer for PPO, ignoring constant terms:
\begin{equation}
\label{eq:ppo_regularizer}
    \mathcal{R_\text{PPO}(\theta_\mu)}\!=\!\frac{1}{2}\sum_s \max_{\hat{s} \in B(s)}\left (\mu_{\theta_\mu}(\hat{s}) - \mu_{\theta_\mu} (s) \right)^\top \Sigma^{-1} \left ( \mu_{\theta_\mu}(\hat{s})- \mu_{\theta_\mu} (s) \right ) := \frac{1}{2}\sum_s \max_{\hat{s} \in B(s)} \mathcal{R}_s(\hat{s},\theta_\mu).
\end{equation}
We replace $\max_{s\in \mathcal{S}}$ term in Theorem~\ref{thm:optimal_distance} with a more practical and optimizer-friendly summation over all states in sampled trajectory. A similar treatment was used in TRPO~\cite{schulman2015trust} which was also derived as a KL-based regularizer, albeit on $\theta_\mu$ space rather than on state space. However, minimizing~\eqref{eq:ppo_regularizer} is challenging as it is a minimax objective, and we also have $\nabla_{\hat{s}} \mathcal{R}(\hat{s}, \theta_\mu) \rvert_{\hat{s}=s}=0$ so using gradient descent directly cannot solve the inner maximization problem to a local maximum. Instead of using the more expensive second order methods, we propose two first order approaches to solve~\eqref{eq:ppo_regularizer}: convex relaxations of neural networks, and Stochastic Gradient Langevin Dynamics (SGLD). Here we focus on discussing convex relaxation based method, and we defer SGLD based solver to Section~\ref{sec:optimizing_sgld}.

% \paragraph{Solving the robust policy regularizer using convex relaxations.}
Convex relaxation of non-linear units in neural networks enables an efficient analysis of the outer bounds for a neural network~\citep{wong2018provable,zhang2018efficient,singh2018fast,dvijotham2018dual,weng2018towards,wang2018efficient,salman2019convex,singh2019abstract}. Several works have used it for certified adversarial defenses~\citep{wong2018scaling,mirman2018differentiable,wang2018mixtrain,gowal2018effectiveness,zhang2019towards}, but here we leverage it as a generic optimization tool for solving minimax functions involving neural networks. Using this technique, we can obtain an upper bound for $\mathcal{R}_s(\hat{s},\theta_\mu)$: $\overline{\mathcal{R}}_s(\theta_\mu) \geq \mathcal{R}_s(\hat{s},\theta_\mu)$ for all $\hat{s} \in B(s)$. $\overline{\mathcal{R}}_s(\theta_\mu)$ is also a function of $\theta_\mu$ and can be seen as a transformed neural network (e.g., the dual network in~\citet{wong2018provable}), and computing $\overline{\mathcal{R}}_s(\theta_\mu)$ is only a constant factor slower than computing $\mathcal{R}_s(s, \theta_\mu)$ (for a fixed $s$) when an efficient relaxation~\citep{mirman2018differentiable,gowal2018effectiveness,zhang2019towards} is used. We can then solve the following minimization problem:
\[
\min_{\theta_\mu} \frac{1}{2}\sum_s \overline{\mathcal{R}}_s(\theta_\mu) \geq \min_{\theta_\mu} \frac{1}{2}\sum_s \max_{\hat{s} \in B(s)} \mathcal{R}_s(\hat{s},\theta_\mu) = \min_{\theta_\mu} \mathcal{R_\text{PPO}(\theta_\mu)}.
\]
Since we minimize an \emph{upper bound} of the inner max, the original objective~\eqref{eq:ppo_regularizer} is guaranteed to be minimized. Using convex relaxations can also provide certain \emph{robustness certificates} for DRL as a bonus (e.g., we can guarantee an action has bounded changes under bounded perturbations), discussed in Appendix~\ref{sec:certificate}. We use \texttt{auto\_LiRPA}, a recently developed tool~\citep{xu2020automatic}, to give $\overline{\mathcal{R}}_s(\theta_\mu)$ efficiently and automatically. Once the inner maximization problem is solved, we can add $\mathcal{R}_\text{PPO}$ as part of the policy optimization objective, and solve PPO using stochastic gradient descent (SGD) as usual.
% We show our full SA-PPO algorithm in Appendix~\ref{sec:ppo_details}.

Although Eq~\eqref{eq:ppo_regularizer} looks similar to smoothness based regularizers in (semi-)supervised learning settings to avoid overfitting~\citep{miyato2015distributional} and improve robustness~\citep{zhang2019theoretically}, our regularizer is based on the foundations of SA-MDP. Our theory justifies the use of such a regularizer in reinforcement learning setting, while \citep{miyato2015distributional,zhang2019theoretically} are developed for quite different settings not related to reinforcement learning.

\subsection{State-Adversarial DRL for Deterministic Policies: A Case Study on DDPG}
\label{sec:sa_ddpg}

% Deep Deterministic Policy Gradient (DDPG) extends DQN to continuous control problems with an actor-critic structure. When the action space is continuous, it can be challenging to learn a probability distribution of actions given a state directly, so 
DDPG learns a deterministic policy $\pi(s): \mathcal{S} \rightarrow \mathcal{A}$, and in this situation, the total variation distance $\mathrm{D}_{TV}(\pi(\cdot|s),\pi(\cdot|\hat{s}))$ is malformed, as the densities at different states $s$ and $\hat{s}$ are very likely to be completely non-overlapping. To address this issue, we define a smoothed version of policy, $\bar{\pi}(a|s)$ in DDPG, where we add independent Gaussian noise with variance $\sigma^2$ to each action: $\bar{\pi}(a|s) \sim \mathcal{N}(\pi(s), \sigma^2 I_{|\mathcal{A}|})$.
%$\sigma$ reflects the strength of smoothing. 
Then we can compute $\mathrm{D}_{TV}(\bar{\pi}(\cdot|s),\bar{\pi}(\cdot|\hat{s}))$ using the following theorem:
\newreptext{theorem_policy_distance}{
\begin{theorem}
\label{thm:policy_distance}
$\mathrm{D}_{TV}(\bar{\pi}(\cdot|s), \bar{\pi}(\cdot|\hat{s})) = \sqrt{2/\pi}\frac{d}{\sigma} + O(d^3)$, where $d = \|\pi(s) - \pi(\hat{s})\|_2$.
\end{theorem}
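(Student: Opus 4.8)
The plan is to reduce the $|\mathcal{A}|$-dimensional computation to a one-dimensional one and then Taylor-expand an exact closed form. First I would exploit that $\bar{\pi}(\cdot|s)$ and $\bar{\pi}(\cdot|\hat{s})$ are both isotropic Gaussians $\mathcal{N}(\pi(s),\sigma^2 I)$ and $\mathcal{N}(\pi(\hat{s}),\sigma^2 I)$ that differ only in their means. Since the total variation distance is invariant under a common translation and rotation of both measures, I would choose coordinates so that $\pi(s)=0$ and $\pi(\hat{s})=d\,e_1$ with $d=\|\pi(s)-\pi(\hat{s})\|_2$. In these coordinates both densities factor as products over the $|\mathcal{A}|$ axes, and the one-dimensional marginals coincide on every axis except the first. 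Because the total variation distance between two product measures that share all but one factor equals the total variation distance between the differing factors, the problem collapses exactly to computing $\mathrm{D}_{TV}(\mathcal{N}(0,\sigma^2),\mathcal{N}(d,\sigma^2))$ on $\mathbb{R}$.

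Next I would evaluate this scalar distance in closed form. The likelihood ratio of the two one-dimensional Gaussians is monotone in $x$ and their densities cross exactly at $x=d/2$, so the set achieving the supremum in the variational characterization of $\mathrm{D}_{TV}$ is the half-line $\{x<d/2\}$. Evaluating the two cumulative distribution functions there gives the exact expression $2\Phi\!\left(\tfrac{d}{2\sigma}\right)-1$, up to the overall constant fixed by the chosen normalization of $\mathrm{D}_{TV}$, where $\Phi$ is the standard normal CDF. No approximation has entered at this stage; the reduction and the closed form are exact.

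Finally I would Taylor-expand around $d=0$. Writing $\Phi(t)=\tfrac12+\tfrac{1}{\sqrt{2\pi}}\int_0^t e^{-u^2/2}\,du=\tfrac12+\tfrac{1}{\sqrt{2\pi}}\bigl(t-\tfrac{t^3}{6}+O(t^5)\bigr)$ and substituting $t=d/(2\sigma)$ produces the leading term proportional to $d/\sigma$, matching the stated coefficient $\sqrt{2/\pi}$ under the relevant $\mathrm{D}_{TV}$ normalization, together with an $O(d^3)$ remainder. The structural point I would emphasize is \emph{why} the error is $O(d^3)$ rather than $O(d^2)$: the function $\Phi(t)-\tfrac12$ is odd, so all even-order terms in its expansion vanish, which is exactly what removes the quadratic correction.

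The analysis itself is routine once the setup is in place, so the real work is in the reduction step: justifying the exact passage to one dimension via the product-measure argument and the identification of the optimal discriminating half-space, and keeping the multiplicative constant consistent with the convention used for $\mathrm{D}_{TV}$. I expect the most error-prone part to be tracking that constant and confirming the cancellation of the $O(d^2)$ term through the oddness of $\Phi(\cdot)-\tfrac12$, rather than any genuine analytic difficulty.
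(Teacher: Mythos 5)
Your proposal is correct and takes essentially the same route as the paper's proof (Lemma~\ref{lemma:normal_distance}): reduce to a one-dimensional Gaussian pair along the mean-difference direction (the paper does this via symmetry about the perpendicular-bisector hyperplane, you via the exact product-measure factorization, which is the same computation), evaluate the exact value through $\Phi\left(\frac{d}{2\sigma}\right)$, and Taylor-expand, with the oddness of $\Phi(t)-\frac{1}{2}$ eliminating the $O(d^2)$ term. The one constant you hedge on is resolved by the paper's convention: it defines $\mathrm{D}_{TV}$ as the full $L^1$ distance $\int |f_1 - f_2|\,\mathrm{d}x$, i.e.\ $2\left(2\Phi\left(\frac{d}{2\sigma}\right)-1\right)$, which is twice your standard-normalization expression and yields the stated coefficient $\sqrt{2/\pi}$.
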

}
\reptext{theorem_policy_distance}
Thus, as long as we can penalize $\sqrt{2/\pi}\frac{d}{\sigma}$, the total variation distance between the two smoothed distributions can be bounded. In DDPG, we parameterize the policy as a policy network $\pi_{\theta_\pi}$. Based on Theorem~\ref{thm:optimal_distance}, the robust policy regularizer for DDPG is:
\begin{equation}
    \mathcal{R}_\text{DDPG}(\theta_\pi)=\sqrt{2/\pi}(1/\sigma) \sum_s \max_{\hat{s} \in B(s)} \| \pi_{\theta_\pi}(s) - \pi_{\theta_\pi}(\hat{s}) \|_2
\label{eq:ddpg_reg}
\end{equation}
for each state $s$ in a sampled batch of states, we need to solve a maximization problem, which can be done using SGLD or convex relaxations similarly as we have shown in Section~\ref{sec:sa_ppo}. Note that the smoothing procedure can be done completely at test time, and during training time our goal is to keep $\max_{\hat{s} \in B(s)} \| \pi_{\theta_\pi} (s) - \pi_{\theta_\pi} (\hat{s}) \|_2$ small. We show the full SA-DDPG algorithm in Appendix~\ref{sec:ddpg_details}.

\subsection{State-Adversarial DRL for Q Learning: A Case Study on DQN}\label{sec:rob_dqn}

The action space for DQN is finite, and the deterministic action is determined by the max $Q$ value: $\pi(a|s)=1$ when $a=\argmax_{a'}Q(s,a')$ and 0 otherwise. The total variation distance in this case is
\begin{align*}
    \mathrm{D}_{TV}(\pi(\cdot|s),\pi(\cdot|\hat{s}))=
    \begin{cases}
    0&\argmax_a\pi(a|s)=\argmax_a\pi(a|\hat{s})\\1&\text{otherwise}.
    \end{cases}
\end{align*}
Thus, we want to make the top-1 action stay unchanged after perturbation, and we can use a hinge-like robust policy regularizer, where $a^*(s)=\argmax_a Q_\theta(s,a)$ and $c$ is a small positive constant:
\begin{align}
\begin{gathered}
    \mathcal{R}_\text{DQN}(\theta):=\sum_s \max\{\max_{\hat{s}\in B(s)}\max_{a\neq a^*} Q_\theta(\hat{s},a)-Q_\theta(\hat{s},a^*(s)),-c\}.
\end{gathered}
\label{eq:dqn_loss}
\end{align}
The sum is over all $s$ in a sampled batch. Other loss functions (e.g., cross-entropy) are also possible as long as the aim is to keep the top-1 action to stay unchanged after perturbation. This setting is similar to the robustness of classification tasks, if we treat $a^*(s)$ as the ``correct'' label, thus many robust classification techniques can be applied as in~\citep{mirman2018distilled,fischer2019online}. The maximization can be solved using projected gradient descent (PGD) or convex relaxation of neural networks. Due to its similarity to classification, we defer the details on solving $\mathcal{R}_\text{DQN}(\theta)$ and full SA-DQN algorithm to Appendix~\ref{sec:dqn_details}. 

\subsection{Robust Sarsa (RS) and Maximal Action Difference (MAD) Attacks}

\label{sec:attack}
In this section we propose two strong adversarial attacks under Assumption~\ref{assumpt:stationary_markovian} for continuous action tasks trained using PPO or DDPG.
For this setting, \citet{pattanaik2018robust} and many follow-on works use the gradient of $Q(s,a)$ to provide the direction to update states adversarially in $K$ steps:
\begin{equation}
\label{eq:critic_attack}
\begin{gathered}
s^{k+1} = s^{k} - \eta \cdot\text{proj}\left[\nabla_{s^k} Q(s^0, \pi(s^{k}))\right], \quad k=0,\dots,K-1, \text{and define } \hat{s}:=s^{K}.
\end{gathered}
\end{equation}
Here $\text{proj}[\cdot]$ is a projection to $B(s)$, $\eta$ is the learning rate, and $s^0$ is the state under attack. It attempts to find a state $\hat{s}$ triggering an action $\pi(\hat{s})$ minimizing the action-value at state $s^0$. The formulation in~\citep{pattanaik2018robust} has a glitch that the gradient is evaluated as $\nabla_{s^k} Q(s^k, \pi(s^{k}))$ rather than $\nabla_{s^k} Q(s^0, \pi(s^{k}))$. We found that the corrected form~\eqref{eq:critic_attack} is more successful. If $Q$ is a perfect action-value function, $\hat{s}$ leads to the worst action that minimizes the value at $s^0$. However, this attack has a few drawbacks:
\begin{itemize}[label={},leftmargin=0pt,noitemsep,topsep=0pt,parsep=0pt,partopsep=0pt]
\item \textbullet \enskip Attack strength strongly depends on critic quality; if $Q$ is poorly learned, is not robust against small perturbations or has obfuscated gradients, the attack fails as no correct update direction is given.
\item \textbullet \enskip It relies on the $Q$ function which is specific to the training process, but not used during roll-out.
\item \textbullet \enskip Not applicable to many actor-critic methods (e.g., TRPO and PPO) using a learned value function $V(s)$ instead of $Q(s,a)$. Finding $\hat{s}\in B(s)$ minimizing $V(s)$ does not correctly reflect the setting of perturbing observations, as $V(\hat{s})$ represents the value of $\hat{s}$ rather than the value of taking $\pi(\hat{s})$ at $s^0$.
\end{itemize}

When we evaluate the robustness of a policy, we desire it to be independent of a specific critic network to avoid these problems. We thus propose two novel \emph{critic independent} attacks for DDPG and PPO.

\textbf{Robust Sarsa (RS) attack.}
Since $\pi$ is fixed during evaluation, we can learn its corresponding $Q^\pi(s,a)$ using on-policy temporal-difference (TD) algorithms similar to Sarsa~\citep{rummery1994line} without knowing the critic network used during training. Additionally, we find that the robustness of $Q^\pi(s, a)$ is very important; if $Q^\pi(s,a)$ is not robust against small perturbations (e.g., given a state $s_0$, a small change in $a$ will significantly reduce $Q^\pi(s_0,a)$ which does not reflect the true action-value), it cannot provide a good direction for attacks. Based on these, we learn $Q^\pi(s,a)$ (parameterized as an NN with parameters $\theta$) with a TD loss as in Sarsa and an additional robustness objective to minimize:
\begin{equation*}
% \begin{split}
% L_{RS}(\theta) &= \sum_{i \in [N]} \left [ r_i + \gamma Q^\pi_{RS}(s_i^\prime, a_i^\prime) - Q^\pi_{RS}(s_i, a_i) \right ]^2 \\
% &+ \lambda_{RS} \sum_{i \in [N]} \max_{\hat{a} \in B(a_i)} (Q^\pi_{RS}(s_i, \hat{a}) -  Q^\pi_{RS}(s_i, a_i))^2
% \end{split}
L_{RS}(\theta)\!=\!\sum_{i \in [N]} \left [ r_i + \gamma Q^\pi_{RS}(s_i^\prime, a_i^\prime) - Q^\pi_{RS}(s_i, a_i) \right ]^2
+ \lambda_{RS}\!\sum_{i \in [N]} \max_{\hat{a} \in B(a_i)} (Q^\pi_{RS}(s_i, \hat{a}) -  Q^\pi_{RS}(s_i, a_i))^2
\end{equation*}
$N$ is the batch size and each batch contains $N$ tuples of transitions $(s, a, r, s^\prime, a^\prime)$ sampled from agent rollouts.
% Since $\pi$ does not change over-time and SARSA is on-policy, we found that we can learn a reasonably good $Q^\pi$ pretty quickly.
The first summation is the TD-loss and the second summation is the robustness penalty with regularization $\lambda_{RS}$. $B(a_i)$ is a small set near action $a_i$ (e.g., a $\ell_\infty$ ball of norm 0.05 when action is normalized between 0 to 1).
The inner maximization can be solved using convex relaxation of neural networks as we have done in Section~\ref{sec:sa_ddpg}. Then, we use $Q^\pi_{\theta_{RS}}$ to perform critic-based attacks as in~\eqref{eq:critic_attack}. This attack sometimes significantly outperforms the attack using the critic trained along with the policy network, as its attack strength does not depend on the quality of an existing critic. We give the detailed procedure for RS attack and show the importance of the robust objective in appendix~\ref{sec:appendix_attack}.

\textbf{Maximal Action Difference (MAD) attack.} We propose another simple yet very effective attack which does not depend on a critic. Following our Theorem~\ref{thm:optimal_distance} and~\ref{thm:policy_distance}, we can find an adversarial state $\hat{s}$ by {\it maximizing} $D_\text{KL}\left (\pi(\cdot|s) \| \pi(\cdot|\hat{s})\right )$. For actions parameterized by Gaussian mean $\pi_{\theta_\pi}(s)$ and covariance matrix $\Sigma$ (independent of $s$), we minimize $L_\text{MAD}(\hat{s}):=-D_\text{KL}\left (\pi(\cdot|s) \| \pi(\cdot|\hat{s})\right )$ to find $\hat{s}$:
\iffalse
\begin{equation}
\begin{split}
\argmin_{\hat{s} \in B(s)}L_\text{MAD}(\hat{s}) &=
\argmax_{\hat{s} \in B(s)} D_\text{KL}(\pi(\cdot|s)\|\pi(\cdot|\hat{s})) \\
&= 
\argmax_{\hat{s} \in B(s)} \left ( \pi_{\theta_\pi}(s) - \pi_{\theta_\pi}(\hat{s}) \right )^\top \Sigma^{-1} \left ( \pi_{\theta_\pi}(s) - \pi_{\theta_\pi}(\hat{s}) \right )
\end{split}
\end{equation}
\fi
\begin{equation}
\argmin_{\hat{s} \in B(s)}L_\text{MAD}(\hat{s}) =
\argmax_{\hat{s} \in B(s)} \left ( \pi_{\theta_\pi}(s) - \pi_{\theta_\pi}(\hat{s}) \right )^\top \Sigma^{-1} \left ( \pi_{\theta_\pi}(s) - \pi_{\theta_\pi}(\hat{s}) \right).
\end{equation}
For DDPG we can simply set $\Sigma=I$ . The objective can be optimized using SGLD to find a good $\hat{s}$.
\vspace{-0.3cm}
\section{Experiments}
\label{sec:exp}
\vspace{-5pt}
In our experiments\footnote{Code and pretrained agents available at~\textcolor{blue}{\url{https://github.com/chenhongge/StateAdvDRL}}}, the set of adversarial states $B(s)$ is defined as an $\ell_\infty$ norm ball around $s$ with a radius $\epsilon$: $B(s):=\{\hat{s}:\|s-\hat{s}\|_\infty\leq\epsilon\}$. Here $\epsilon$ is also referred to as the perturbation budget. In MuJoCo environments, the $\ell_\infty$ norm is applied on normalized state representations.
\vspace{-0.3cm}
\paragraph{Evaluation of SA-PPO} We use the PPO implementation from~\citep{engstrom2020implementation}, which conducted hyperparameter search and published the optimal hyperparameters for PPO on three Mujoco environments in OpenAI Gym~\citep{brockman2016openai}. We use their optimal hyperparameters for PPO, and the same set of hyperparameters for SA-PPO without further tuning. We run Walker2d and Hopper $2 \times 10^6$ steps and Humanoid $1 \times 10^7$ steps to ensure convergence. Our vanilla PPO agents achieve similar or better performance than reported in the literature~\citep{engstrom2020implementation,henderson2018deep,hamalainen2018ppo}. Detailed hyperparameters are in Appendix~\ref{sec:ppo_details}. SA-PPO has one additional regularization parameter, $\kappa_\text{PPO}$, for the regularizer $\mathcal{R}_\text{PPO}$, which is chosen in \{0.003, 0.01, 0.03, 0.1, 0.3, 1.0\}. 
% The perturbation $\epsilon$ is added into the normalized state space as $\ell_\infty$ noise. 
We solve the SA-PPO objective using both SGLD and convex relaxation methods.
We include three baselines: vanilla PPO, and adversarially trained PPO~\citep{mandlekar2017adversarially,pattanaik2018robust} with 50\% and 100\% training steps under critic attack~\citep{pattanaik2018robust}. The attack is conducted by finding $\hat{s} \in B(s)$ minimizing $V(\hat{s})$ instead of $Q(s,\pi(\hat{s}))$, as PPO does not learn a $Q$ function during learning. 
We evaluate agents using 5 attacks, including our strong RS and MAD attacks, detailed in Appendix~\ref{sec:appendix_attack}.

\begin{table*}[t]\centering
\caption{Average episode rewards $\pm$ standard deviation over 50 episodes on 3 baselines and SA-PPO. We report natural rewards (no attacks) and rewards under five adversarial attacks. In each row we bold the best (lowest) attack reward over all five attacks. The \colorbox{lightgray}{gray rows} are the most robust agents.}
\resizebox{\linewidth}{!}{
% Please add the following required packages to your document preamble:
% \usepackage{multirow}
\begin{tabular}{l|c|c|c|c|c|c|c|c|c}
\hline
\multirow{2}{*}{Env.} & \multirow{2}{*}{\shortstack{$\epsilon$}}                                  & \multirow{2}{*}{Method} & \multirow{2}{*}{\shortstack{Natural\\ Reward}} & \multicolumn{5}{c|}{Attack Reward} & \multirow{2}{*}{\shortstack{Best\\ Attack}} \\
                      &                                                              &                         &                                   & Critic           & Random            & MAD                  & RS                    & RS+MAD                &  \\ \hline
                      &                                                              & PPO (vanilla)           &  3167.6$\pm$	541.6  &  1799.0$\pm$	935.2  &   2915.2$\pm$677.7 & 1505.2$\pm$	382.0   & 779.4$\pm$	33.2   & \bf 733.8$\pm$	44.6   &  733  \\
                      &                                                              & PPO (adv. 50\%)    &174$\pm$	146     &69	$\pm$83 & 141$\pm$	128  & \bf 42$\pm$	46 & 49	$\pm$50  &  44$\pm$	43 & 42  \\
                      &                                                              & PPO (adv. 100\%)        &6.1$\pm$	2.6 &  4.4	$\pm$1.8 &  6.1$\pm$	3.2 & 5.8$\pm$	2.7  &  3.8	$\pm$0.9 & \bf 3.6	$\pm$0.5  &3.6\\\rowcolor{lightgray}\cellcolor{white}
                      &                                                      \cellcolor{white}        & SA-PPO (SGLD)           &  3523.1$\pm$329.0  &  3665.5$\pm$	8.2  &  3080.2$\pm$	745.4  & 2996.6$\pm$	786.4   & \bf 1403.3$\pm$	55.0  &  1415.4$\pm$	72.0  &   1403.3 \\

\multirow{-5}{*}{\cellcolor{white} Hopper}  & \multirow{-5}{*}{\cellcolor{white}0.075} 
& SA-PPO (Convex)         &  3704.1$\pm$	2.2  &      3698.4$\pm$	4.4   &  3708.7$\pm$	23.8  &  3443.1$\pm$	466.672  & 1235.8$\pm$	50.2   & \bf 1224.2$\pm$	47.8  & 1224.2 \\ \hline
                      &                                                              & PPO (vanilla)           &  4619.5$\pm$	38.2  & 4589.3$\pm$	12.4   &  4480.0$\pm$465.3  &  4469.1$\pm$715.6  &  \bf 913.7$\pm$	54.3  &  926.8$\pm$66.3  &  913.7 \\
                      &                                                              & PPO (adv. 50\%)         & -11 $\pm$ 0.9                     & -10.6 $\pm$ 0.86  & -10.99 $\pm$ 0.95 & -10.78 $\pm$ 0.89 & \bf -11.55 $\pm$ 0.79 & -11.37 $\pm$ 0.87     & -11.55    \\
                      &                                                              & PPO (adv. 100\%)        & -113 $\pm$ 4.14                   & -111.9 $\pm$ 4.13 & -111 $\pm$ 4.27   & -112 $\pm$ 4.08   & -114.4 $\pm$ 4.0      & \bf -114.5 $\pm$ 4.09 & -114.5    \\
                      \rowcolor{lightgray}\cellcolor{white} &         \cellcolor{white}& SA-PPO (SGLD)           & 4911.8$\pm$	188.9  & 5019.0$\pm$	65.2    & 4894.8$\pm$	139.9  &     4755.7$\pm$	413.1  &   2605.6$\pm$	1255.7&  \bf 2468.4	$\pm$1205  &2468.4    \\

\multirow{-5}{*}{\cellcolor{white}Walker2d} & \multirow{-5}{*}{\cellcolor{white}0.05} & SA-PPO (Convex)         & 4486.6$\pm$	60.7 &  4572.0$\pm$	52.3   &  4475.0$\pm$	48.7  &  4343.4$\pm$	329.4  &  2168.2$\pm$	665.4  &    \bf 2076.1$\pm$	666.7   &    2076.1      \\ \hline
                      &                                                              & PPO (vanilla)           &  5270.6$\pm$1074.3  & 5494.7$\pm$	118.7   &  5648.3$\pm$	86.8  &  1140.3$\pm$	534.8  & 1036.0$\pm$	420.2   &  \bf 884.1$\pm$	356.3  & 884.1   \\
                      &                                                              & PPO (adv. 50\%)         & 234$\pm$ 28                      & 198 $\pm$ 58      & 240 $\pm$ 19.4    & 148 $\pm$ 73      & \bf 98 $\pm$ 69       & 101.5 $\pm$ 66.4      & 98        \\
                      &                                                              & PPO (adv. 100\%)        & 141.4 $\pm$ 20.6                  & 140.25 $\pm$ 16.6 & 142.13 $\pm$ 16   & 140.23 $\pm$ 34.5 &  113.2 $\pm$ 18.5  & \bf 112.6 $\pm$ 13.88     &  112.6     \\
                     \rowcolor{lightgray}\cellcolor{white} &         \cellcolor{white}                                                     & SA-PPO (SGLD)   &  6624.0$\pm$	25.5 &  6587.0$\pm$	23.1   &  6614.1$\pm$	21.4  &  6586.4$\pm$	23.5  &  6200.5$\pm$	818.1  &    \bf 6073.8$\pm$	1108.1   &  6073.8        \\
                     
\multirow{-5}{*}{\cellcolor{white}Humanoid} & \multirow{-5}{*}{\cellcolor{white}0.075} & SA-PPO (Convex)         &  6400.6$\pm$	156.8  &  6397.9	$\pm$35.6 &  6207.9$\pm$	783.3 & 6379.5$\pm$	30.5   &   4707.2$\pm$	1359.1 & \bf 4690.3$\pm$	1244.89   & 4690.3   \\ \hline
\end{tabular}
}
\label{tab:ppo_res}
\end{table*}

In Table~\ref{tab:ppo_res}, naive adversarial training deteriorates performance and does not reliably improve robustness in all three environments. Our RS attack and MAD attacks are very effective in all environments and achieve significantly lower rewards than critic and random attacks; this shows the importance of evaluation using strong attacks. SA-PPO, solved either by SGLD or the convex relaxation objective, \emph{significantly improves robustness} against strong attacks. Additionally, SA-PPO achieves natural performance (without attacks) similar to that of vanilla PPO in Walker2d and Hopper, and \emph{significantly improves the reward in Humanoid environment}. Humanoid has a high state-space dimension (376) and is usually hard to train~\citep{hamalainen2018ppo}, and our results suggest that a robust objective can be helpful even in a non-adversarial setting. Because PPO training can have large performance variance across multiple runs, to show that our SA-PPO can consistently obtain a robust agent, we repeatedly train each environment using SA-PPO and vanilla PPO at least \textbf{15 times} and attack all agents obtained. In Figures~\ref{fig:nat_reproduce} and~\ref{fig:att_reproduce} we show the box plot of the natural and best attack reward for these PPO and SA-PPO agents. We can see that the best attack rewards of most SA-PPO agents are consistently better than PPO agents (in terms of median, 25\% and 75\% percentile rewards over multiple repetitions).
\begin{figure}
\centering
\caption{Box plots of natural and attack rewards for PPO and SA-PPO. Each box is obtained from at least \textbf{15 agents} trained with the same hyperparameters as in agents reported in Table~\ref{tab:ppo_res}.
% and tested for 50 episodes.  
The red lines inside the boxes are median rewards, and the upper and lower sides of the boxes show 25\% and 75\% percentile rewards of 30 agents. The line segments outside of the boxes show min or max rewards.}
\label{fig:test}
\begin{subfigure}{.5\textwidth}
  \centering
  \includegraphics[width=\linewidth]{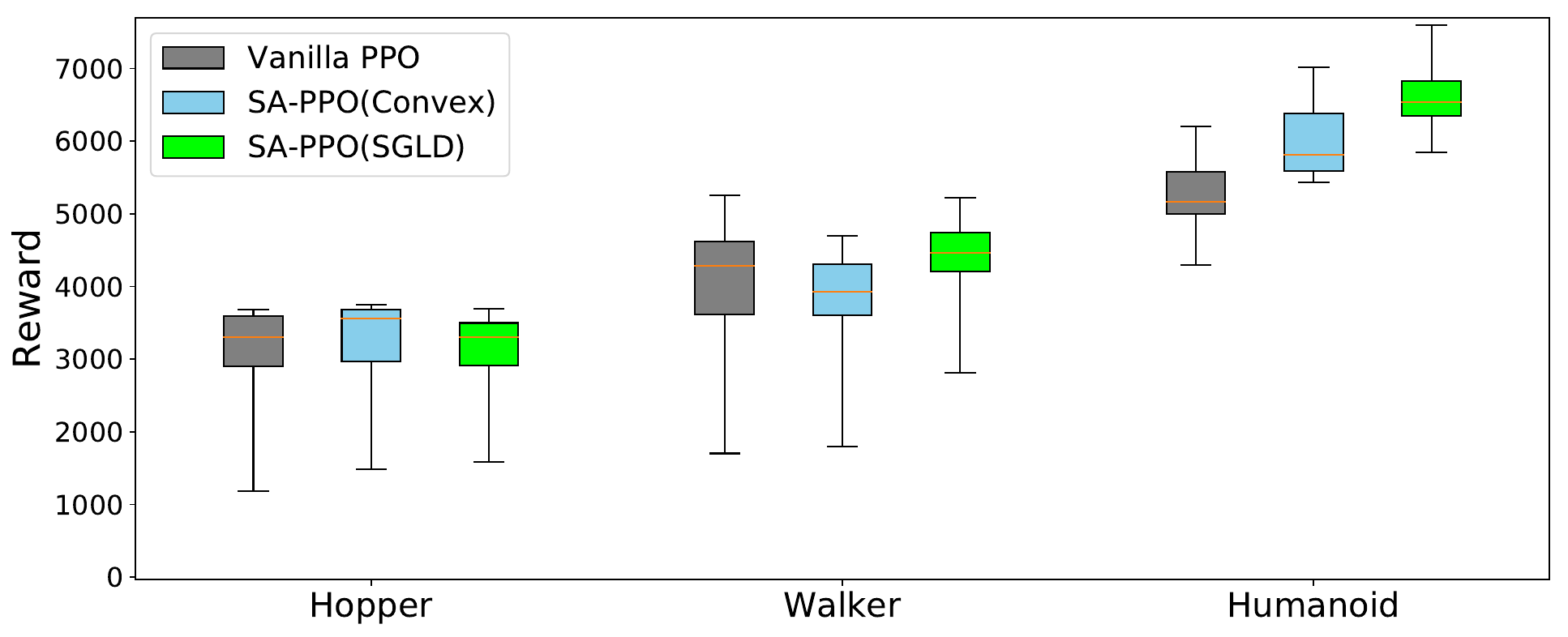}
  \caption{Natural episode rewards (no attacks)}
  \label{fig:nat_reproduce}
\end{subfigure}%
\begin{subfigure}{.5\textwidth}
  \centering
  \includegraphics[width=\linewidth]{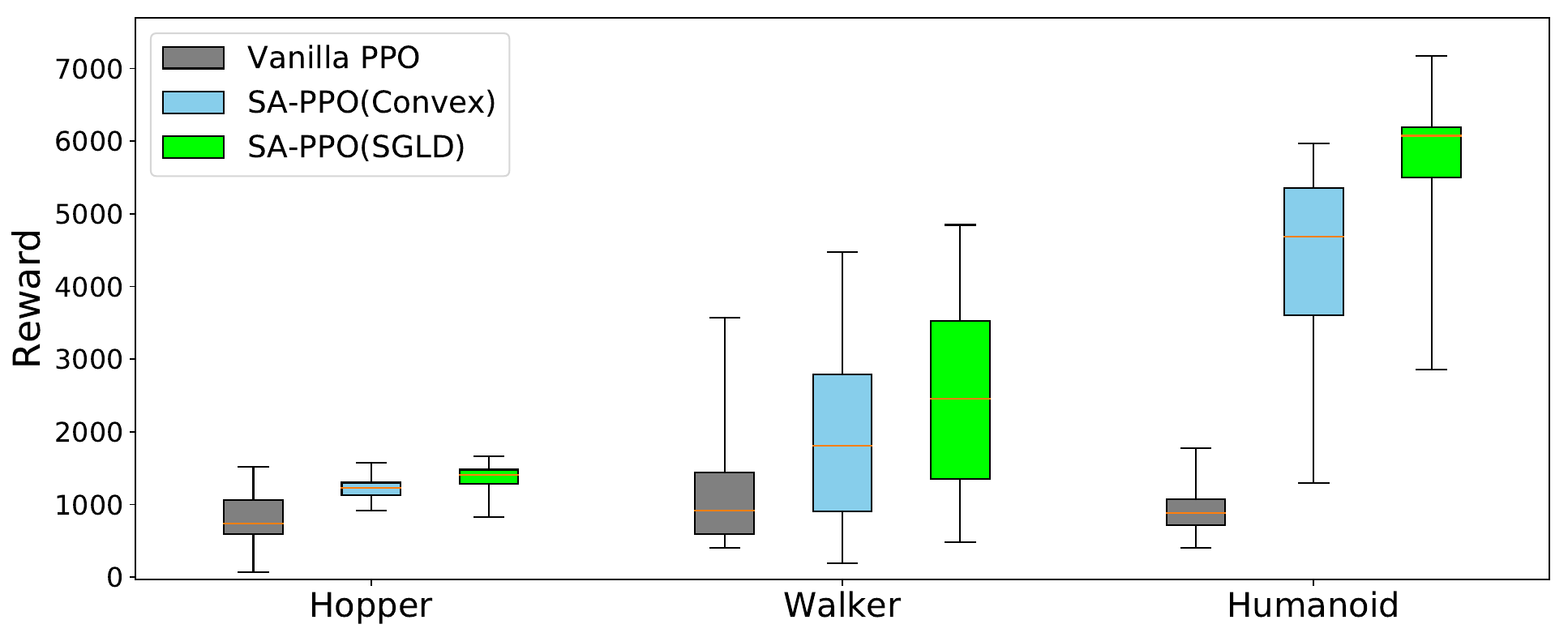}
  \caption{Rewards under the best (strongest) attacks}
  \label{fig:att_reproduce}
\end{subfigure}
\vspace{-0.4cm}
\end{figure}
\vspace{-0.3cm}
\paragraph{Evaluation of SA-DDPG}
We use a high quality DDPG implementation~\citep{deeprl} as our baseline, achieving similar or better performance on five Mujoco environments as in the literature~\citep{lillicrap2015continuous,fujimoto2018addressing}. For SA-DDPG, we use the same set of hyperparameters as in DDPG~\citep{deeprl} (detailed in Appendix~\ref{sec:ddpg_details}), except for the additional regularization term $\kappa_\text{DDPG}$ for $\mathcal{R}_\text{DDPG}$ which is searched in $\{0.1, 0.3, 1.0, 3.0\}$ for InvertedPendulum and Reacher due to their low dimensionality and $\{30, 100, 300, 1000\}$ for other environments. We include vanilla DDPG, adversarially trained DDPG~\citep{pattanaik2018robust} (attacking 50\% or 100\% steps) as baselines. We use the same set of 5 attacks as in~\ref{tab:ppo_res}. In Table~\ref{tab:ddpg_res}, we observe that naive adversarial training is not very effective in many environments. SA-DDPG (solved by SGLD or convex relaxations) significantly improves robustness under strong attacks in all 5 environments. Similar to the observations on SA-PPO, SA-DDPG can improve natural agent performance in environments (Ant and Walker2d) with relatively high dimensional state space (111 and 17).

\begin{table*}[tb]
\centering
\caption{Average episode rewards $\pm$ standard deviation over 50 episodes on DDPG, adversarial training~\citep{pattanaik2018robust} (50\% and 100\% steps) and SA-DDPG. Each number represents an agent with \emph{median} reward under the best attack over 11 training runs with identical hyperparameters. 
Due to large variance in RL, it important to report median metrics. \textbf{Bold} numbers indicate the most robust agents. Full results of all five attacks are in Table~\ref{tab:ddpg_full_res_} and statistics over multiple training runs are in Figure~\ref{fig:ddpg_repo_test}. % To avoid clutter, this table only shows attack reward from the strongest attack; 
% Refer to Table~\ref{tab:ddpg_full_res_} for the full results with each individual attacks. 
%with rewards under different types of attacks.
%; \colorbox{lightgray}{gray} cells indicate the best natural reward.
}

\vspace{-0.5em}
\resizebox{0.85\linewidth}{!}{
\begin{tabular}{c|c|c |c |c |c| c}
\toprule
\multicolumn{2}{c|}{Environment}& Ant & Hopper & Inverted Pendulum & Reacher & Walker2d\\\hline
\multicolumn{2}{c|}{$\ell_\infty$ norm perturbation budget $\epsilon$}& 0.2 & 0.075 & 0.3 & 1.5 & 0.05 \\\midrule
% \multicolumn{2}{c|}{State Space}& 111 & 17 & 11 & 4 & 11 & 17 \\\hline

% \multirow{2}{*}{\parbox{2cm}{\centering DDPG\\(vanilla) }}&Natural Reward &1633 $\pm$ 631 & 3180 $\pm$ 390 & 1000 $\pm$ 0 & \cellcolor{white} -4.4 $\pm$ 1.6 & 2247 $\pm$ 1177\\
% &Attack Reward (best) & {160 $\pm$ 301} &{704 $\pm$ 228} & {285 $\pm$ 232} & {-27.85 $\pm$ 5.83} & {711 $\pm$ 663}\\\hline
\multirow{2}{*}{\parbox{2cm}{\centering DDPG\\(vanilla) }}&Natural Reward &$1487 \pm 850$ & $3302 \pm 762$ & $1000 \pm 0$ & $-4.37 \pm 1.54$ & $1870 \pm 1418$   \\
&Attack Reward (best) & $142 \pm 180$ & $606 \pm 124$ & $92 \pm 1$ & $-27.87 \pm 4.38$ & $790 \pm 985$ \\ \hline

\multirow{2}{*}{\parbox{2cm}{\centering DDPG\\(adv. 50\%)}}&Natural Reward & $1487 \pm 850$ & $3302 \pm 762$ & $1000 \pm 0$ & $-4.37 \pm 1.54$ & $1870 \pm 1418$ \\
&Attack Reward (best) &$31 \pm 179$ & $41 \pm 105$ & $39 \pm 0$ & $-25.81 \pm 6.53$ & $837 \pm 722$ \\\hline

\multirow{2}{*}{\parbox{2cm}{\centering DDPG\\(adv. 100\%)}}&Natural Reward & $1082 \pm 574$ & $973 \pm 0$ & $1000 \pm 0$ & $-5.71 \pm 1.80$ & $462 \pm 569$  \\
&Attack Reward (best) &$-52 \pm 231$ & $24 \pm 15$ & $82 \pm 0$ & $-27.44 \pm 4.05$ & $302 \pm 260$ \\\Xhline{2\arrayrulewidth}

% \multirow{2}{*}{\parbox{2cm}{\centering SA-DDPG\\(SGLD)}}&Natural Reward & 1503 $\pm$ 502 & 3035 $\pm$ 4.34 & 1000 $\pm$ 0 & -5.2 $\pm$ 1.64 & 2760 $\pm$ 1563\\
% &Attack Reward (best) & 1251 $\pm$ 543 & \bf 2537 $\pm$ 745 & \bf 1000 $\pm$ 0 & \bf -11.9 $\pm$ 5.2 & 817 $\pm$ 747\\\hline
\multirow{2}{*}{\parbox{2cm}{\centering SA-DDPG\\(SGLD)}}&Natural Reward & $2186 \pm 534$ & $3068 \pm 223$ & $1000 \pm 0$ & $-5.38 \pm 1.74$ & $3318 \pm 680$ \\
&Attack Reward (best) &  $\mathbf{2007 \pm 686}$ & $\mathbf{1609 \pm 676}$ & $423 \pm 281$ & $\mathbf{-12.10 \pm 4.58}$ & $1210 \pm 979$ \\\hline

% \multirow{2}{*}{\parbox{2cm}{\centering SA-DDPG\\(convex relax)}}&Natural Reward & \cellcolor{white} 2111 $\pm$ 159& \cellcolor{white} 3496 $\pm$ 30.8 & 1000 $\pm$ 0 & -5.2 $\pm$ 2.12 & \cellcolor{white} 4234 $\pm$ 854\\
% &Attack Reward (best) & \bf 1959 $\pm$ 105 & 1722 $\pm$ 599 & 878 $\pm$ 279 & -13.9 $\pm$ 3.8 & \bf 1945 $\pm$ 1143\\\bottomrule
\multirow{2}{*}{\parbox{2cm}{\centering SA-DDPG\\(convex relax)}}&Natural Reward & \cellcolor{white} $2254 \pm 430$ & $3128 \pm 453$ & $1000 \pm 0$ & $-5.24 \pm 2.06$ & $4540 \pm 1562$ \\
&Attack Reward (best) & $1820 \pm 635$ & $1202 \pm 402$ & $\mathbf{1000 \pm 0}$ & $-12.44 \pm 3.77$ & $\mathbf{1986 \pm 1993}$  \\\bottomrule
\end{tabular}
}
\label{tab:ddpg_res}\end{table*}

\begin{table*}\centering
\caption{Average episode rewards $\pm$ std and \emph{action certification rates} over 50 episodes on three baselines and SA-DQN. We report natural rewards (no attacks) and PGD attack rewards (under 10-step or 50-step PGD). Action certification rate is the proportion of the actions during rollout that are guaranteed unchanged by any attacks within the given $\epsilon$. 
%\textbf{Bold} numbers indicate the most robust model; { italic} numbers indicate models with poor robustness. 
Training time is reported in Section~\ref{sec:dqn_details}.
%; \colorbox{lightgray}{gray} cells indicate the best natural reward.
}
\resizebox{0.9\linewidth}{!}{
\begin{tabular}{c|c|c|c|c|c}
\toprule
\multicolumn{2}{c|}{Environment}&Pong&Freeway&BankHeist&RoadRunner\\\hline
\multicolumn{2}{c|}{$\ell_\infty$ norm perturbation budget $\epsilon$}&\multicolumn{4}{c}{1/255}\\\midrule
\multirow{3}{*}{\shortstack{DQN\\(vanilla)}}&Natural Reward&21.0 $\pm$ 0.0&34.0 $\pm$ 0.2&\cellcolor{white}1308.4 $\pm$ 24.1&\cellcolor{white}45534.0 $\pm$ 7066.0\\ %\cline{2-7}
&PGD Attack Reward (10 steps)&{-21.0$\pm$0.0}&{0.0$\pm$0.0}&{56.4$\pm$21.2}&{0.0$\pm$0.0}\\% \cline{2-7}
&Action Cert. Rate&0.0&0.0&0.0&0.0\\\hline
\multirow{3}{*}{\shortstack{DQN Adv. Training\\(attack 50\% frames)\\ \citet{behzadan2017whatever}}}&Natural Reward&10.1 $\pm$ 6.6&25.4$\pm$0.8&1126.0$\pm$70.9&22944.0$\pm$6532.5\\ %\cline{2-7}
&PGD Attack Reward (10 steps)&{-21.0 $\pm$ 0.0} &{0.0$\pm$0.0}&{9.4$\pm$13.6}&{14.0$\pm$34.7}\\%\cline{2-7}
&Action Cert. Rate&0.0&0.0&0.0&0.0\\\hline
\multirow{2}{*}{\shortstack{Imitation learning\\\citet{fischer2019online}}}&Natural Reward&19.73&32.93&238.66&12106.67\\%\cline{2-7}
&PGD Attack Reward (4 steps)&18.13&\bf 32.53&{190.67}&5753.33\\\Xhline{2\arrayrulewidth}
\multirow{3}{*}{\shortstack{SA-DQN\\(PGD)}}&Natural Reward&21.0$\pm$0.0&\cellcolor{white} 33.9 $\pm$ 0.4&1245.2$\pm$14.5&34032.0$\pm$3845.0\\%\cline{2-7}
&PGD Attack Reward (10 steps)& 21.0$\pm$0.0&23.7 $\pm$ 2.3& 1006.0$\pm$226.4&  20402.0$\pm$7551.1\\%\cline{2-7}
&Action Cert. Rate&0.0&0.0&0.0&0.0\\\hline
\multirow{4}{*}{\shortstack{SA-DQN\\(convex)}}&Natural Reward&\cellcolor{white} 21.0 $\pm$ 0.0&30.0$\pm$0.0&1235.4$\pm$9.8&44638.0$\pm$7367.0\\%\cline{2-7}
&PGD Attack Reward (10 steps)&\bf 21.0 $\pm$ 0.0&30.0$\pm$0.0&\bf 1232.4$\pm$16.2&\ \bf 44732.0$\pm$8059.5\\%\cline{2-7}
&PGD Attack Reward (50 steps)&\bf 21.0 $\pm$ 0.0&30.0$\pm$0.0&\bf 1234.6$\pm$16.6&\ \bf 44678.0$\pm$6954.0\\%\cline{2-7}
&Action Cert. Rate&1.000&1.000&0.984&0.475\\
\bottomrule
\end{tabular}
}
\label{tab:dqn_res}
\end{table*}

% In this experiment, we run each task 11 times, and attack all models obtained. We report the median.

\vspace{-0.3cm}
\paragraph{Evaluation of SA-DQN}
We implement Double DQN~\cite{van2016deep} and Prioritized Experience Replay~\cite{schaul2015prioritized} on four Atari games.
% and one classic control problem, Acrobot. 
%For Atari games we train 6 million frames for both natural and robust models
We train Atari agents for 6 million frames for both vanilla DQN and SA-DQN. Detailed parameters and training procedures are in Appendix~\ref{sec:dqn_details}. We normalize the pixel values to $[0,1]$ and we add $\ell_\infty$ adversarial noise with norm $\epsilon=1/255$. %For Acrobot, we normalize $\epsilon$ with element-wise standard deviation of state values collected over 100 episodes.
% The same $\epsilon$ value is used for both training and attack at test time.
We include vanilla DQNs and adversarially trained DQNs with 50\% of frames under attack~\citep{behzadan2017whatever} during training time as baselines, and we report results of robust imitation learning~\citep{fischer2019online}. We evaluate all environments under 10-step untargeted PGD attacks, except that results from~\citep{fischer2019online} were evaluated using a weaker 4-step PGD attack. For the most robust Atari agents (SA-DQN convex), we additionally attack them using 50-step PGD attacks, and find that the rewards do not further reduce.
%In all PGD attacks, we set number of iterations $K=10$ and use the same $\epsilon$ value as in the training. As a comparison, we also train a regular DQN with the same model hyperparameters (detailed in Appendix~\ref{sec:dqn_details}). 
% For each agent, we run 50 episodes and report the average score it achieves with and without adversarial attacks.
% When tested in clean environments without adversarial attacks, we also report the percentage of actions that can be certified among all actions an agent takes. 
In Table~\ref{tab:dqn_res}, we see that our SA-DQN achieves much higher rewards under attacks in most environments, and naive adversarial training is mostly ineffective under strong attacks. We obtain better rewards than~\citep{fischer2019online} in most environments, as we learn the agents directly rather than using two-step imitation learning.
% Also, the robust models' action certification rate is close to 1 if trained using convex relaxation. The higher the action certification rate, the fewer actions can be changed under \emph{any} norm bounded adversarial attacks. In Atari environments, there are so few actions that cannot be certified that the agents' reward virtually remains the same even under strong PGD attacks. 
% We also include an DQN adversarial training technique proposed by \citet{behzadan2017whatever}. This method uses adversarial examples to train DQN using regular TD loss, which does achieve some robustness improvement. But it is easily outperformed by SA-DQN. 

% \input{tables/fig_env_change}
% \input{ddpg_fig_table}

\textbf{Robustness certificates.}
When our robust policy regularizer is trained using convex relaxations, we can obtain certain robustness certificates under observation perturbations. 
% For SA-DQN, we can guarantee over 99.5\% of actions do not change under any norm bounded perturbations for three environments (Table~\ref{tab:dqn_res}). 
For a simple environment like Pong, we can guarantee actions do not change for all frames during rollouts, thus guarantee the cumulative rewards under perturbation. 
For SA-DDPG, the {\it upper bounds} on the maximal $\ell_2$ difference in action changes is a few times smaller than baselines on all 5 environments (see Appendix~\ref{sec:app_exp}). Unfortunately, for most RL tasks, due to the complexity of environment dynamics and reward process, it is impossible to obtain a ``certified reward'' as the certified test error in supervised learning settings~\citep{wong2018provable,zhang2019towards}. We leave further discussions on these challenges in Appendix~\ref{sec:certificate}.

\section*{Broader Impact}

Reinforcement learning is a central part of modern artificial intelligence and is still under heavy development in recent years. Unlike supervised learning which has been widely deployed in many commercial and industrial applications, reinforcement learning has not been widely accepted and deployed in real-world settings. Thus, the study of reinforcement learning robustness under the adversarial attacks settings receives less attentions than the supervised learning counterparts. 

However, with the recent success of reinforcement learning on many complex games such as Go~\citep{silver2017mastering}, StartCraft~\citep{vinyals2019grandmaster} and Dota 2~\citep{berner2019dota}, we will not be surprised if we will see reinforcement learning (especially, deep reinforcement learning) being used in everyday decision making tasks in near future. The potential social impacts of applying reinforcement learning agents thus must be investigated before its wide deployment. One important aspect is the trustworthiness of an agent, where robustness plays a crucial rule. The robustness considered in our paper is important for many realistic settings such as sensor noise, measurement errors, and man-in-the-middle (MITM) attacks for a DRL system. if the robustness of reinforcement learning can be established, it has the great potential to be applied into many mission-critical tasks such as autonomous driving~\citep{shalev2016safe,sallab2017deep,you2019advanced} to achieve superhuman performance.

On the other hand, one obstacle for applying reinforcement learning to real situations (beyond games like Go and StarCraft) is the ``reality gap'': a well trained reinforcement learning agent in a simulation environment can easily fail in real-world experiments. One reason for this failure is the potential sensing errors in real-world settings; this was discussed as early as in~\citet{brooks1992artificial} in 1992 and still remains an open challenge now.
Although our experiments were done in simulated environments, we believe that a smoothness regularizer like the one proposed in our paper can also benefit agents tested in real-world settings, such as robot hand manipulation~\citep{akkaya2019solving}.

% Our paper is the first work that studies this problem in a fundamental and systematic manner, and derives principles that are widely applicable to existing RL algorithms. We believe the theoretical framework and comprehensive experiments in our work will inform and inspire further research on robust deep reinforcement learning.

\section*{Acknowledgments and Disclosure of Funding}
% Funding in direct support of this work: NSF grant XXX, GPUs donated by YYY, scholarship by Company ZZZ.
% Additional revenues related to this work: Sabbatical at Company Y; Part time employment with Company Z; Honorarium for lectures by Charity X; Honorarium as Member of the Advisory Board of Startup C; Travel support by Foundation A; hardware donations by Company B.
We acknowledge the support by NSF IIS-1901527, IIS-2008173, ARL-0011469453, 
and scholarship by IBM.
The authors thank Ge Yang and Xiaocheng Tang for helpful discussions.

\bibliography{ref}
\bibliographystyle{ref}

\newpage
\onecolumn
\appendix
\setlength{\abovedisplayskip}{7.0pt plus 2.0pt minus 5.0pt}
\setlength{\belowdisplayskip}{7.0pt plus 2.0pt minus 5.0pt}
\setlength{\abovedisplayshortskip}{0.0pt plus 3.0pt}
\setlength{\belowdisplayshortskip}{4.0pt plus 3.0pt minus 3.0pt} 

\setlength{\topsep}{4.0pt plus 1.0pt minus 2.0pt}

\renewcommand{\algorithmicrequire}{\textbf{Input:}}
\renewcommand{\algorithmicensure}{\textbf{Output:}}
\renewcommand{\algorithmiccomment}[1]{\hfill$\triangleright$\textcolor{blue}{#1}}

{\Large \bf Appendix}
\begin{itemize}[wide]
\item Readers who are interested in SA-MDP can find an example of SA-MDP in Section~\ref{sec:example_mdp} and complete proofs in Section~\ref{sec:proofs}.

\item Readers who are interested in adversarial attacks can find more details about our new attacks and existing attacks in Section~\ref{sec:appendix_attack}. Especially, we discussed how a robust critic can help in attacking RL, and show experiments on the improvements gained by the robustness objective during attack.

\item Readers who want to know more details of optimization techniques to solve our state-adversarial robust regularizers can refer to Section~\ref{sec:optimization}, including more background on convex relaxations of neural networks in Section~\ref{sec:convex-relaxation}.

\item We provide detailed algorithm and hyperparameters for SA-PPO in Section~\ref{sec:ppo_details}. We provide details for SA-DDPG in Section~\ref{sec:ddpg_details}. We provide details for SA-DQN in Section~\ref{sec:dqn_details}.

\item We provide more empirical results in Section~\ref{sec:app_exp}. To demonstrate the convergence of our algorithm, we repeat each experiment at least 15 times and plot the convergence of rewards during multiple runs. We found that for some environments (like Humanoid) we can  consistently improve baseline performance. 
% We also report robustness of our algorithm over multiple runs and show that our robust policy regularizer consistently improve robustness. 
We also evaluate some settings under multiple perturbation strength $\epsilon$.

\end{itemize}

\section{An example of SA-MDP}
\label{sec:example_mdp}
We first show a simple environment and solve it under different settings of MDP and SA-MDP. The environment has three states $\mathcal{S}=\{S_1, S_2, S_3\}$ and 2 actions $\mathcal{A}=\{A_1, A_2\}$. The transition probabilities and rewards are defined as below (unmentioned probabilities and rewards are 0):
\begin{align*}
\mathrm{Pr}(s'=S_1|s=S_1, a=A_1) &= 1.0 \\
\mathrm{Pr}(s'=S_2|s=S_1, a=A_2) &= 1.0 \\
\mathrm{Pr}(s'=S_2|s=S_2, a=A_2) &= 1.0 \\
\mathrm{Pr}(s'=S_3|s=S_2, a=A_1) &= 1.0 \\
\mathrm{Pr}(s'=S_1|s=S_3, a=A_2) &= 1.0 \\
\mathrm{Pr}(s'=S_2|s=S_3, a=A_1) &= 1.0 \\
R(s=S_1, a=A_2, s'=S_2) &= 1.0 \\
R(s=S_2, a=A_1, s'=S_2) &= 1.0 \\
R(s=S_3, a=A_1, s'=S_3) &= 1.0
\end{align*}
\begin{figure}[htbp]
    \centering
    \includegraphics[width=0.35\linewidth]{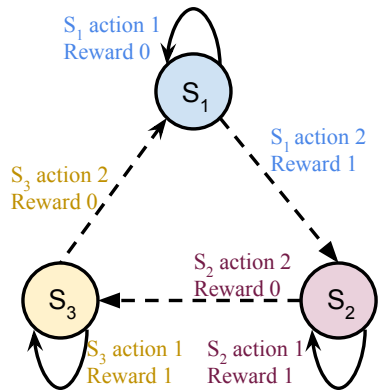}
    \caption{A simple 3-state toy environment.}
    \label{fig:simple_environment}
\end{figure}
The environment is illustrated in Figure~\ref{fig:simple_environment}.
For the power of adversary, we allow $\nu$ to perturb one state to any other two neighbouring states:
\begin{align*}
    B_\nu(S_1) = B_\nu(S_2) = B_\nu(S_3) = \{S_1, S_2, S_3\}
\end{align*}
Now we evaluate various policies for MDP and SA-MDP for this environment. We use $\gamma=0.99$ as the discount factor. A stationary and Markovian policy in this environment can be described by 3 parameters $p_{11}, p_{21}, p_{31}$ where $p_{ij} \in [0,1]$ denotes the probability $\mathrm{Pr}(a = A_j|s=S_i)$. We denote the value function as $V$ for MDP and $\tilde{V}$ for SA-MDP.

\begin{itemize}
    \item \textbf{Optimal Policy for MDP.} For a regular MDP, the optimal solution is $p_{11}=0$, $p_{21}=1$, $p_{31}=1$. We take $A_2$ to receive reward and leave $S_1$, and then keep doing $A_1$ in $S_2$ and $S_3$. The values for each state are $V(S_1)=V(S_2)=V(S_3)=\frac{1}{1-\gamma}=100$, which is optimal. However, this policy obtains $\tilde V(S_1)=\tilde V(S_2)=\tilde V(S_3)=0$ for SA-MDP, because we can set $\nu(S_1)=S_2$, $\nu(S_2)=S_1$, $\nu(S_3)=S_1$ and consequentially we always take the wrong action receiving 0 reward.
    \item \textbf{A Stochastic Policy for MDP and SA-MDP.} We consider a stochastic policy where $p_{11}=p_{21}=p_{31}=0.5$. Under this policy, we randomly stay or move in each state, and has a 50\% probability of receiving a reward. The adversary $\nu$ has no power because $\pi$ is the same for all states. In this situation, $V(S_1)=\tilde V(S_1)=V(S_2)=\tilde V(S_2)=V(S_3)=\tilde V(S_3) =\frac{0.5}{1-0.99}=50$ for both MDP and SA-MDP. This can also be seen as an extreme case of Theorem~\ref{thm:optimal_distance}, where the policy does not change under adversary in all states, so there is no performance loss in SA-MDP.
    \item \textbf{Deterministic Policies for SA-MDP.} Now we consider all $2^3=8$ possible deterministic policies for SA-MDP. Note that if for any state $S_i$ we have $p_{i1}=0$ and another state $S_j$ we have $p_{j1}=1$, we always have $\tilde V(S_1)=\tilde V(S_2)=\tilde V(S_3)=0$. This is because we can set $\nu(S_1)=S_j$, $\nu(S_2)=S_i$ and $\nu(S_3)=S_i$ and always receive a 0 reward. Thus the only two possible other policies are $p_{11}=p_{21}=p_{31}=0$ and $p_{11}=p_{21}=p_{31}=1$, respectively. For $p_{11}=p_{21}=p_{31}=1$ we have $\tilde V(S_1)=0, \tilde V(S_2)=\tilde V(S_3)=100$ as we always take $A_1$ and never transit to other states; for $p_{11}=p_{21}=p_{31}=0$, we circulate through all three states and only receive a reward when we leave $A_1$. We have $\tilde V(S_1)=\frac{1}{1-\gamma^3} \approx 33.67$, $\tilde V(S_2)=\frac{\gamma^2}{1-\gamma^3} \approx 33.00$ and $\tilde V(S_3) = \frac{\gamma}{1-\gamma^3} \approx 33.33$.
\end{itemize}
Figure~\ref{fig:value_p11_eq_0}, \ref{fig:value_p11_eq_05}, \ref{fig:value_p11_eq_1} give the graphs of $\tilde V(S_1)$, $\tilde V(S_2)$ and $\tilde V(S_3)$ under three different settings of $p_{11}$. The figures are generated using Algorithm~\ref{alg:policy_nu_evaluation}.
\begin{figure}[htbp]
    \centering
    \includegraphics[width=0.8\linewidth]{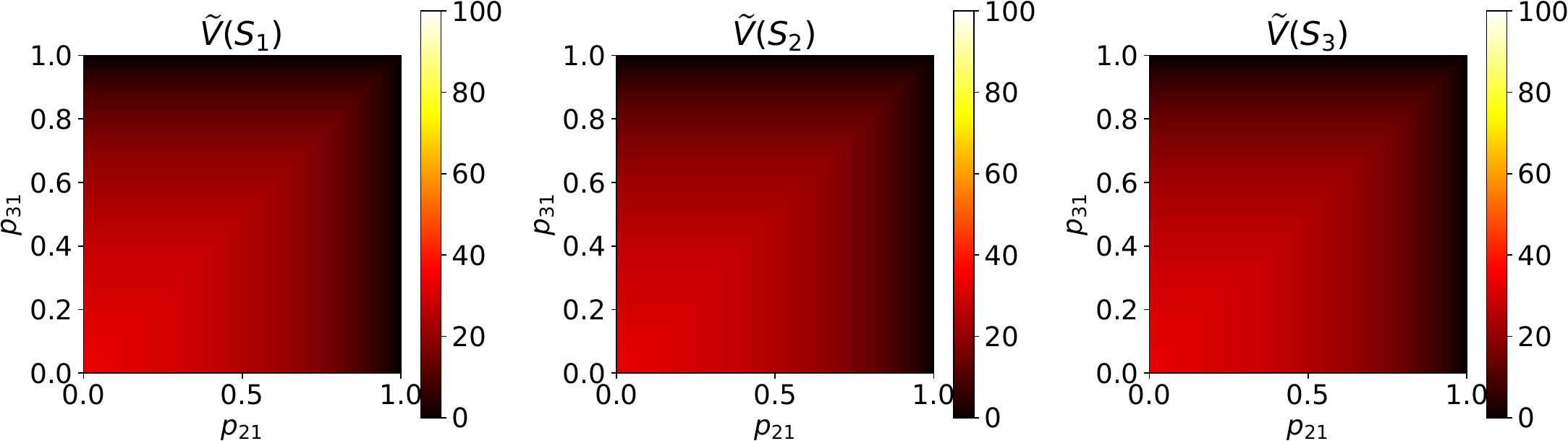}
    \caption{Value functions for SA-MDP when $p_{11}=0$, with $p_{21} \in [0,1]$, $p_{31} \in [0,1]$}
    \label{fig:value_p11_eq_0}
\end{figure}
\begin{figure}[htbp]
    \centering
    \includegraphics[width=0.8\linewidth]{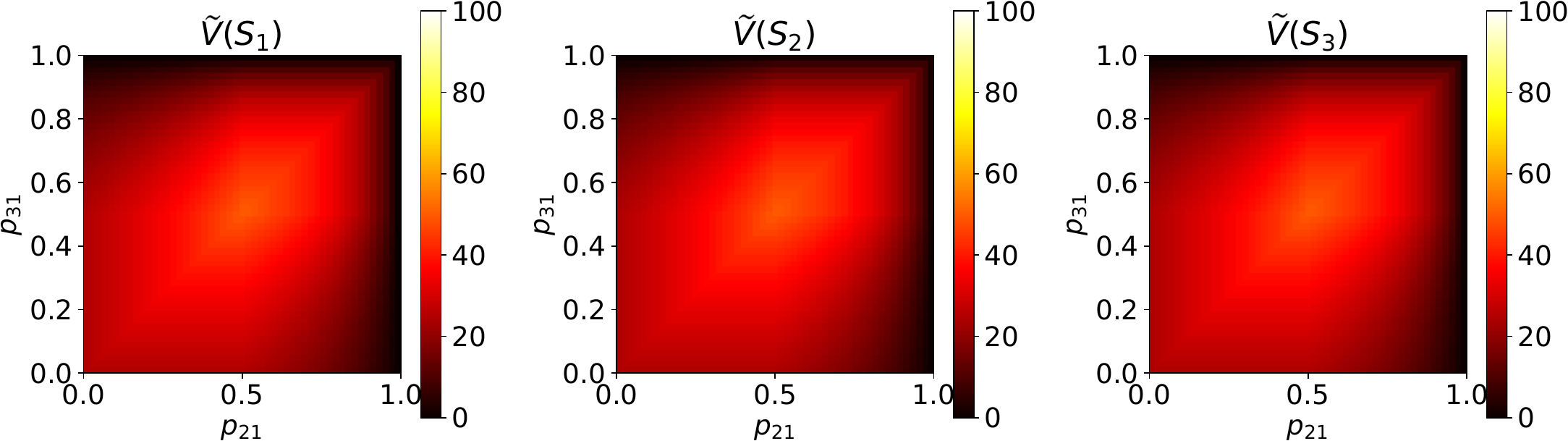}
    \caption{Value functions for SA-MDP when $p_{11}=0.5$, with $p_{21} \in [0,1]$, $p_{31} \in [0,1]$}
    \label{fig:value_p11_eq_05}
\end{figure}
\begin{figure}[htbp]
    \centering
    \includegraphics[width=0.8\linewidth]{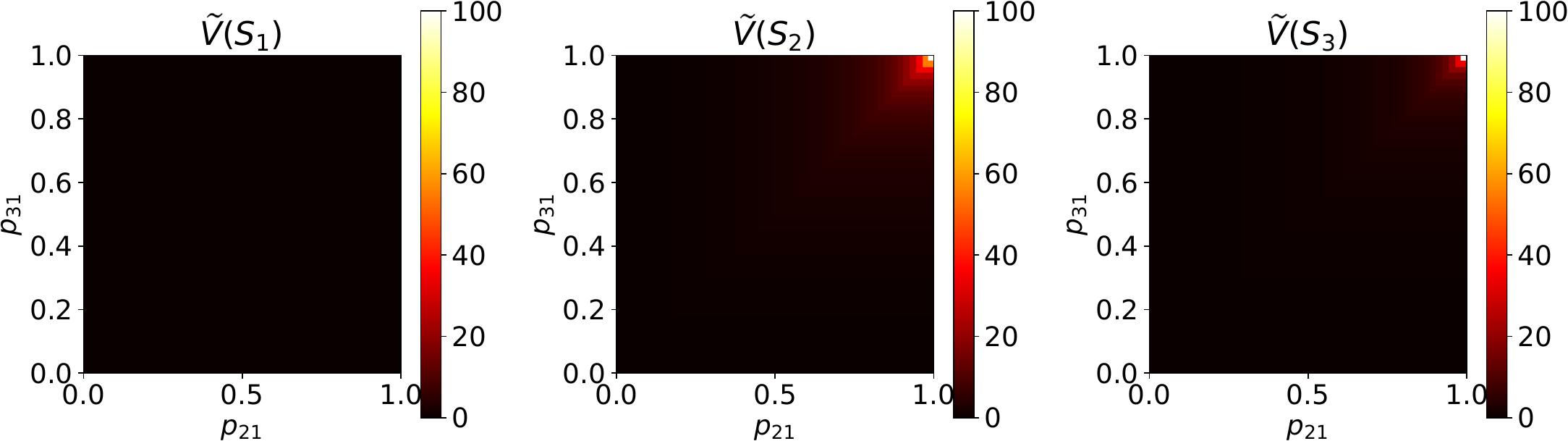}
    \caption{Value functions for SA-MDP when $p_{11}=1.0$, with $p_{21} \in [0,1]$, $p_{31} \in [0,1]$}
    \label{fig:value_p11_eq_1}
\end{figure}

\section{Proofs for State-Adversarial Markov Decision Process}
\label{sec:proofs}
\setcounter{theorem}{0}
% \reptext{theorem_fixed_pi_nu}

\begin{theorem}[Bellman equations for fixed $\pi$ and $\nu$]
Given $\pi: \mathcal{S} \rightarrow \mathcal{P}(\mathcal{A})$ and $\nu: \mathcal{S} \rightarrow \mathcal{S}$, we have
\begin{align*}
\tilde{V}_{\pi\circ\nu}(s) &= \sum_{a \in \mathcal{A}} \pi(a|\nu(s)) \sum_{s' \in \mathcal{S}} p(s'|s,a) \left [R(s, a, s') + \gamma \tilde{V}_{\pi\circ\nu}(s^\prime) \right ]\\
\tilde{Q}_{\pi\circ\nu}(s,a) &= \sum_{s' \in \mathcal{S}} p(s'|s,a) \left [R(s, a, s') + \gamma \sum_{a' \in \mathcal{A}} \pi(a'|\nu(s')) \tilde{Q}_{\pi\circ\nu}(s', a') \right ].
\end{align*}
\vspace{-1em}
\label{thm:bellman_nu}
\end{theorem}

\begin{proof}
Based on the definition of $\tilde{V}_{\pi \circ \nu}(s)$:
\begin{equation}
\begin{aligned}
\tilde{V}_{\pi \circ \nu}(s) &= \E_{\pi\circ\nu} \left [ \sum_{k=0}^{\infty} \gamma^k r_{t+k+1} | s_t = s \right ] \\
&= \E_{\pi\circ\nu} \left [ r_{t+1} + \gamma \sum_{k=0}^{\infty} \gamma^{k} r_{t+k+2} | s_t = s \right ] \\
&= \sum_{a \in \mathcal{A}} \pi(a|\nu(s)) \sum_{s^\prime \in \mathcal{S}} p(s^\prime|s,a) \left [ r_{t+1} + \gamma \E_{\pi\circ\nu} \left [ \sum_{k=0}^{\infty} \gamma^k r_{t+k+2} | s_{t+1} = s^\prime \right ] \right ] \\
&= \sum_{a \in \mathcal{A}} \pi(a|\nu(s)) \sum_{s^\prime \in \mathcal{S}} p(s^\prime|s,a) \left [R(s, a, s^\prime) + \gamma \tilde{V}_{\pi \circ \nu}(s^\prime) \right ]
\end{aligned}
\label{eq:tildeV}
\end{equation}
The recursion for $\tilde{Q}_{\pi \circ \nu}(s,a)$ can be derived similarly. Additionally, we note the following useful relationship between $\tilde{V}_{\pi \circ \nu}(s)$ and $\tilde{Q}_{\pi \circ \nu}(s,a)$:
\begin{equation}
\label{eq:q_and_v}
\tilde{V}_{\pi \circ \nu}(s) = \sum_{a \in \mathcal{A}} \pi(a|\nu(s)) \tilde{Q}_{\pi \circ \nu}(s,a)
\end{equation}
\end{proof}

Before starting to prove Theorem~\ref{thm:optimal_adversary}, first we show that finding the optimal adversary $\nu^*$ given a fixed $\pi$ for a SA-MDP can be cast into the problem of finding an optimal policy in a regular MDP. 

\begin{lemma}[Equivalence of finding optimal adversary in SA-MDP and finding optimal policy in MDP]
\label{lemma:equivalence_optimal}
Given an SA-MDP $M=(\mathcal{S},\mathcal{A},B,R,p,\gamma)$ and a fixed policy $\pi$, there exists a MDP $\hat{M} = (\mathcal{S},\hat{\mathcal{A}},\hat{R},\hat{p}, \gamma)$ such that the optimal policy of $\hat{M}$ is the optimal adversary $\nu$ for SA-MDP given the fixed $\pi$.
\end{lemma}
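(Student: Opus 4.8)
The plan is to recast the adversary's optimization as a standard MDP control problem, so that the well-developed theory of optimal policies for MDPs (existence of a deterministic stationary optimum, Bellman optimality) transfers immediately. First I would construct $\hat{M}=(\mathcal{S},\hat{\mathcal{A}},\hat{R},\hat{p},\gamma)$ sharing the same state space as the SA-MDP. The adversary's ``action'' at a true state $s$ is the perturbed observation it hands to the agent, so I would set $\hat{\mathcal{A}}=\mathcal{S}$ and restrict the admissible actions at state $s$ to the perturbation set $B(s)$ (state-dependent action sets are permitted in MDPs). Given that the true state is $s$ and the adversary outputs $s_\nu \in B(s)$, the agent acts by $a \sim \pi(\cdot|s_\nu)$ while the environment transitions from the \emph{true} state $s$; marginalizing over the agent's stochastic action gives the transition kernel $\hat{p}(s'|s,s_\nu)=\sum_{a\in\mathcal{A}}\pi(a|s_\nu)\,p(s'|s,a)$. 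Because the adversary seeks to \emph{minimize} the agent's return, I would negate the reward, defining the expected one-step reward of $\hat{M}$ to be $-\sum_{a}\pi(a|s_\nu)\sum_{s'}p(s'|s,a)R(s,a,s')$.

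Next I would establish the exact correspondence between deterministic stationary policies $\hat\pi:\mathcal{S}\to\hat{\mathcal{A}}$ of $\hat{M}$ and stationary deterministic Markovian adversaries $\nu:\mathcal{S}\to\mathcal{S}$ of the SA-MDP, via the identification $\hat\pi(s)=\nu(s)$. The core computation is to show $\hat{V}_{\hat\pi}(s) = -\tilde{V}_{\pi\circ\nu}(s)$ for every $s$. This follows by writing the Bellman equation for $\hat{M}$ under $\hat\pi$ and comparing it term by term with the Bellman equation of Theorem~\ref{thm:bellman_nu} (taking $\nu=\hat\pi$): once the sign flip and the marginalization over $a$ are accounted for, the two fixed-point equations have identical structure, and since the discounted Bellman operator is a contraction with a unique fixed point, the two value functions must coincide up to sign.

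Finally I would invoke the standard theory of finite discounted MDPs: an optimal policy for $\hat{M}$ exists and can be taken deterministic, stationary, and Markovian, and it maximizes $\hat{V}$. By the identity $\hat{V}_{\hat\pi}=-\tilde{V}_{\pi\circ\nu}$, maximizing $\hat{V}$ over admissible policies is exactly minimizing $\tilde{V}_{\pi\circ\nu}$ over admissible adversaries, so the optimal policy $\hat\pi^*$ of $\hat{M}$ is precisely an optimal adversary $\nu^*$ for the SA-MDP under the fixed $\pi$. As a byproduct this also justifies the footnote claim (referenced as Lemma~\ref{lemma:equivalence_optimal}) that the optimal adversary may be taken deterministic.

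The main obstacle I anticipate is getting the reward and transition bookkeeping exactly right: the agent's action $a$ is an unobserved stochastic intermediate variable, so both $\hat{p}$ and $\hat{R}$ must be obtained by correctly marginalizing $\pi(\cdot|s_\nu)$, and one must confirm that the negated Bellman equation for $\hat{M}$ matches Theorem~\ref{thm:bellman_nu} \emph{identically} rather than only approximately. Handling the per-state action restriction $s_\nu\in B(s)$ cleanly, as a genuine constraint on $\hat{\mathcal{A}}$ at each state rather than a soft penalty, is a secondary technical point but is routine.
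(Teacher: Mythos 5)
Your proposal is correct, and the core reduction is exactly the paper's: set $\hat{\mathcal{A}}=\mathcal{S}$, fold the agent's stochastic action into the transition kernel $\hat{p}(s'|s,s_\nu)=\sum_a \pi(a|s_\nu)p(s'|s,a)$, negate the reward, match the Bellman fixed point against Theorem~\ref{thm:bellman_nu} to get $\hat{V}=-\tilde{V}_{\pi\circ\nu}$, and invoke standard discounted-MDP theory for a deterministic stationary optimum (which, as you note, also yields the footnote's claim that the optimal adversary is deterministic). You diverge from the paper in two pieces of bookkeeping, both defensible and both shorter. First, you encode $\nu(s)\in B(s)$ as a hard, state-dependent admissible action set $\hat{A}_s=B(s)$; the paper instead keeps the full action set $\hat{\mathcal{A}}=\mathcal{S}$ and assigns a large negative constant reward $C$ to actions outside $B(s)$, then proves by contradiction that the optimal policy lies in the restricted class $\mathfrak{N}$ (this requires choosing $C$ below an explicit threshold involving $\min R$, $\max R$, and $\gamma$). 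Your route is cleaner and entirely standard (state-dependent action sets are part of the classical MDP framework), at the cost of implicitly relying on that slightly more general formulation; the paper's penalty route stays within the plainest MDP signature but pays with the $C$-threshold argument. Second, you define the adversary's reward as the expected one-step reward $\hat{R}(s,s_\nu)=-\sum_a\pi(a|s_\nu)\sum_{s'}p(s'|s,a)R(s,a,s')$, whereas the paper, to match its $R:\mathcal{S}\times\mathcal{A}\times\mathcal{S}\to\sR$ signature, defines the reward conditional on the realized next state, which forces the normalized-ratio form $\hat{R}(s,\hat{a},s')=-\frac{\sum_a\pi(a|\hat{a})p(s'|s,a)R(s,a,s')}{\sum_a\pi(a|\hat{a})p(s'|s,a)}$ and a careful conditional-probability derivation. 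The two are equivalent in expectation and yield identical value functions, so your choice sidesteps that derivation without losing anything; your Bellman comparison plus contraction-uniqueness argument then goes through verbatim.
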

\begin{proof}
For an SA-MDP $M=(\mathcal{S},\mathcal{A},B,R,p,\gamma)$ and a fixed policy $\pi$, we define a regular MDP $\hat{M} = (\mathcal{S},\hat{\mathcal{A}},\hat{R},\hat{p}, \gamma)$ such that $\hat{\mathcal{A}}=\mathcal{S}$, and $\nu$ is the policy for $\hat{M}$. 
To prove this lemma, we use a slight extension of a stochastic adversary, where $\nu: \mathcal{S} \rightarrow \mathcal{P}(\hat{A})$.
At each state $s$, our policy $\nu$ gives a probability distribution $\nu(\cdot|s)$ indicating that we perturb a state $s$ to $\hat{s}$ with probability $\nu(\hat{s}|s)$ in the SA-MDP $M$.

For $\hat{M}$, the reward function is defined as:
\begin{equation}
\label{eq:eqv_reward}
\hat{R}(s,\hat{a},s^\prime)=
\begin{cases}
-\frac{\sum_{a\in\mathcal{A}}\pi(a|\hat{a})p(s^\prime|s,a)R(s,a,s^\prime)}{\sum_{a\in\mathcal{A}} \pi(a|\hat{a})p(s^\prime|s, a)} &\text{for }s,s^\prime\in\mathcal{S}\text{ and }\hat{a}\in B(s)\subset\hat{\mathcal{A}}=\mathcal{S},\\
C&\text{for }s,s^\prime\in\mathcal{S}\text{ and }\hat{a}\notin B(s).
\end{cases}\end{equation}
The transition probability $\hat{p}$ is defined as
$$\hat{p}(s^\prime|s,\hat{a})=\sum_{a\in\mathcal{A}}\pi(a|\hat{a})p(s^\prime|s,a)\quad\text{for }s,s^\prime\in\mathcal{S}\text{ and }\hat{a}\in\hat{\mathcal{A}}=\mathcal{S}.$$
For the case of $\hat{a}\in B(s)$, the above reward function definition is based on the intuition that when the agent receives a reward $r$ at a time step given $s, a, s^\prime$, the adversary's reward is $\hat{r}=-r$. Note that we consider $r$ as a random variable given $s, a, s^\prime$. To give the distribution of rewards for adversary $p(\hat{r}|s, \hat{a}, s^\prime)$, we follow the conditional probability which marginalizes $\pi$:
\begin{align*}
    p(\hat{r}|s, \hat{a}, s^\prime) &= \frac{p(\hat{r}, s^\prime|s, \hat{a})}{p(s^\prime|s, \hat{a})}  \\
    &= \frac{\sum_a p(\hat{r}, s^\prime | a, s, \hat{a}) \pi(a|s, \hat{a})}{\sum_a p(s^\prime|a, s, \hat{a}) \pi(a|s, \hat{a})} \\
    &= \frac{\sum_a p(\hat{r}, s^\prime | a, s) \pi(a|\hat{a})}{\sum_a p(s^\prime|a, s) \pi(a|\hat{a})} \\
    &= \frac{\sum_a p(\hat{r} |s^\prime, a, s) p(s^\prime | a, s) \pi(a|\hat{a})}{\sum_a p(s^\prime|a, s) \pi(a|\hat{a})} \numberthis\label{eq:expectation_r}
\end{align*}
Considering that $R(s,a,s^\prime) := \E [r |s^\prime, a, s] = -\E [\hat{r} |s^\prime, a, s]$, and taking an expectation in Eq.~\eqref{eq:expectation_r} over $\hat{r}$ yield the first case in~\eqref{eq:eqv_reward}:
\begin{align*}
\hat{R}(s,\hat{a},s^\prime) & \coloneqq \E [\hat{r} | s, \hat{a}, s^\prime]\\
&= \sum_{\hat{r}} \hat{r} \frac{\sum_a p(\hat{r} |s^\prime, a, s) p(s^\prime | a, s) \pi(a|\hat{a})}{\sum_a p(s^\prime|a, s) \pi(a|\hat{a})}\\
&= \frac{\sum_a \left [ \sum_{\hat{r}} \hat{r} p(\hat{r} |s^\prime, a, s) \right ] p(s^\prime | a, s) \pi(a|\hat{a})}{\sum_a p(s^\prime|a, s) \pi(a|\hat{a})}\\
&= \frac{\sum_a \E [\hat{r} |s^\prime, a, s] p(s^\prime | a, s) \pi(a|\hat{a})}{\sum_a p(s^\prime|a, s) \pi(a|\hat{a})} \\
&= -\frac{\sum_a R(s,a,s^\prime) p(s^\prime | a, s) \pi(a|\hat{a})}{\sum_a p(s^\prime|a, s) \pi(a|\hat{a})} \numberthis\label{eq:adv_reward}
\end{align*}

The reward for adversary's actions outside $B(s)$ is a constant $C$ such that
$$C<\min\big\{-\overline{M},\quad\frac{\gamma}{(1-\gamma)}\underline{M}-\frac{1}{(1-\gamma)}\overline{M}\big\},$$ where $\underline{M}:=\min_{s,a,s^\prime}R(s,a,s^\prime)$ and $\overline{M}:=\max_{s,a,s^\prime}R(s,a,s^\prime)$. We have for $\forall(s, \hat{a},s^\prime)$, $$C<\hat{R}(s,\hat{a},s^\prime)\leq-\underline{M},$$
and for $\forall\hat{a}\in B(s)$, according to Eq.~\eqref{eq:adv_reward},
$$-\overline{M}\leq\hat{R}(s,\hat{a},s^\prime)\leq-\underline{M}.$$

% The total reward of this MDP is
%The probability of the trajectory of this MDP is
%\begin{equation}
 %   P(\hat{\tau})=\prod_{t=0}^\infty\nu(\hat{a}_t|s_t)\hat{p}(s_{t+1}|s_t,\hat{a}_t)=\prod_{t=0}^\infty\nu(\hat{a}_t|s_t)\big[\sum_{a_t\in\mathcal{A}}\pi(a_t|\hat{a}_t)p(s_{t+1}|s_t,a_t)\big]
%\end{equation}
According basic properties of MDP~\citep{puterman2014markov,sutton1998introduction}, we know that the $\hat{M}$ has an optimal policy $\nu^*$, which satisfies
$\hat{V}_{\pi \circ \nu^*}(s)\geq \hat{V}_{\pi \circ \nu}(s)$ for $\forall s$, $\forall \nu$. We also know that this $\nu^*$ is deterministic and assigns a unit mass probability for the optimal action of each $s$. 

We define $\mathfrak{N}:=\{\nu:\ \forall s,\ \exists\hat{a}\in B(s),\ \nu(\hat{a}|s)=1\}$ which restricts the adversary from taking an action not in $B(s)$, and claim that $\nu^*\in \mathfrak{N}$. If this is not true for a state $s^0$, we have 
\begin{align*}
    \hat{V}_{\pi \circ \nu^*}(s^0)&=\E_{\hat{p},\nu^*}\Big[\sum_{k=0}^\infty\gamma^k\hat{r}_{t+k+1}|s_t=s^0\Big]\\
    &=C+\E_{\hat{p},\nu^*} \left [ \sum_{k=1}^{\infty} \gamma^k \hat{r}_{t+k+1} | s_t = s^0 \right ]\\
    &\leq C-\frac{\gamma}{1-\gamma}\underline{M}\\
    &<-\frac{1}{1-\gamma}\overline{M}\\
    &\leq \E_{\hat{p},\nu^\prime}\Big[\sum_{k=0}^\infty\gamma^k\hat{r}_{t+k+1}|s_t=s^0\Big]=\hat{V}_{\pi \circ \nu^\prime}(s^0),
\end{align*}
where the second equality holds because $\nu^*$ is deterministic, and the last inequality holds for any $\nu^\prime\in\mathfrak{N}$. This contradicts the assumption that $\nu^*$ is optimal. So from now on in this proof we only study policies in $\mathfrak{N}.$

For any policy $\nu\in \mathfrak{N}$ :
\begin{align*}
    \hat{V}_{\pi \circ \nu}(s)&=\E_{\hat{p},\nu}\Big[\sum_{k=0}^\infty\gamma^k\hat{r}_{t+k+1}|s_t=s\Big]\\
    &=\E_{\hat{p},\nu} \left [ \hat{r}_{t+1} + \gamma \sum_{k=0}^{\infty} \gamma^k \hat{r}_{t+k+2} | s_t = s \right ]\\
    &=\sum_{\hat{a} \in \mathcal{S}} \nu(\hat{a}|s) \sum_{s^\prime \in \mathcal{S}} \hat{p}(s^\prime|s,\hat{a}) \left [ \hat{R}(s,\hat{a},s^\prime) + \gamma \E_{\hat{p},\nu} \left [ \sum_{k=0}^{\infty} \gamma^k \hat{r}_{t+k+2} | s_{t+1} = s^\prime \right ] \right ] \\
    &=\sum_{\hat{a} \in \mathcal{S}} \nu(\hat{a}|s) \sum_{s^\prime \in \mathcal{S}} \hat{p}(s^\prime|s,\hat{a}) \left [ \hat{R}(s,\hat{a},s^\prime) + \gamma \hat{V}_{\pi \circ \nu}(s^\prime) \right ]\numberthis
    \end{align*}

 Note that all policies in $\mathfrak{N}$ are deterministic and this class of policies consists $\nu^*$. Also, $\mathfrak{N}$ is consistent with the class of policies studied in Theorem~\ref{thm:bellman_nu}. We denote the deterministic action $\hat{a}$ chosen by a $\nu\in \mathfrak{N}$ at $s$ as $\nu(s)$. Then for $\forall \nu\in\mathfrak{N}$, we have 
\begin{align*}
\hat{V}_{\pi \circ \nu}(s)&=\sum_{s^\prime \in \mathcal{S}} \hat{p}(s^\prime|s,\nu(s)) \left [ \hat{R}(s,\hat{a},s^\prime) + \gamma \hat{V}_{\pi \circ \nu}(s^\prime) \right ] \\
&=\sum_{s^\prime \in \mathcal{S}} \sum_{a\in\mathcal{A}}\pi(a|\hat{a})p(s^\prime|s,a) \left [ -\frac{\sum_{a\in\mathcal{A}}\pi(a|\hat{a})p(s^\prime|s,a)R(s,a,s^\prime)}{\sum_{a\in\mathcal{A}} \pi(a|\hat{a})p(s^\prime|s, a)} + \gamma \hat{V}_{\pi \circ \nu}(s^\prime) \right ]\\
&=\sum_{a\in\mathcal{A}}\pi(a|\nu(s))\sum_{s^\prime \in \mathcal{S}}p(s^\prime|s,a)\left [-R(s,a,s^\prime) + \gamma \hat{V}_{\pi \circ \nu}(s^\prime) \right ],\numberthis
\end{align*}or
\begin{align}
    -\hat{V}_{\pi \circ \nu}(s)&= \sum_{a\in\mathcal{A}}\pi(a|\nu(s))\sum_{s^\prime \in \mathcal{S}}p(s^\prime|s,a)\left [R(s,a,s^\prime) + \gamma (-\hat{V}_{\pi \circ \nu}(s^\prime)) \right ].\label{eq:hatV}
\end{align}
Comparing \eqref{eq:hatV} and \eqref{eq:tildeV}, we know that $-\hat{V}_{\pi \circ \nu}=\tilde{V}_{\pi \circ \nu}$ for any $\nu\in\mathfrak{N}$. The optimal value function $\hat{V}_{\pi \circ \nu^*}$ satisfies:
\begin{align*}
    \hat{V}_{\pi \circ \nu^*}(s)&=\max_{\hat{a}\in B(s)}\sum_{s^\prime \in \mathcal{S}} \hat{p}(s^\prime|s,\hat{a}) \left [ \hat{R}(s,\hat{a},s^\prime) + \gamma \hat{V}_{\pi \circ \nu}(s^\prime) \right ]\\
    &=\max_{s_\nu\in B(s)}\sum_{a\in\mathcal{A}}\pi(a|s_\nu)\sum_{s^\prime \in \mathcal{S}}p(s^\prime|s,a)\left [- R(s,a,s^\prime) + \gamma \hat{V}_{\pi \circ \nu^*}(s^\prime) \right ],\numberthis
\end{align*}
where we denote the action $\hat{a}$ taken at $s$ as $s_\nu$. So for $\nu^*$, since $-\hat{V}_{\pi \circ \nu^*}=\tilde{V}_{\pi \circ \nu^*}$, we have 
\begin{align}
    \tilde{V}_{\pi \circ \nu^*}(s)=\min_{\hat{a}\in B(s)}\sum_{a\in\mathcal{A}}\pi(a|\hat{a})\sum_{s^\prime \in \mathcal{S}}p(s^\prime|s,a)\left [ R(s,a,s^\prime) + \gamma \tilde{V}_{\pi \circ \nu^*}(s^\prime) \right ],
\end{align}
and $\tilde{V}_{\pi \circ \nu^*}(s)\leq \tilde{V}_{\pi \circ \nu}(s)$ for $\forall s$, $\forall \nu\in\mathfrak{N}$. Hence $\nu^*$ is also the optimal $\nu$ for $\tilde{V}_{\pi \circ \nu}$.
\end{proof}

Lemma~\ref{lemma:equivalence_optimal} gives many good properties for the optimal adversary. First, an optimal adversary always exists under the regularity conditions where an optimal policy exists for a MDP. Second, we do not need to consider stochastic adversaries as there always exists an optimal deterministic adversary. Additionally, showing Bellman contraction for finding the optimal adversary can be done similarly as in obtaining the optimal policy in a regular MDP, as shown in the proof of Theorem~\ref{thm:optimal_adversary}.

\reptext{theorem_bellman_optimal}
\begin{proof}
Based on Lemma~\ref{lemma:equivalence_optimal}, this proof is technically similar to the proof of ``optimal Bellman equation'' in regular MDPs, where $\max$ over $\pi$ is replaced by $\min$ over $\nu$. By the definition of $\tilde{V}_{\pi \circ \nu^*}(s)$,
\begin{align*}
\tilde{V}_{\pi \circ \nu^*}(s) &= \min_{\nu} \tilde{V}_{\pi \circ \nu}(s) \\
&=\min_{\nu} \E_{\pi\circ\nu} \left [ \sum_{k=0}^{\infty} \gamma^k r_{t+k+1} | s_t = s \right ] \\
&=\min_{\nu} \E_{\pi\circ\nu} \left [ r_{t+1} + \gamma \sum_{k=0}^{\infty} \gamma^k r_{t+k+2} | s_t = s \right ] \\
&=\min_{\nu}\sum_{a \in \mathcal{A}} \pi(a|\nu(s)) \sum_{s^\prime \in \mathcal{S}} p(s^\prime|s,a) \left [ r_{t+1} + \gamma \E_{\pi\circ\nu} \left [ \sum_{k=0}^{\infty} \gamma^k r_{t+k+2} | s_{t+1} = s^\prime \right ] \right ] \\
&=\min_{s_\nu \in B_\nu(s)}\sum_{a \in \mathcal{A}} \pi(a|s_\nu) \sum_{s^\prime \in \mathcal{S}} p(s^\prime|s,a) \left [ r_{t+1} + \gamma \min_{\nu} \E_{\pi\circ\nu} \left [ \sum_{k=0}^{\infty} \gamma^k r_{t+k+2} | s_{t+1} = s^\prime \right ] \right ] \\
&=\min_{s_\nu \in B_\nu(s)}\sum_{a \in \mathcal{A}} \pi(a|s_\nu) \sum_{s^\prime \in \mathcal{S}} p(s^\prime|s,a) \left [ r_{t+1} + \gamma \tilde{V}_{\pi \circ \nu^*}(s^\prime) \right ]
\end{align*}
This is the Bellman equation for the optimal adversary $\nu^*$; $\nu^*$ is a fixed point of the Bellman operator $\mathscr{L}$.

Now we show the Bellman operator is a contraction. We have, if $\mathscr{L}\tilde{V}_{\pi \circ \nu_1}(s)\geq\mathscr{L}\tilde{V}_{\pi \circ \nu_2}(s)$,
\begin{align*}
&\mathscr{L}\tilde{V}_{\pi \circ \nu_1}(s)-\mathscr{L}\tilde{V}_{\pi \circ \nu_2}(s)\\
&\leq\max_{s_\nu\in B_\nu(s)}\Big\{\sum_{a \in \mathcal{A}} \pi(a|s_\nu) \sum_{s^\prime \in \mathcal{S}} p(s^\prime|s,a) \left [R(s, a, s^\prime) + \gamma \tilde{V}_{\pi\circ\nu_1}(s^\prime) \right ]\\
&-\sum_{a \in \mathcal{A}} \pi(a|s_\nu) \sum_{s^\prime \in \mathcal{S}} p(s^\prime|s,a) \left [R(s, a, s^\prime) + \gamma \tilde{V}_{\pi\circ\nu_2}(s^\prime) \right ]\Big\}\\
&=\gamma\max_{s_\nu\in B_\nu(s)}\sum_{a \in \mathcal{A}} \pi(a|s_\nu) \sum_{s^\prime \in \mathcal{S}} p(s^\prime|s,a)[\tilde{V}_{\pi\circ\nu_1}(s^\prime)-\tilde{V}_{\pi\circ\nu_2}(s^\prime)]\\
&\leq\gamma\max_{s_\nu\in B_\nu(s)}\sum_{a \in \mathcal{A}} \pi(a|s_\nu) \sum_{s^\prime \in \mathcal{S}} p(s^\prime|s,a)\lVert\tilde{V}_{\pi\circ\nu_1}-\tilde{V}_{\pi\circ\nu_2}\rVert_\infty\\
&=\gamma\lVert\tilde{V}_{\pi\circ\nu_1}-\tilde{V}_{\pi\circ\nu_2}\rVert_\infty
\end{align*}
The first inequality comes from the fact that $$\min_{x_1} f(x_1) - \min_{x_2} g(x_2)\leq f(x_2^*)-g(x_2^*) \leq \max_x (f(x) - g(x)),$$
where $x_2^*=\argmin_{x_2}g(x_2)$. Similarly, we can prove $\mathscr{L}\tilde{V}_{\pi \circ \nu_2}(s)-\mathscr{L}\tilde{V}_{\pi \circ \nu_1}(s)\leq \lVert\tilde{V}_{\pi\circ\nu_1}-\tilde{V}_{\pi\circ\nu_2}\rVert_\infty$ if $\mathscr{L}\tilde{V}_{\pi \circ \nu_2}(s)>\mathscr{L}\tilde{V}_{\pi \circ \nu_1}(s).$ Hence $$\lVert\mathscr{L}\tilde{V}_{\pi \circ \nu_1}(s)-\mathscr{L}\tilde{V}_{\pi \circ \nu_2}(s)\rVert_\infty=\max_s|\mathscr{L}\tilde{V}_{\pi \circ \nu_1}(s)-\mathscr{L}\tilde{V}_{\pi \circ \nu_2}(s)|\leq\gamma\lVert\tilde{V}_{\pi\circ\nu_1}-\tilde{V}_{\pi\circ\nu_2}\rVert_\infty.$$ Then according to the Banach fixed-point theorem, since $0<\gamma<1$, $\tilde{V}_{\pi \circ \nu}$ converges to a unique fixed point, and this fixed point is $\tilde{V}_{\pi \circ \nu^*}$.

\end{proof}

\begin{algorithm}[htbp]
\caption{Policy Evaluation for an SA-MDP $(\mathcal{S},\mathcal{A},B,R,p,\gamma)$}
\label{alg:policy_nu_evaluation}
\begin{algorithmic}
\REQUIRE{Policy $\pi$, convergence threshold $\varepsilon$}
\ENSURE{Values for policy $\pi$, detnoted as $\tilde V_{\pi \circ \nu^*}(s)$}
\STATE{Initialize array $V(s) \leftarrow 0$ for all $s \in \mathcal{S}$}
\REPEAT{} 
\STATE{$\Delta \leftarrow 0$}
\FORALL{$s \in \mathcal{S}$}
\STATE{$v \leftarrow \infty, v_0 \leftarrow V(s)$}
\FORALL{$s_\nu \in B(s)$}
\STATE{$v^\prime \leftarrow \sum_{a \in \mathcal{A}} \pi(a|s_\nu) \sum_{s^\prime \in \mathcal{S}} p(s^\prime|s,a) \cdot\left [R(s, a, s^\prime) + \gamma {V}(s^\prime) \right ]$}
\STATE{$v \leftarrow \min(v, v^\prime)$}
\ENDFOR
\STATE{$V(s) \leftarrow v$}
\STATE{$\Delta \leftarrow \max(\Delta, |v_0 - V(s)|)$}
\ENDFOR
\UNTIL{$\Delta < \varepsilon$}
\STATE{$\tilde V_{\pi \circ \nu^*}(s)\leftarrow V(s)$}
\end{algorithmic}
\end{algorithm}

A direct consequence of Theorem~\ref{thm:optimal_adversary} is the policy evaluation algorithm (Algorithm~\ref{alg:policy_nu_evaluation}) for SA-MDP, which obtains the values for each state under \emph{optimal} adversary for a fixed policy $\pi$. 
For both Lemma~\ref{lemma:equivalence_optimal} and Theorem~\ref{thm:optimal_adversary}, we only consider a fixed policy $\pi$, and in this setting finding an optimal adversary is not difficult. However, finding an optimal $\pi$ under the optimal adversary is more challenging, as we can see in Section~\ref{sec:example_mdp}, given the white-box attack setting where the adversary knows $\pi$ and can choose optimal perturbations accordingly, an optimal policy for MDP can only receive zero rewards under optimal adversary. We now show two intriguing properties for optimal policies in SA-MDP:

\reptext{theorem_stochastic_better}
\begin{proof}
Proof by giving a counter example that no deterministic policy can be better than a random policy. The SA-MDP example in section~\ref{sec:example_mdp} provided such a counter example: all 8 possible deterministic policies are no better than the stochastic policy $p_{11}=p_{21}=p_{31}=0.5$.
\end{proof}

% The fact that some sometimes a stochastic policy is better under the SA-MDP setting can be related to the setting of 2-player games, where randomized decisions can improve the overall value of a game (see e.g., Section 9.3.3 of \citep{lavalle2006planning}). We leave further explorations on this connection as a future work.

\reptext{theorem_non_optimal}
\begin{proof}
We will show that the SA-MDP example in section~\ref{sec:example_mdp} does not have an optimal policy. First, for $\pi_1$ where $p_{11}=p_{21}=p_{31}=1$ we have $\tilde V_{\pi_1 \circ \nu^*(\pi_1)}(S_1)=0, \tilde V_{\pi_1 \circ \nu^*(\pi_1)}(S_2)=\tilde V_{\pi_1 \circ \nu^*(\pi_1)}(S_3)=100$. This policy is not an optimal policy since we have $\pi_2$ where $p_{11}=p_{21}=p_{31}=0.5$ that can achieve $\tilde V_{\pi_2 \circ \nu^*(\pi_2)}(S_1)=\tilde V_{\pi_2  \circ \nu^*(\pi_2)}(S_2)=\tilde V_{\pi_2  \circ \nu^*(\pi_2)}(S_3)=50$ and $\tilde V_{\pi_2  \circ \nu^*(\pi_2)}(S_1) > \tilde V_{\pi_1 \circ \nu^*(\pi_1)}(S_1)$.

An optimal policy $\pi$, if exists, must be better than $\pi_1$ and have $\tilde V_{\pi \circ \nu^*(\pi)}(S_1)>0, V_{\pi \circ \nu^*(\pi)}(S_2)=V_{\pi \circ \nu^*(\pi)}(S_3)=100$. In order to achieve $V_{\pi \circ \nu^*(\pi)}(S_2)=V_{\pi \circ \nu^*(\pi)}(S_3)=100$, we must set $p_{21}=p_{31}=1$ since it is the only possible way to start from $S_2$ and $S_3$ and receive +1 reward for every step. We can still change $p_{11}$ to probabilities other than 1, however if $p_{11}<1$ the adversary can set $\nu(S_2)=\nu(S_3)=S_1$ and reduce $V_{\pi \circ \nu^*(\pi)}(S_2)$ and $V_{\pi \circ \nu^*(\pi)}(S_3)$. Thus, no policy better than $\pi_1$ exists, and since $\pi_1$ is not an optimal policy, no optimal policy exists.
\end{proof}

Theorem~\ref{eq:stochastic_can_be_better} and Theorem~\ref{eq:non_exist_optimal} show that the classic definition of optimality is probably not suitable for SA-MDP. Further works can study how to obtain optimal policies for SA-MDP under some alternative definition of optimality, or using a more complex policy class (e.g., history dependent policies).

%\reptext{theorem_policy_diff}
\begin{theorem}
% \label{thm:optimal_distance}
\label{thm:optimal_distance}
Given a policy $\pi$ for a non-adversarial MDP and its value function is $V_\pi(s)$. Under the optimal adversary $\nu$ in SA-MDP, for all $s \in \mathcal{S}$ we have
\begin{equation}
\label{eq:difference_tv}
\max_{s\in\mathcal{S}}\big\{V_\pi(s) - \tilde{V}_{\pi \circ \nu^* (\pi)}(s) \big\}\leq \alpha \max_{s\in\mathcal{S}}\max_{\hat{s} \in B(s)} \mathrm{D}_{\mathrm{TV}}(\pi(\cdot|s),\pi(\cdot|\hat{s}))
\end{equation}
where $\mathrm{D}_{\mathrm{TV}}(\pi(\cdot|s),\pi(\cdot|\hat{s}))$ is the total variation distance between $\pi(\cdot|s)$ and $\pi(\cdot|\hat{s})$, and $\alpha:=2[1+\frac{\gamma}{(1-\gamma)^2}]\max_{(s,a,s^\prime)\in \mathcal{S}\times\mathcal{A}\times\mathcal{S}}|R(s,a,s^\prime)|$ is a constant that does not depend on $\pi$.
%(detailed in Appendix). \textcolor{red}{We still need a proof for this!}
\end{theorem}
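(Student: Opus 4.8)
The plan is to reduce the statement to a comparison between two ordinary MDP policies and then invoke the policy-perturbation bound of \citet{achiam2017constrained}. The crucial first observation is that, by Lemma~\ref{lemma:equivalence_optimal}, the optimal adversary $\nu^*(\pi)$ is deterministic and stationary, so the composed map $\pi'(a|s) := \pi(a|\nu^*(s))$ is itself a legitimate stationary Markovian policy on the original (non-adversarial) MDP, with the same state and action spaces and the same transition kernel $p$. By the ``merging'' argument behind Theorem~\ref{thm:bellman_nu}, the SA-MDP value of $\pi$ under the fixed adversary $\nu^*$ coincides with the ordinary MDP value of this merged policy: $\tilde{V}_{\pi \circ \nu^*(\pi)}(s) = V_{\pi'}(s)$ for all $s$. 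Hence the left-hand side of~\eqref{eq:difference_tv} equals $\max_{s}\{V_\pi(s) - V_{\pi'}(s)\}$, a gap between two genuine policies, and the entire adversarial structure has been absorbed into $\pi'$.

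Second, I would apply the value-difference bound from \citet{achiam2017constrained}, which controls $V_\pi(s_0) - V_{\pi'}(s_0)$ for any fixed start state $s_0$ by a sum of two terms: a first-order term $-\mathbb{E}_{s \sim d^\pi_{s_0},\, a \sim \pi,\, s' \sim p}[(\pi'(a|s)/\pi(a|s) - 1)R(s,a,s')]$ and a second-order correction $\frac{2\gamma}{(1-\gamma)^2}\max_s\{\mathbb{E}_{a\sim\pi',\, s'}[R(s,a,s')]\}\,\mathbb{E}_{s\sim d^\pi_{s_0}}[\mathrm{D}_{\mathrm{TV}}(\pi(\cdot|s),\pi'(\cdot|s))]$, where $d^\pi_{s_0}$ is the discounted state-visitation distribution of $\pi$ started at $s_0$. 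Because $\pi'$ is an ordinary Markovian policy on the same MDP, this inequality applies verbatim with no modification.

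Third, I would bound each term by $\max|R|$ times a total-variation quantity. For the first term, fixing $s$ and summing over $a$ collapses the importance ratio into $\sum_a (\pi'(a|s)-\pi(a|s))\,\mathbb{E}_{s'}[R(s,a,s')]$, which by H\"older's inequality and the identity $\|\pi'(\cdot|s)-\pi(\cdot|s)\|_1 = 2\,\mathrm{D}_{\mathrm{TV}}(\pi(\cdot|s),\pi'(\cdot|s))$ is bounded in absolute value by $2\max_{s,a,s'}|R(s,a,s')|\,\mathrm{D}_{\mathrm{TV}}(\pi(\cdot|s),\pi'(\cdot|s))$; taking the expectation over $s$ and passing to the worst state gives $2\max|R|\,\max_s \mathrm{D}_{\mathrm{TV}}(\pi(\cdot|s),\pi'(\cdot|s))$. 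For the second term I bound $\max_s\{\mathbb{E}_{a\sim\pi',\, s'}[R]\} \le \max|R|$ and $\mathbb{E}_{s\sim d^\pi_{s_0}}[\mathrm{D}_{\mathrm{TV}}] \le \max_s \mathrm{D}_{\mathrm{TV}}$, yielding $\frac{2\gamma}{(1-\gamma)^2}\max|R|\,\max_s \mathrm{D}_{\mathrm{TV}}$. Adding the two contributions factors exactly into $\alpha\,\max_s \mathrm{D}_{\mathrm{TV}}(\pi(\cdot|s),\pi'(\cdot|s))$ with $\alpha = 2[1+\frac{\gamma}{(1-\gamma)^2}]\max|R|$, matching the stated constant. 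Finally, since $\pi'(\cdot|s)=\pi(\cdot|\nu^*(s))$ and $\nu^*(s)\in B(s)$, each $\mathrm{D}_{\mathrm{TV}}(\pi(\cdot|s),\pi'(\cdot|s)) \le \max_{\hat{s}\in B(s)}\mathrm{D}_{\mathrm{TV}}(\pi(\cdot|s),\pi(\cdot|\hat{s}))$, and after the outer $\max_s$ this produces the right-hand side of~\eqref{eq:difference_tv}. As the per-start-state bound is independent of $s_0$ on the right, taking $\max_{s_0}$ on the left completes the argument.

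The main obstacle I anticipate is not the algebra but justifying the reduction cleanly: I must ensure that $\pi \circ \nu^*$ is truly admissible as the perturbed policy $\pi'$ in the \citet{achiam2017constrained} inequality, i.e. a \emph{stationary Markovian} policy on the same spaces and dynamics, which is precisely what Lemma~\ref{lemma:equivalence_optimal} guarantees; without determinism and stationarity of $\nu^*$ the composed object would not be a policy and the bound could not be invoked. A secondary care point is the asymmetry of the cited inequality: $\pi$ is the reference policy whose visitation measure $d^\pi_{s_0}$ appears while $\pi'$ is the perturbed one, so the reduction must be oriented so that the visitation distribution is the one induced by $\pi$, matching the exact form of the bound.
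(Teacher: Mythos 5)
Your proposal is correct and follows essentially the same route as the paper's own proof: invoke Theorem~1 of \citet{achiam2017constrained} with $\pi'(\cdot|s) := \pi(\cdot|\nu^*(s))$, bound the first-order term by $2\max|R|\,\max_s \mathrm{D}_{\mathrm{TV}}$ and the second by $\frac{2\gamma}{(1-\gamma)^2}\max|R|\,\max_s \mathrm{D}_{\mathrm{TV}}$, and finish using $\nu^*(s)\in B(s)$. If anything, you are more explicit than the paper in justifying the reduction — citing Lemma~\ref{lemma:equivalence_optimal} for the determinism and stationarity of $\nu^*$ and Theorem~\ref{thm:bellman_nu} for the identity $\tilde{V}_{\pi\circ\nu^*}=V_{\pi'}$ — steps the paper compresses into ``we simply let $\pi^\prime(\cdot|s):=\pi(\cdot|\nu^*(s))$.''
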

\begin{proof}
Our proof is based on Theorem 1 in~\citet{achiam2017constrained}. In fact, many works in the literature have proved similar results under different scenarios~\cite{kakade2002approximately,pirotta2013safe}. For an arbitrary starting state $s_0$ and two arbitrary policies $\pi$ and $\pi^\prime$, Theorem 1 in~\citet{achiam2017constrained} gives
an upper bound of $V_\pi(s_0)-V_{\pi^\prime}(s_0)$. The bound is given by
\begin{align}
\begin{split}
V_\pi(s_0)-V_{\pi^\prime}(s_0)&\leq-\mathbb{E}_{\substack{s\sim d^\pi_{s_0}\\a\sim\pi(\cdot|s)\\s'\sim p(\cdot|a,s)}}\Big[\big(\frac{\pi^\prime(a|s)}{\pi(a|s)}-1\big)R(s,a,s^\prime)\Big]\\&+\frac{2\gamma}{(1-\gamma)^2}\max_s\Big\{\mathbb{E}_{\substack{a\sim\pi^\prime(\cdot|s)\\s^\prime\sim p(\cdot|a,s)}}\big[R(s,a,s^\prime)\big]\Big\}\mathbb{E}_{s\sim d_{s_0}^\pi}\big[\mathrm{D}_{TV}(\pi(\cdot|s),\pi^\prime(\cdot|s))\big],
\label{eq:cite_bound}
\end{split}
\end{align}
where $d_{s_0}^\pi$ is the discounted future state distribution from $s_0$, defined as
\begin{align}
    d_{s_0}^\pi(s):=(1-\gamma)\sum_{t=0}^\infty\gamma^t\mathrm{Pr}(s_t=s|\pi,s_0).
\end{align}
%The discounted future state distribution $d_{s_0}^\pi(s)$ has the following property (see~\citet{achiam2017constrained} for detailed proof):
%\begin{align}d^\pi_{s_0}=(1-\gamma)(\bm I-\gamma \bm P_\pi)^{-1}\mu_{s_0},\label{eq:d_property}\end{align}

%where $\bm P_\pi\in\mathbb{R}^{|\mathcal{S}|\times|\mathcal{S}|}$ is the transition matrix of $\pi$: $\bm P_\pi(s^\prime,s):=\mathrm{Pr}(s^\prime|s)=\sum_a\mathrm{Pr}(s^\prime|a,s)\pi(a|s)$ and $\mu_{s_0}$ is a one-hot vector, whose only element with value 1 is at $s_0$, denoting the initial distribution.

Note that in Theorem 1 of~\citet{achiam2017constrained}, the author proved a general form with an arbitrary function $f$ and we assume $f\equiv 0$ in our proof. We also assume the starting state is deterministic, so $J^\pi$ in~\citet{achiam2017constrained} is replaced by $V^\pi(s_0).$ Then we simply need to bound both terms on the right hand side of~\eqref{eq:cite_bound}.

For the first term we know that 
\begin{align}
\begin{split}
    -\mathbb{E}_{\substack{s\sim d^\pi_{s_0}\\a\sim\pi(\cdot|s)\\s'\sim p(\cdot|a,s)}}\Big[\big(\frac{\pi^\prime(a|s)}{\pi(a|s)}-1\big)R(s,a,s^\prime)\Big]&=\sum_{s}d_{s_0}^\pi(s)\sum_a\big[\pi(a|s)-\pi^\prime(a|s)\big]\sum_{s^\prime}p(s^\prime|s,a)R(s,a,s^\prime)\\
    &\leq \sum_{s}d_{s_0}^\pi(s)\sum_{a}\big|\pi(a|s)-\pi^\prime(a|s)\big|\big|\sum_{s^\prime}p(s^\prime|s,a)R(s,a,s^\prime)\big|\\
    &\leq \max_{s,a,s^\prime}|R(s,a,s^\prime)|\max_s\big\{\sum_{a}\big|\pi(a|s)-\pi^\prime(a|s)\big|\big\}\\
    &=2\max_{s,a,s^\prime}|R(s,a,s^\prime)|\max_s\mathrm{D}_{TV}(\pi(\cdot|s),\pi^\prime(\cdot|s))
\end{split}
\end{align}

%For the second term, we know that 
%\begin{align} \lVert d^{\pi^\prime}_{s_0}-d^\pi_{s_0}\rVert_1\leq 2|\mathcal{S}|.\end{align}
%Hence we only need to bound this term when  $\mathrm{D}_{TV}(\pi(\cdot|s),\pi^\prime(\cdot|s))$ is very small. Note that by~\eqref{eq:d_property},
%\begin{align}
%\begin{split}
    %d^{\pi^\prime}_{s_0}-d^\pi_{s_0}&=(1-\gamma)[(\bm I-\gamma \bm P_{\pi^\prime})^{-1}-(\bm I-\gamma \bm P_\pi)^{-1}]\mu_{s_0}\\
    %&=\gamma(1-\gamma)(\bm I-\gamma \bm P_{\pi^\prime})^{-1}(\bm P_{\pi^\prime}-\bm P_{\pi})(\bm I-\gamma \bm P_{\pi})^{-1}\mu_{s_0}.
    %\end{split}
%\end{align}
%We also know that \begin{align}
%(\bm I-\gamma \bm P_{\pi^\prime})^{-1}=\big(\bm I-\gamma (\bm I-\gamma \bm P_\pi)^{-1}(\bm P_{\pi^\prime}-\bm P_{\pi})\big)^{-1}(\bm I-\gamma \bm P_{\pi})^{-1}.
%\end{align}
%For $\bm P_{\pi^\prime}-\bm P_{\pi}$, we have
%\begin{align}
    %|\bm P_{\pi^\prime}(s^\prime,s)-\bm P_{\pi}(s^\prime,s)|&=\Big|\sum_a \mathrm{Pr}(s^\prime|s,a)[\pi^\prime(a|s)-\pi(a|s)]\Big|\\
    %&\leq \sum_a\big|\pi^\prime(a|s)-\pi(a|s)\big|\\
    %&=2\mathrm{D}_{TV}(\pi(\cdot|s),\pi^\prime(\cdot|s)).
%\end{align}
%So when $\mathrm{D}_{TV}(\pi(\cdot|s),\pi^\prime(\cdot|s))$ is small, $(\bm P_{\pi^\prime}-\bm P_{\pi})$ is bounded by $\mathrm{D}_{TV}(\pi(\cdot|s),\pi^\prime(\cdot|s))$ with a constant factor and $(\bm I-\gamma \bm P_{\pi^\prime})^{-1}$ is bounded. Thus, $\lVert d^{\pi^\prime}_{s_0}-d^\pi_{s_0}\rVert_1$ is bounded by $\mathrm{D}_{TV}(\pi(\cdot|s),\pi^\prime(\cdot|s))$ with a constant factor. 
The second term is bounded by 
\begin{align}
\begin{split}
\frac{2\gamma}{(1-\gamma)^2}\max_s\Big\{\mathbb{E}_{\substack{a\sim\pi^\prime(\cdot|s)\\s^\prime\sim p(\cdot|a,s)}}\big[R(s,a,s^\prime)\big]\Big\}\mathbb{E}_{s\sim d_{s_0}^\pi}\big[\mathrm{D}_{TV}(\pi(\cdot|s),\pi^\prime(\cdot|s))\big]&\\\leq\frac{2\gamma}{(1-\gamma)^2}\max_{s,a,s^\prime}|R(s,a,s^\prime)|
\max_s\mathrm{D}_{TV}(\pi(\cdot|s),\pi^\prime(\cdot|s))&
\end{split}
\end{align} 
Therefore, the RHS of~\eqref{eq:cite_bound} is bounded by $\alpha\max_s\mathrm{D}_{TV}(\pi(\cdot|s),\pi^\prime(\cdot|s))$, where
\begin{align}
    \alpha=2[1+\frac{\gamma}{(1-\gamma)^2}]\max_{s,a,s^\prime}|R(s,a,s^\prime)|
\end{align}

Finally, we simply let $\pi^\prime(\cdot|s):=\pi(\cdot|\nu^*(s))$ and the proof is complete.
\end{proof}

Before proving Theorem~\ref{thm:policy_distance} we first give a technical lemma about the total variation distance between two multi-variate Gaussian distributions with the same variance.

\begin{lemma}
\label{lemma:normal_distance}
Given two multi-variate Gaussian distributions $X_1 \sim \mathcal{N}(\mu_1, \sigma^2 I_n)$ and $X_2 \sim \mathcal{N}(\mu_2, \sigma^2 I_n)$, $\mu_1, \mu_2 \in \R^n$, define $d = \|\mu_2 - \mu_1\|_2$. We have $\mathrm{D}_{TV}(X_1, X_2) = \sqrt{\frac{2}{\pi}}\frac{d}{\sigma} + O(d^3)$.
\end{lemma}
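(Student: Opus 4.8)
The plan is to exploit the fact that both Gaussians share the same isotropic covariance $\sigma^2 I_n$, which lets me collapse the $n$-dimensional total variation distance to a one-dimensional computation. The key observation is that the log-likelihood ratio $\log\frac{p_1(x)}{p_2(x)}$ is an \emph{affine} function of $x$: the quadratic terms in the two exponents cancel, leaving $\frac{1}{\sigma^2}(\mu_2-\mu_1)^\top x + c$ for a constant $c$. Consequently the set $\{x : p_1(x) \ge p_2(x)\}$ on which the optimal total-variation test is attained is a half-space whose boundary is orthogonal to $w := (\mu_2-\mu_1)/d$. Projecting onto $w$, the scalar $Z = w^\top X$ is $\mathcal{N}(w^\top\mu_1,\sigma^2)$ under $X_1$ and $\mathcal{N}(w^\top\mu_2,\sigma^2)$ under $X_2$, with mean gap $w^\top(\mu_2-\mu_1)=d$. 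Hence $\mathrm{D}_{TV}(X_1,X_2)$ equals \emph{exactly} the one-dimensional total variation distance between $\mathcal{N}(0,\sigma^2)$ and $\mathcal{N}(d,\sigma^2)$. (Equivalently, one applies the isometry sending $\mu_1\mapsto 0$ and $\mu_2\mapsto d\,e_1$, under which TV is invariant, and notes that the isotropic density factors, so the $n-1$ coordinates orthogonal to $e_1$ are identical and cancel out of $\int|p_1-p_2|$.)

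Next I would compute the one-dimensional distance in closed form. The densities of $\mathcal{N}(0,\sigma^2)$ and $\mathcal{N}(d,\sigma^2)$ cross exactly at the midpoint $x=d/2$, so integrating their difference on each side of the midpoint gives $\mathrm{D}_{TV} = 2\Phi\!\left(\frac{d}{2\sigma}\right)-1$, where $\Phi$ is the standard normal CDF (equivalently $\mathrm{erf}(d/(2\sqrt{2}\sigma))$). Finally I would Taylor-expand about $d=0$. Writing $u=d/(2\sigma)$ and using $\Phi(u)=\tfrac12+\phi(0)u+O(u^3)$ with $\phi(0)=1/\sqrt{2\pi}$, the function $2\Phi(u)-1$ is odd in $u$, so its expansion contains only odd powers; the quadratic term vanishes and the first correction is cubic. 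Substituting $u=d/(2\sigma)$ then produces a leading term linear in $d/\sigma$, with coefficient $\sqrt{2/\pi}$ as claimed, together with an $O(d^3)$ remainder.

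I expect the one-dimensional reduction to be the only step requiring real care: one must argue that the multivariate TV is \emph{equal} to, not merely bounded by, the one-dimensional quantity, which is precisely what the affine-log-likelihood-ratio (half-space) argument delivers. The remaining pieces are routine: the crossing point $x=d/2$ follows from solving $p_1(x)=p_2(x)$, and the $O(d^3)$ error is guaranteed by the oddness of $2\Phi(u)-1$, which kills the potential $O(d^2)$ term. A minor point worth stating explicitly is that both the reduction and the expansion are uniform in the ambient dimension $n$, so the constant hidden in the $O(d^3)$ term does not depend on $n$.
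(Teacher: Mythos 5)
Your reduction to one dimension is sound, and it is essentially the same geometric fact the paper uses: the paper's proof also identifies the set $\{f_1 \ge f_2\}$ as the half-space bounded by the perpendicular bisector of $\mu_1,\mu_2$ (arguing by symmetry of the Gaussian rather than by your cleaner observation that the log-likelihood ratio is affine), reduces to the projection onto $a = (\mu_2-\mu_1)/d$, and Taylor-expands $\Phi$ at $0$, with the $O(d^2)$ term vanishing for exactly the oddness reason you give. So in approach you match the paper, and your Neyman--Pearson-style justification of the half-space step is if anything more rigorous.

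However, there is a concrete factor-of-$2$ inconsistency in your computation. You assert the one-dimensional value $\mathrm{D}_{TV} = 2\Phi\!\left(\frac{d}{2\sigma}\right)-1$ and then claim the leading coefficient $\sqrt{2/\pi}$; these are incompatible. Expanding your formula gives $2\Phi(u)-1 = \frac{2u}{\sqrt{2\pi}} + O(u^3)$ with $u = \frac{d}{2\sigma}$, i.e.\ $\frac{1}{\sqrt{2\pi}}\frac{d}{\sigma} + O(d^3)$, which is \emph{half} of the lemma's $\sqrt{2/\pi}\frac{d}{\sigma}$. The quantity $2\Phi(d/(2\sigma))-1$ is the total variation distance under the standard convention ($\sup_A$ of the probability difference, equal to half the $L^1$ distance between densities), whereas the lemma as stated -- and the paper's proof, which writes $\mathrm{D}_{TV}(X_1,X_2)=\int_{\mathbb{R}^n}|f_1(x)-f_2(x)|\,\mathrm{d}x = 2\bigl(2\Phi(\frac{d}{2\sigma})-1\bigr)$ -- uses the unnormalized $L^1$ convention, which carries an extra factor of $2$ from summing the positive and negative parts of $f_1-f_2$. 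To repair your proof you must fix one convention and carry it through: either keep your formula $2\Phi(d/(2\sigma))-1$ and conclude $\frac{1}{\sqrt{2\pi}}\frac{d}{\sigma}+O(d^3)$ (which is then a statement differing from the lemma by a factor of $2$), or adopt the paper's $\int|f_1-f_2|$ convention, insert the factor $2$, and recover $\sqrt{2/\pi}\frac{d}{\sigma}+O(d^3)$ as claimed. All other steps (the half-space reduction, the crossing at the midpoint $d/2$, the oddness argument killing the quadratic term, and dimension-independence of the $O(d^3)$ constant) are correct.
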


\begin{proof}
Denote probability density of $X_1$ and $X_2$ as $f_1$ and $f_2$, and denote $a = \frac{\mu_2 - \mu_1}{d}$ as the normal vector of the perpendicular bisector line between $\mu_1$ and $\mu_2$. Due to the symmetry of Gaussian distribution, $f_1(x) - f_2(x)$ is positive for all $x$ where $a^\top x - a^\top \mu_1 - \frac{d}{2} >0$ and negative for all $x$ on the other symmetric side. When $a^\top x - a^\top \mu_1 - \frac{d}{2} >0$, $\int_{x \in \R^n} [f_1(x) - f_2(x)] \mathrm{d}x = \Phi(\frac{d}{2\sigma}) - (1 - \Phi(\frac{d}{2\sigma})) = 2 \Phi(\frac{d}{2\sigma}) - 1$. Thus,
\begin{align*}
D_{TV}(X_1, X_2) &= \int_{x \in \R^n} | f_1(x) - f_2(x) | \mathrm{d}x \\
&= 2 \int_{a^\top x - a^\top \mu_1 - \frac{d}{2} >0} (f_1(x) - f_2(x)) \mathrm{d}x \\
&= 2(\Phi(\frac{d}{2\sigma}) - (1-\Phi(\frac{d}{2\sigma}))) \\
&= 2(2\Phi(\frac{d}{2\sigma}) - 1)
\end{align*}

Then we use the Taylor series for $\Phi(x)$ at $x=0$:

\begin{equation*}
    \Phi(x) = \frac{1}{2} + \frac{1}{\sqrt{2\pi}}\sum_{n=0}^\infty \frac{(-1)^n x^{2n+1}}{2^n n! (2n+1)}
\end{equation*}

Since we consider the case where $d$ is small, we only keep the first order term and obtain:

\begin{equation*}
    D_{TV}(X_1, X_2) = \sqrt{\frac{2}{\pi}} \frac{d}{\sigma} + O(d^3)
\end{equation*}

\end{proof}

\reptext{theorem_policy_distance}
\begin{proof}
This theorem is a special case of Lemma~\ref{lemma:normal_distance} where $X_1=\bar{\pi}(\cdot | s)$, $X_2=\bar{\pi}(\cdot | s')$ and $X_1 \sim \mathcal{N}(\pi(s), \sigma^2 I)$, $X_2 \sim \mathcal{N}(\pi(s'), \sigma^2 I)$.
\end{proof}

\section{Optimization Techniques}
\label{sec:optimization}

\subsection{More Backgrounds for Convex Relaxation of Neural Networks}
\label{sec:convex-relaxation}

In our work, we frequently need to solve a minimax problem:
\begin{equation}
\min_\theta \max_{\phi \in \mathbb{S}} g(\theta, \phi)
\label{eq:minimax}
\end{equation}
One approach we will discuss is to first solve the inner maximization problem (approximately) using an optimizer like SGLD. However, due to the non-convexity of $\pi_\theta$, we cannot solve the inner maximization to global maxima, and the gap between local maxima and global maxima can be large. Using convex relaxations of neural networks, we can instead find an upper bound of $\max_{\phi \in \mathbb{S}} g(\theta, \phi)$:
\[
\overline{g}(\theta) \geq \max_{\phi \in \mathbb{S}} g(\theta, \phi)
\]
Thus we can minimize an upper bound instead, which can guarantee the original objective~\eqref{eq:minimax} is minimized.

As an illustration on how to find $\overline{g}(\theta)$ using convex relaxations, following~\citet{salman2019convex} we consider a simple $L$-layer MLP network $f(\theta, x)$ with parameters $\theta = \{(W^{(i)}, b^{(i)}), i \in \{1, \cdots, L\}\}$ and activation function $\sigma$. We denote $x^{(0)}=x$ as the input, $x^{(i)}$ as the post-activation value for layer $i$, $z^{(i)}$ as the pre-activation value for layer $i$. $i \in \{1, \cdots, L\}$. The output of the network $f(\theta, x)$ is $z^{(L)}$. Then, we consider the following optimization problem:
\[
\max_{x \in \mathbb{S}} f(\theta, x), \quad \text{where $\mathbb{S}$ is the set of perturbations}
\]
which is equivalent to the following optimization problem:
\begin{equation}
\begin{aligned}
    \max \quad & z^{(L)} \\
    \text{s.t.}\quad & z^{(l)} = W^{(l)} x^{(l-1)} + b^{(l)}, l \in [L], \\
    & x^{(l)}=\sigma(z^{(l)}), l \in [L-1], \\
    & x^{(0)} \in \mathbb{S}
\end{aligned}
\label{eq:constraint_verification_opt}
\end{equation}

In this constrained optimization problem~\eqref{eq:constraint_verification_opt}, assuming $\mathbb{S}$ is a convex set, the constraint on $z^{(l)}$ is convex (linear) and the only non-convex constraints are those for $x^{(l)}, l=\{1, \cdots, L-1\}$, where a non-linear activation function is involved. Note that activation function $\sigma(z)$ itself can be a convex function, but when used as an equality constraint, the feasible solution is constrained to the \emph{graph} of $\sigma(z)$, which is non-convex.

Previous works~\citep{wong2018provable,zhang2018efficient,salman2019convex} propose to use convex relaxations of non-linear units to relax the non-convex constraint $x^{(l)}=\sigma(z^{(l)})$ with a convex one, $x^{(l)}=\text{convex}(\sigma(z^{(l)}))$, such that~\eqref{eq:constraint_verification_opt} can be solved efficiently. We can then obtain an \emph{upper bound} of $f(\theta, x)$ since the constraints are relaxed.

\citet{zhang2018efficient} gave several concrete examples (e.g., ReLU, tanh, sigmoid) on how these relaxations are formed. In the special case where linear relaxations are used, \eqref{eq:constraint_verification_opt} can be solved efficiently and automatically (without manual derivation and implementation) for general computational graphs~\citep{xu2020automatic}. Generally, using the framework from~\citet{xu2020automatic} we can access an oracle function ConvexRelaxUB defined as below:

\begin{definition}
Given a neural network function $f(\mathbf{X})$ where $\mathbf{X}$ is any input for this function, and $\mathbf{X} \in \mathbb{S}$ where $\mathbb{S}$ is the set of perturbations, the oracle function \emph{ConvexRelaxUB} provided by an automatic neural network convex relaxation tool returns an upper bound $\overline{f}$, which satisfies:
\[
\overline{f} \geq \max_{\mathbf{X} \in \mathbb{S}} f(\mathbf{X})
\]
\end{definition}

Note that in the above definition, $\mathbf{X}$ can by \emph{any} input for this computation (e.g., $\mathbf{X}$ can be $s$, $a$, or $\theta$ for a $Q_\theta(s, a)$ function). In the special case of our paper, for simplicity we define the notation $\text{ConvexRelaxUB}(f, \theta, s\!\in\!B(s))$ which returns an upper bound function $\overline{f}(\theta)$ for $\max_{s \in B(s)}f(\theta, s)$.

\paragraph{Computational cost} Many kinds of convex relaxation based methods exist~\citep{salman2019convex}, where the expensive ones (which give a tighter upper bound) can be a few magnitudes slower than forward propagation. The cheapest method is interval bound propagation (IBP), which only incurs twice more costs as forward propagation; however, IBP base training has been reported unstable and hard to reproduce as its bounds are very loose~\citep{zhang2019towards,balunovic2019adversarial}. To avoid potential issues with IBP, in all our environments, we use the IBP+Backward relaxation scheme following~\citep{zhang2019towards,xu2020automatic}, which produces considerably tighter bounds, while being only a few times slower than forward propagation (e.g., 3 times slower than forward propagation when loss fusion~\citep{xu2020automatic} is implemented). In fact, \citet{xu2020automatic} used the same relaxation for training downscaled ImageNet dataset on very large vision models. For DRL the policy neural networks are typically small and can be handled quite efficiently. In our paper, we use convex relaxation as a blackbox tool (provided by the \texttt{auto\_LiRPA} library~\citep{xu2020automatic}), and any new development for improving its efficiency can benefit us.

\subsection{Solving the Robust Policy Regularizer using SGLD}
\label{sec:optimizing_sgld}
Stochastic gradient Langevin dynamics (SGLD)~\cite{gelfand1991recursive} can escape saddle points and shallow local optima in non-convex optimization problems~\citep{raginsky2017non,zhang2017hitting,bubeck2015finite,xu2018global}, and can be used to solve the inner maximization with zero gradient at $\hat{s}=s$. SGLD uses the following update rule to find $\hat{s}^K$ to maximize $\mathcal{R}_s(\hat{s},\theta_\mu)$:
\[
\hat{s}^{k+1} \leftarrow \text{proj}\left ( \hat{s}^{k} - \eta_k \nabla_{\hat{s}^k} \mathcal{R}_s(\hat{s}^k, \theta_\mu) + \sqrt{{2 \eta_k}/{\beta_k}} \xi \right ), \quad \hat{s}^{0}=s, \quad k=0,\cdots,K-1
\]
where $\eta_k$ is step size, $\xi$ is an i.i.d. standard Gaussian random variable in $\R^{|\mathcal{S}|}$, $\beta_k$ is an inverse temperature hyperparameter, and $\text{proj}(\cdot)$ projects the update back into $B(s)$.  
We find that SGLD is sufficient to escape the stationary point at $\hat{s}=s$.  However, due to the non-convexity of $\mu_{\theta_\mu}(\hat{s},\theta_\mu)$,
this approach only provides a lower bound $\mathcal{R}_s(\hat{s}^K,\theta_\mu)$ of $\max_{\hat{s} \in B(s)} \mathcal{R}_s(\hat{s},\theta_\mu)$. Unlike the convex relaxation based approach, minimizing this lower bound does not guarantee to minimize~\eqref{eq:ppo_regularizer}, as the gap between $\max_{\hat{s} \in B(s)} \mathcal{R}_s(\hat{s},\theta_\mu)$ and $\mathcal{R}_s(\hat{s}^K,\theta_\mu)$ can be large.

\paragraph{Computational Cost} In SGLD, we first need to solve the inner maximization problem (such as Eq.~\eqref{eq:ppo_regularizer}). The additional time cost depends on the number of SGLD steps. In our experiments for PPO and DDPG, we find that using 10 steps are sufficient. However, the total training cost does not grow by 10 times, as in many environments the majority of time was spent on environment simulation steps, rather than optimizing a small policy network.

\section{Additional details for adversarial attacks on state observations}
\label{sec:appendix_attack}

\subsection{More details on the Critic based attack}
In Section~\ref{sec:attack} we discuss the critic based attack~\citep{pattanaik2018robust} as a baseline. This attack requires a $Q$ function $Q(s,a)$ to find the best perturbed state. In Algorithm~\ref{alg:critic_attack} we present our ``corrected'' critic based attack based on~\citep{pattanaik2018robust}:

\begin{algorithm}[htbp]
	\caption{Critic based attack~\citep{pattanaik2018robust}}
	\label{alg:critic_attack}
	\begin{algorithmic}
		\REQUIRE{A policy function $\pi$ under attack, a corresponding $Q(s,a)$ network, and a initial state $s^0$, $K$ is the number of attack steps, $\eta$ is the step size, $\underline{s}$ and $\overline{s}$ are valid lower and upper range of $s$ (assuming a $\ell_\infty$ norm-like threat model).}
		% \STATE $s_0 \leftarrow s_0 + \text{noise}$ \COMMENT{Start with a small noise to avoid gradient masking}
		% \STATE $s_0 \leftarrow \min(\max(s_0, \underline{s}), \overline{s})$
		\FOR{$k=1$ to $K$}
		\STATE $g^k = \nabla_{s^{k-1}} Q(s_0,\pi(s^{k-1}))=\frac{\partial Q}{\partial \pi}\frac{\partial \pi}{\partial s^{k-1}}$
		\STATE $g^k \leftarrow \mathrm{proj}(g^k)$ \COMMENT{project $g^k$ according to norm constraint of $s$; for $\ell_\infty$ norm simply take the sign}
		\STATE $s^k \leftarrow s^{k-1} - \eta g^k$
		\STATE $s^k \leftarrow \min(\max(s^k, \underline{s}), \overline{s})$ \COMMENT{only needed for $\ell_\infty$ norm threat model}
		\ENDFOR
		\ENSURE{An adversarial state $\hat{s}:=s^{K}$}
	\end{algorithmic}
\end{algorithm}

Note that in Algorithm 4 of~\citep{pattanaik2018robust}, given a state $s^0$ under attack, they use the gradient $\nabla_s Q(s,\pi(s)) = \frac{\partial Q}{\partial s} + \frac{\partial Q}{\partial \pi}\frac{\partial \pi}{\partial s}$ which essentially attempts to minimize $Q(\hat{s}, \pi(\hat{s}))$, but they then sample randomly along this gradient direction to find the best $\hat{s}$ that minimizes $Q(s^0, \pi(\hat{s}))$. Our corrected formulation directly minimizes $Q(s^0, \pi(\hat{s}))$ using this gradient instead $\nabla_s Q(s^0,\pi(s))=\frac{\partial Q}{\partial \pi}\frac{\partial \pi}{\partial s}$.

For PPO, since there is no $Q(s,a)$ available during training, we extend~\citep{pattanaik2018robust} to perform attack relying on $V(s)$: we find a state $\hat{s}$ that minimizes $V(\hat{s})$. Unfortunately, it does not match our setting of perturbing state observations; it looks for a state $\hat{s}$ that has the worst value (i.e., taking action $\pi(\hat{s})$ in state $\hat{s}$ is bad), but taking the action $\pi(\hat{s})$ at state $s^0$ does not necessarily trigger a low reward action, because $V(\hat{s})=\max_a Q(\hat{s},a) \neq \max_a Q(s^0, a)$. Thus, in Table~\ref{tab:ppo_res} we can observe that critic based attack typically does not work very well for PPO agents.

\subsection{More details on the Maximal Action Difference (MAD) attack}
We present the full algorithm of MAD attack in Algorithm~\ref{alg:mad_attack}. It is a relatively simple attack by directly maximizing a KL-divergence using SGLD, yet it usually outperforms random attack and critic attack on many environments (e.g., see Figure~\ref{fig:attack_eps_sweep}).

\begin{algorithm}[htbp]
\caption{Maximal Action Difference (MAD) Attack (a critic-independent attack)}
\label{alg:mad_attack}
\begin{algorithmic}
\REQUIRE{A policy function $\pi$ under attack, and a initial state $s_0$, $T$ is the number of attack steps, $\eta$ is the step size, $\beta$ is the (inverse) temperature parameter for SGLD, $\underline{s}$ and $\overline{s}$ are valid lower and upper range of $s$.}
% \STATE $s_0 \leftarrow s_0 + \text{noise}$ \COMMENT{Start with a small noise to avoid gradient masking}
\STATE Define loss function $L_\text{MAD}(s)=-D_\text{KL}(\pi(\cdot|s_0)\|\pi(\cdot|s))$
\FOR{$t=1$ to $T$}
\STATE Sample $\xi \sim \mathcal{N}(0,1)$
\STATE $g_t = \nabla L_\text{MAD}(s_{t-1}) + \sqrt{\frac{2}{\beta \eta}}\xi$
\STATE $g_t \leftarrow \mathrm{proj}(g_t)$ \COMMENT{project $g_t$ according to norm constraint of $s$; for $\ell_\infty$ norm simply take the sign}
\STATE $s_t \leftarrow s_{t-1} - \eta g_t$
\STATE $s_t \leftarrow \min(\max(s_t, \underline{s}), \overline{s})$
\ENDFOR
\ENSURE{An adversarial state $\hat{s}:=s_T$}
\end{algorithmic}
\end{algorithm}

\subsection{More details on the Robust Sarsa attack}
Algorithm~\ref{alg:sarsa_attack} gives the full procedure of the Robust Sarsa attack. We collect trajectories of the agents and then optimize the ordinary temporal difference (TD) loss along with a robust objective $L_\text{robust}(\theta)$. $L_\text{robust}(\theta)$ constrains that when an input action $a$ is slightly changed, the value $Q^\pi_\text{RS}(s,a)$ should not change significantly. We set the perturbation set $B_p(a, \epsilon)$ to be a $\ell_p$ norm ball with radius $\epsilon$ around an action $a$. We gradually increase $\epsilon$ from 0 to $\epsilon_\text{max}$ during training to learn a critic that is increasingly more robust. The inner maximization of $L_\text{robust}(\theta)$ is upper bounded by convex relaxations of neural networks, which we introduced in section~\ref{sec:convex-relaxation}. Once the inner maximization is eliminated, we solve the final objective using regular first order optimization methods. In our attacks to DDPG and PPO, we try multiple regularization parameter $\lambda_\text{RS}$ to find the best Sarsa model that achieves \emph{lowest} attack rewards.

\begin{algorithm}[htbp]
\caption{Train a robust value function for critic-independent attack (Robust Sarsa attack)}
\label{alg:sarsa_attack}
\begin{algorithmic}
\REQUIRE{Any policy function $\pi$ under attack, $T$ is the number of training steps, and an epsilon schedule $\epsilon_t$}
\STATE Initialize $Q^\pi_\text{RS}(s,a)$ to be a random network
\FOR{$t=1$ to $T$}
\STATE Run the agent with policy $\pi$ and collect a batch of $N$ steps: $\{s_i, a_i, r_i, s_i^\prime, a_i^\prime\}, i \in [N]$
\STATE $L_\text{TD}(\theta) = \sum_{i \in [N]} \left [ r_i + \gamma Q^\pi_\text{RS}(s_i^\prime, a_i^\prime) - Q^\pi_\text{RS}(s_i, a_i) \right ]^2$
\STATE $L_\text{robust}(\theta) = \sum_{i \in [N]} \max_{\hat{a} \in B_p (a_i, \epsilon_t)} (Q^\pi_\text{RS}(s_i, \hat{a}) -  Q^\pi_\text{RS}(s_i, a_i))^2$
\STATE $\overline{L}_\text{robust}=\text{ConvexRelaxUB}(L_\text{robust}, \theta, B_p (a_i, \epsilon_t))$, where $L_\text{robust}(\theta) \leq \overline{L}_\text{robust}(\theta)$ \COMMENT{Solving the inner maximization by upper bounding $L_\text{robust}$ using an automatic NN convex relaxation tool}
\STATE Minimize $L_\text{RS}(\theta)=L_\text{TD}(\theta) + \lambda_\text{RS}\overline{L}_\text{robust}(\theta)$ using any gradient based optimizer (e.g., Adam)
\ENDFOR
\ENSURE A robust critic function $Q^\pi_\text{RS}$ that can be used for Algorithm~\ref{alg:critic_attack}.
\end{algorithmic}
\end{algorithm}

Although it is beyond the scope of this paper, RS attack can also be used as a blackbox attack when perturbing the actions rather than state observations, as $Q^\pi_{\theta_{RS}}$ can be learned by observing the environment and the agent without any internal information of the agent. Then, using the robust critic we learned, black-box attacks can be performed on action space by solving $\min Q^\pi_{\theta_{RS}}(s, a)$ with a norm constrained $a$.

For a practical implementation, to improve convergence and reduce instability, two $Q^\pi_\text{RS}(s,a)$ functions can be also used similarly as in double Q learning~\citep{hasselt2010double}. In our case, since the policy is not being updated and stable, we find that using a single Q function is also sufficient for most settings and usually converges faster.

We provide some empirical justifications for the necessity of using a robust objective. For both PPO and DDPG, we conduct attacks using a Sarsa network trained with and without the robustness objective, in Table~\ref{tab:robust_sarsa_ppo} and Table~\ref{tab:robust_sarsa_ddpg}, respectively. We observe that the robust objective can decrease reward further more in most settings.

%\textcolor{blue}{Hongge: please add non-robust sarsa and robust sarsa comparison for all PPO natural and robust models (we only need to models that were reported in main text) in a table. We want to show that robust sarsa is better than non-robust sarsa.}
\begin{table*}[tbh]\centering
\caption{Comparison between Non-robust Sarsa attack (without the robustness objective $L_\text{robust}(\theta)$) and robust Sarsa attack on PPO and SA-PPO agents in Table~\ref{tab:ppo_res}. The Robust Sarsa Attack Reward column is the same result presented in RS column of Table~\ref{tab:ppo_res}. We report mean reward $\pm$ standard deviation over 50 attack episodes.}
\resizebox{0.8\linewidth}{!}{
% Please add the following required packages to your document preamble:
% \usepackage{multirow}
\begin{tabular}{l|c|c|c|c}
\hline
\multirow{2}{*}{Env.} & \multirow{2}{*}{\shortstack{$\ell_\infty$ norm perturb-\\ation budget $\epsilon$}}                                  & \multirow{2}{*}{Method}  & \multirow{2}{*}{\shortstack{Non-robust Sarsa\\  Attack Reward}}&\multirow{2}{*}{\shortstack{Robust Sarsa\\Attack Reward}}\\
                      &                                                              &                         &                                   & \\ \hline
                      &                                                              & PPO (vanilla)           &2757.0$\pm$604.2  &  \bf 779.4$\pm$33.2     \\
                      &                                                              & PPO (adv. 50\%)         &276 $\pm$140& \bf 49 $\pm$ 50       \\
                      &                                                              & PPO (adv. 100\%)        &14.4$\pm$ 4.20& \bf 3.8 $\pm$ 0.9   \\
                      &                                                              & SA-PPO (SGLD)           &  3642.9$\pm$4.0 &\bf1403.3$\pm$55.0\\

\multirow{-5}{*}{\cellcolor{white} Hopper}  & \multirow{-5}{*}{\cellcolor{white}0.05} 
& SA-PPO (Convex)         & 3014.9$\pm$656.1 &\bf1235.8$\pm$50.2\\ \hline
                      &                                                              & PPO (vanilla)           &  2224.7$\pm$1438.7&\bf 913.7$\pm$54.3\\
                      &                                                              & PPO (adv. 50\%)         &-10.79 $\pm$ 0.93 & \bf -11.55 $\pm$ 0.79 \\
                      &                                                              & PPO (adv. 100\%)        &-111.9$\pm$ 4.5& \bf -114.4 $\pm$ 4.0      \\
                      &                                                              & SA-PPO (SGLD)           &  4777.1$\pm$305.5&\bf 2605.6$\pm$1255.7\\

\multirow{-5}{*}{\cellcolor{white}Walker2d} & \multirow{-5}{*}{\cellcolor{white}0.05} & SA-PPO (Convex)         & 3701.1$\pm$1013.3 &\bf2168.2$\pm$	665.4\\ \hline
                      &                                                              & PPO (vanilla)           &  \bf 716.4$\pm$166.1&1036.0$\pm$420.2\\
                      &                                                              & PPO (adv. 50\%)         & 166$\pm$ 78& \bf 98 $\pm$ 69       \\
                      &                                                              & PPO (adv. 100\%)        &  122.6$\pm$ 15.9&\bf 113.2 $\pm$ 18.5  \\
                      &                                                              & SA-PPO (SGLD)           &  6115.4$\pm$783.2&\bf 6200.5$\pm$818.1\\
                      
\multirow{-5}{*}{\cellcolor{white}Humanoid} & \multirow{-5}{*}{\cellcolor{white}0.075} & SA-PPO (Convex)        &6241.2$\pm$540.8  &\bf 4707.2$\pm$1359.1\\ \hline
\end{tabular}
}
\label{tab:robust_sarsa_ppo}
\end{table*}

\begin{table*}[tbh]\centering
\caption{Comparison between Non-robust Sarsa attack (without the robustness objective) and robust Sarsa attack on DDPG and SA-DDPG agents in Table~\ref{tab:ddpg_res}. The Robust Sarsa Attack Reward column presents the same results as presented in the RS attack rows of Table~\ref{tab:ddpg_full_res_}. We report mean reward $\pm$ standard deviation over 50 attack episodes.}
\resizebox{0.8\linewidth}{!}{
% Please add the following required packages to your document preamble:
% \usepackage{multirow}
\begin{tabular}{l|c|c|c|c}
\hline
\multirow{2}{*}{Env.} & \multirow{2}{*}{\shortstack{$\ell_\infty$ norm perturb-\\ation budget $\epsilon$}}                                  & \multirow{2}{*}{Method}  & \multirow{2}{*}{\shortstack{Non-robust Sarsa\\  Attack Reward}}&\multirow{2}{*}{\shortstack{Robust Sarsa\\Attack Reward}}\\
                      &                                                              &                         &                                   & \\ \hline
   &  & DDPG (vanilla) & $700 \pm 305$  & $\bf 336 \pm 283$ \\
   \multirow{-2}{*}{\cellcolor{white} Ant}  &  \multirow{-2}{*}{0.2}& SA-DDPG (Convex) & $2380 \pm 142$  & $\bf 1820 \pm 635$  \\
%   &  & DDPG (vanilla) & 1387.76 $\pm$ 853.14 & \bf 1944.39 $\pm$ 850.65 \\ \multirow{-2}{*}{\cellcolor{white} HalfCheetah}  &  & SA-DDPG (Convex) & 993.85 $\pm$ 1115.48 & \bf 2609.76 $\pm$ 352.34 \\
\hline
  &  & DDPG (vanilla) & $1362 \pm 1468$  & $\bf 606\pm 124$ \\ \multirow{-2}{*}{\cellcolor{white} Hopper}  &  \multirow{-2}{*}{0.075}& SA-DDPG (Convex) & $1323 \pm 491$  & $\bf 1258 \pm 561$ \\
\hline
  &  & DDPG (vanilla) &$1000 \pm 0$  & $\bf 92 \pm 1$  \\ \multirow{-2}{*}{\cellcolor{white} InvertedPendulum}  &\multirow{-2}{*}{0.3}  & SA-DDPG (Convex) & $1000 \pm 0$  & $1000 \pm 0$  \\
\hline
  &  & DDPG (vanilla) & $ -24.11 \pm 7.19$  & $\bf -21.74\pm5.14$ \\ \multirow{-2}{*}{\cellcolor{white} Reacher}  & \multirow{-2}{*}{1.5}& SA-DDPG (Convex) &$\bf -11.67 \pm 3.57$  & $-11.40 \pm 3.56$  \\
\hline
  &  & DDPG (vanilla) & $ 951 \pm 1146$  & $959 \pm 1001$ \\ \multirow{-2}{*}{\cellcolor{white} Walker2d}  &\multirow{-2}{*}{0.05}  & SA-DDPG (Convex) &$3200 \pm 1939$  & $\bf 1986 \pm 1993$ \\
\hline

\end{tabular}
}
\label{tab:robust_sarsa_ddpg}
\end{table*}

% \textcolor{brown}{Chaowei: please add non-robust sarsa and robust sarsa comparison for all DDPG natural and robust models (we only need to models that were reported in main text) in a table. We want to show that robust sarsa is better than non-robust sarsa.}

\subsection{Hybrid RS+MAD attack}
We find that RS and MAD attack can achieve the best results (lowest attack reward) in many cases. We also consider combining them to form a hybrid attack, which minimizes the robust critic predicted value and in the meanwhile maximizes action differences. It can be conducted by minimizing this combined loss function to find an adversarial state $\hat{s} \in B(s)$:
\[
L_{\text{Hybrid}}(\hat{s}) = \alpha_\text{RS-MAD} Q_{\theta_Q}(s, \pi_{\theta_{RS}}(\hat{s})) + (1 - \alpha_\text{RS-MAD}) L_\text{MAD}(\hat{s})
\]
For a practical implementation, it is important to choose $\alpha_\text{RS-MAD}$ so that the two parts of the loss are roughly balanced. The value of $Q_{\theta_Q}$ depends on environment reward (if reward is not normalized), and might be much larger in magnitudes than $\text{RS-MAD}$, so typically $\alpha_\text{RS-MAD}$ is close to 1.

We try different values of $\alpha_\text{RS-MAD}$ and report the lowest reward as the final reward under this attack.
% \subsection{Smo}
\subsection{Projected Gradient Decent (PGD) Attack for DQN}
\label{sec:dqn_attack}
For DQN, we use the regular untargeted Projected Gradient Decent (PGD) attack in the  literature~\cite{lin2017tactics,pattanaik2018robust,xiao2019characterizing}. The untargeted PGD attack with $K$ iterations updates the state $K$ times as follows:
\begin{equation}
\begin{gathered}
    s^{k+1} = s^k+ \eta \text{proj} [\nabla_{s^k}\mathcal{H}(Q_\theta(s^k,\cdot),a^*)],\\s^0=s,\quad k=0,\dots,K-1
    \end{gathered}
\end{equation}
where $\mathcal{H}(Q_\theta(s^k,\cdot),a^*)$ is the cross-entropy loss between the output logits of $Q_\theta(s^k,\cdot)$ and the onehot-encoded distribution of $a^*:=\argmax_aQ_\theta(s,a)$. $\text{proj}[\cdot]$ is a projection operator depending on the norm constraint of $B(s)$ and $\eta$ is the learning rate. A successful untargeted PGD attack will then perturb the state to lead the Q network to output an action other than the optimal action $a^*$ chosen at the original state $s$. To guarantee that the final state obtained by the attack is within an $\ell_\infty$ ball around $s$ ($B_\epsilon(s)=\{\hat{s}:s-\epsilon\leq \hat{s}\leq s+\epsilon\}$),  the projection $\text{proj}[\cdot]$ is a sign operator and $\eta$ is typically set to $\eta=\frac{\epsilon}{K}$.

\section{Robustness Certificates for Deep Reinforcement Learning}
\label{sec:certificate}
If we use the convex relaxation in Section~\ref{sec:convex-relaxation} to train our networks, it can produce robustness certificates~\citep{wong2018provable,mirman2018differentiable,zhang2019towards} for our task. However in some RL tasks the certificates have interpretations different from classification tasks, as discussed in detail below.

\textbf{Robustness Certificates for DQN.}
In DQN, the action space is finite, so we have a robustness certificate on the actions taken at each state. More specifically, at each state $s$, policy $\pi$'s action is certified if its corresponding Q function satisfies
\begin{align}
    \argmax_a Q_\theta(s,a)=\argmax_{a}Q_\theta(\hat{s},a)=a^*, \text{for all $\hat{s} \in B(s).$}
\end{align}
Given a states $s$, we can use neural network convex relaxations to compute an upper bound $u_{Q_\theta, a^*,a}(s)$ such that $$Q_\theta(\hat{s},a)-Q_\theta(\hat{s},a^*)\leq u_{Q_\theta, a^*,a}(s)$$ holds for all $ \hat{s} \in B(s)$. So if $u_{Q_\theta, a^*,a}(s)\leq 0$ for all $a\in\mathcal{A}$, we have
\begin{align}
    Q_\theta(\hat{s},a)-Q_\theta(\hat{s},a^*)\leq0
\end{align}
is guaranteed for all $ \hat{s} \in B(s)$, which means that the agent's action will not change when the state observation is in $B(s)$. When the agent's action is not changed under an adversarial perturbation, its reward and transition at current step will not change in the DQN setting, either.

In some settings, we find that 100\% of the actions are guaranteed to be unchanged (e.g., the Pong environment in Table~\ref{tab:dqn_res}). In that case, we can in fact also certify that the accumulated reward is not changed given the specific initial conditions for testing. 
Otherwise, if some steps during the roll-out do not have this certificate, or have a weaker certificate that more than one actions are possible given $\hat{s} \in B(s)$, all the possible actions have to be explored as the next action input to the environment. When there are $n$ states which are not certified to have unchanged actions, each with $m$ possible actions, we need to run $n^m$ trajectories to find the worst case cumulative reward. This is impractical for typical settings.

However, even in the 100\% certificate rate setting like Pong, it can still be challenging to certify that the agent is robust under \emph{any} starting condition. Since the agent is started with a random initialization, it is impractical to enumerate all possible initializations and guarantee all generated trajectories are certified. Similarly, in the classification setting, many existing certified defenses~\citep{wong2018scaling,mirman2018differentiable,gowal2018effectiveness,zhang2019towards} can only practically guarantee robustness on a specific test set (by computing a ``verified test error''), rather than on \emph{any} input image.

\textbf{Robustness Certificates for PPO and DDPG.} In DDPG and PPO, the action space is continuous, hence it is not possible to certify that actions do not change under adversary. We instead seek for a different type of guarantee, where we can upper bound the change in action given a norm bounded input perturbation:
\begin{equation}
U_s \geq \max_{\hat{s} \in B(s)} \| \pi_{\theta_\pi}(\hat{s}) - \pi_{\theta_\pi}(s) \|
\label{eq:action_certificate}
\end{equation}
Given a state $s$, we can use convex relaxations to compute an upper bound $U_s$. Generally speaking, if $B(s)$ is small, a robust policy desires to have a small $U_s$, otherwise it can be possible to find an adversarial state perturbation that greatly changes $\pi_{\theta_\pi}(\hat{s})$ and causes the agent to misbehave. However, giving certificates on cumulative rewards is still challenging, as it requires to bound reward $r(s,a)$ given a fixed state $s$, and a perturbed and bounded action $a$ (bounded via~\eqref{eq:action_certificate}). Since the environment dynamics can be quite complex in practice (except for the simplest environment like InvertedPendulum), it is hard to bound reward changes given a bounded action. We leave this part as a future direction for exploration and we believe the robustness certificates in~\eqref{eq:action_certificate} can be useful for future works.

% Our robustness certificates above only depend on the neural network and $B(s)$, but do not rely on the adversarial attacks used. So these certificates hold even if the attacks do not satisfy Assumption~\ref{assumpt:stationary_markovian}, as long as Assumption~\ref{assumpt:bounded_power} is satisfied. In other word, the proposed SA-DQN and SA-DDPG also work when the attacks are non-stationary, stochastic and non-Markovian, as long as the their output $\hat{s}$ is in $B(s)$,  a set of ``neighbouring'' states of $s$.

\section{Additional details for SA-PPO}
\label{sec:ppo_details}

\paragraph{Algorithm} We present the full SA-PPO algorithm in Algorithm~\ref{alg:sa_ppo}. Compared to vanilla PPO, we add a robust state-adversarial regularizer which constrains the KL divergence on state perturbations. We highlighted these changes in Algorithm~\ref{alg:sa_ppo}. The regularizer $\mathcal{R}_\text{PPO}(\theta_\pi)$ can be solved using SGLD or convex relaxations of neural networks. We define the perturbation set $B(s)$ to be an $\ell_p$ norm ball around state $s$ with radius $\epsilon$: $B_p (s, \epsilon) := \{s^\prime| \| s^\prime - s\|_p \leq \epsilon \}$. We use a $\epsilon$-schedule during training, where the perturbation budget is slowly increasing dduring each epoch $t$ as $\epsilon_t$ until reaching $\epsilon$.

\begin{algorithm}[htbp]
\caption{State-Adversarial Proximal Policy Optimization (SA-PPO). We highlight its differences compared to vanilla PPO in \textcolor{brown}{brown}.}
\label{alg:sa_ppo}
\begin{algorithmic}[1]
\REQUIRE Number of iterations $T$, a $\epsilon$ schedule $\epsilon_t$
\STATE{Initialize actor network $\pi(a|s)$ and critic network $V(s)$ with parameter $\theta_\pi$ and $\theta_V$},
\FOR{$t=1$ to $T$}
\STATE Run $\pi_{\theta_\pi}$ to collect a set of trajectories $\mathcal{D}=\{\tau_k\}$ containing $|\mathcal{D}|$ episodes, each $\tau_k$ is a trajectory contain $|\tau_k|$ samples, $\tau_{k} := \{(s_{k,i}, a_{k,i}, r_{k,i}, s_{k, i+1})\}$, $i \in [|\tau_k|]$
\STATE Compute cumulative reward $\hat{R}_{k,i}$ for each step $i$ in every episode $k$ using the trajectories and discount factor $\gamma$
\STATE Update value function by minimizing the mean-square error:
\[
\theta_V \leftarrow \argmin_{\theta_V} \frac{1}{\sum_k |\tau_k|} \sum_{\tau_k \in D} \sum_{i=0}^{|\tau_k|} \left (V(s_{k,i}) - \hat{R}_{k,i} \right )^2
\]
\STATE Estimate advantage $\hat{A}_{k,i}$ for each step $i$ in every episode $k$ using generalized advantage estimation (GAE) and value function $V_{\theta_V}(s)$
\STATE \textcolor{brown}{Define the state-adversarial policy regularier:}
\[
\mathcal{R}_\text{PPO}(\theta_\pi):=\sum_{\tau_k \in D} \sum_{i=0}^{|\tau_k|} \max_{\bar{s}_{k,i} \in B_p(s_{k,i}, \epsilon_t)} \mathrm{D}_{\mathrm{KL}}\left (\pi(a|s_{k,i})\|\pi(a|\bar{s}_{k,i}) \right)
\]
\STATE \textcolor{brown}{Option 1: Solve $\mathcal{R}_\text{PPO}(\theta_\pi)$ using SGLD:} 
\STATE \qquad find $\hat{s}_{k,i} = \argmax_{\bar{s}_{k,i} \in B_p(s_{k,i}, \epsilon_t)}  \mathrm{D}_{\mathrm{KL}}(\pi(a|s_{k,i})\|\pi(a|\bar{s}_{k,i}))$ using SGLD optimization for all $k, i$ (the objective can be solved in a batch)
\STATE \qquad set $\overline{\mathcal{R}}_{\text{PPO}}(\theta_\pi) := \sum_{\tau_k \in D} \sum_{i=0}^{|\tau_k|} \mathrm{D}_{\mathrm{KL}}(\pi(a|s_{k,i})\|\pi(a|\hat{s}_{k,i}))$
\STATE \textcolor{brown}{Option 2: Solve $\mathcal{R}_\text{PPO}(\theta_\pi)$ using convex relaxations:}
\STATE \qquad $\overline{\mathcal{R}}_{\text{PPO}}(\theta_\pi) := \text{ConvexRelaxUB}({\mathcal{R}_\text{PPO}}, \theta_\pi, \bar{s}_{k,i} \in B_p(s_{k,i}, \epsilon_t))$
\STATE Update the policy by minimizing the SA-PPO objective (the minimization is solved using ADAM):
\[
\theta_\pi \leftarrow \argmin_{\theta_\pi^\prime} \frac{1}{\sum_k |\tau_k|} \left [ \sum_{\tau_k \in D} \sum_{i=0}^{|\tau_k|} \min \left ( r_{\theta_\pi^\prime}(a_{k,i} | s_{k,i}) \hat{A}_{k,i}, g(r_{\theta_\pi^\prime}(a_{k,i} | s_{k,i})) \hat{A}_{k,i} \right ) \mathcolor{brown}{+ \kappa_\text{PPO} \overline{\mathcal{R}}_{\text{PPO}}(\theta_\pi^\prime)} \right ]
\]
where $r_{\theta_\pi^\prime}(a_{k,i} | s_{k,i}) := \frac{\pi_{\theta_\pi^\prime}(a_{k,i} |s_{k,i})}{\pi_{\theta_\pi} (a_{k,i} |s_{k,i})}$, $g(r):=\mathrm{clip}(r_{\theta_\pi^\prime}(a_{k,i} | s_{k,i}), 1 - \epsilon_\text{clip}, 1 + \epsilon_\text{clip})$
\ENDFOR
\end{algorithmic}
\end{algorithm}

\paragraph{Hyperparameters for Regular PPO Training} We use the optimal hyperparameters in~\citep{engstrom2020implementation} which were found using a grid search for vanilla PPO. However, we found that their parameters are not optimal for Humanoid and achieves a cumulative reward of only about 2000 after $1 \times 10^7$ steps. Thus we redo hyperparameter search on Humanoid and change learning rate for actor to $5 \times 10^{-5}$ and critic to $1 \times 10^{-5}$. This new set of hyperemeters allows us to obtain Humanoid reward about 5000 for vanilla PPO. Note that even under the original, non-optimal set of hyperemeters by~\citep{engstrom2020implementation}, our SA-PPO variants still achieve high rewards similarly to those reported in our paper. Our hyperparameter change only significantly improves the performance of vanilla PPO baseline.

We run 2048 simulation steps per iteration, and run policy optimization of 10 epochs with a minibatch size of 64 using Adam optimizer with learning rate $3 \times 10^{-4}$, $4 \times 10^{-4}$ and $5 \times 10^{-5}$ for Walker, Hopper and Humanoid, respectively. The value network is also trained in 10 epochs per iteration with a minibatch size of 64, using Adam optimizer with learning rate 0.00025, $3 \times 10^{-4}$, and $1 \times 10^{-5}$ for Walker, Hopper and Humanoid environments, respectively (the same as in~\citep{engstrom2020implementation} without further tuning, except for Humanoid as discussed above). Both networks are 3-layer MLPs with $[64,64]$ hidden neurons. The clipping value $\epsilon$ for PPO is 0.2. We clip rewards to $[-10, 10]$ and states to $[-10,10]$. The discount factor $\gamma$ for reward is 0.99 and the discount factor used in generalized advantage estimation (GAE) is 0.95. We found that in~\citep{engstrom2020implementation} the agent rewards are still improving when training finishes, thus in our experiments we run the agents longer for better convergence: we run Walker2d and Hopper $2 \times 10^6$ steps (976 iterations) and Humanoid $1 \times 10^7$ steps (4882 iterations) to ensure convergence.

\paragraph{Hyperparameter for SA-PPO Training} For SA-PPO, we use the same set of hyperparameters as in PPO. Note that the hyperparameters are tuned for PPO but not specifically for SA-PPO. The additional regularization parameter $\kappa_\text{PPO}$ for the regularizer $\mathcal{R}_\text{PPO}$ is chosen in $\{0.003, 0.01, 0.03, 0.1, 0.3, 1.0\}$. We linearly increase $\epsilon_t$, the norm of $\ell_\infty$ perturbation on normalized states, from 0 to the target value ($\epsilon$ for evaluation, reported in Table~\ref{tab:ppo_res}) during the first $3/4$ iterations, and keep $\epsilon_t=\epsilon$ for the reset iterations. The same $\epsilon$ schedule is used for both SGLD and convex relaxation training. For SGLD, we run 10 iterations with step size $\frac{\epsilon_t}{10}$ and set the temperature parameter $\beta=1 \times 10^{-5}$. For convex relaxations, we use the efficient IBP+Backward scheme~\citep{xu2020automatic}, and we use a training schedule similar to~\citep{zhang2019towards} by mixing the IBP bounds and backward mode perturbation analysis bounds.

\section{Additional Details for SA-DDPG}
\label{sec:ddpg_details}
\paragraph{Algorithm}
We present the SA-DDPG training algorithm in Algorithm~\ref{alg:sa_ddpg}. The main difference between DDPG and SA-DDPG is the additional loss term $\mathcal{R}_{\text{DDPG}}(\theta_\pi)$, which provides an upper bound on $\max_{s \in B(s_i)} \| \pi(s) - \pi(s_i) \|_2^2$. We highlighted these changes in Algorithm~\ref{alg:sa_ddpg}. We define the perturbation set $B(s)$ to be a $\ell_p$ norm ball around $s$ with radius $\epsilon$: $B_p (s, \epsilon) := \{s^\prime| \| s^\prime - s\|_p \leq \epsilon \}$. We use a $\epsilon$-schedule during training, where the perturbation budget is slowly increasing during training as $\epsilon_t$ until reaching $\epsilon$.
% If this term is small, according to Theorem~\ref{thm:policy_distance} and Theorem~\ref{thm:optimal_distance} we can bound the performance loss under adversary.

\begin{algorithm}[htbp]
\caption{State-Adversarial Deep Deterministic Policy Gradient (SA-DDPG). We highlight its differences compared to vanilla DDPG in \textcolor{brown}{brown}.}
\label{alg:sa_ddpg}
\begin{algorithmic}
%\REQUIRE{Policy $\pi$, convergence threshold $\varepsilon$}
%\ENSURE{Values for policy $\pi$, $V^\pi_{\nu^*}(s)$}
\STATE{Initialize actor network $\pi(s)$ and critic network $Q(s,a)$ with parameter $\theta_\pi$ and $\theta_Q$}
\STATE{Initialize target network $\pi^\prime(s)$ and critic network $Q^\prime(s,a)$ with weights $\theta_{\pi^\prime} \leftarrow \theta_\pi$ and $\theta_{Q^\prime} \leftarrow \theta_Q$}
\STATE{Initial replay buffer $\mathcal{B}$}
\FOR{$t=1$ to $T$}
\STATE{Initial a random process $\mathcal{N}$ for action exploration}
\STATE{Choose action $a_t \sim \pi(s_t) + \epsilon$, $\epsilon \sim \mathcal{N}$}
\STATE{Observe reward $r_t$, next state $s_{t+1}$ from environment}
\STATE{Store transition $\{s_t,a_t,r_t,s_{t+1}\}$ into $\mathcal{B}$}
\STATE{Sample a mini-batch of $N$ samples $\{s_i,a_i,r_i,s^\prime_i\}$ from $\mathcal{B}$}
\STATE{$y_i \leftarrow r_i + \gamma Q^\prime(s^\prime_i, \pi^\prime(s^\prime_i))$ for all $i \in [N]$}
\STATE{Update $\theta_Q$ by minimizing loss $L(\theta_Q) = \frac{1}{N}\sum_i \left (y_i - Q(s_i,a_i) \right)^2$}
\STATE \textcolor{brown}{$\mathcal{R}_\text{DDPG}(\theta_\pi, \bar{s}_i):=\sum_i\max_{\bar{s}_i \in B_p(s_i, \epsilon_t)} \| \pi_{\theta_\pi} (s_i) - \pi_{\theta_\pi} (\bar{s}_i) \|_2$}
\STATE \textcolor{brown}{Option 1: Solve $\mathcal{R}_{\text{DDPG}}(\theta_\pi)$ using SGLD:}
\STATE \qquad find $\hat{s}_i = \argmax_{\bar{s}_i \in B_p(s_i, \epsilon_t)} \| \pi_{\theta_\pi} (s_i) - \pi_{\theta_\pi} (\bar{s}_i) \|_2$ for all $i$ (solved in a batch using SGLD)
\STATE \qquad set $\overline{\mathcal{R}}_\text{DDPG}(\theta_\pi) := \sum_i \| \pi_{\theta_\pi} (s_i) - \pi_{\theta_\pi} (\hat{s}_i) \|_2$
\STATE \textcolor{brown}{Option 2: Solve $\mathcal{R}_\text{DDPG}(\theta_\pi)$ using convex relaxations:}
\STATE \qquad $\overline{\mathcal{R}}_\text{DDPG}(\theta_\pi) := \text{ConvexRelaxUB}(\mathcal{R}_{\text{DDPG}}, \theta_\pi, \bar{s}_i \in B_p(s_i, \epsilon_t))$
\STATE{Update $\theta_\pi$ using deterministic policy gradient and gradient of $\overline{\mathcal{R}}_\text{DDPG}$:}
\STATE{$\nabla_{\theta_\pi} J(\theta_\pi) = \frac{1}{N}\sum_{i} \left [ \nabla_{a} Q(s,a) \rvert_{s=s_i, a=\pi(s_i)} \nabla_{\theta_\pi} \pi(s) \rvert_{s=s_i} \mathcolor{brown}{+ \kappa_\text{DDPG} \nabla_{\theta_\pi} \overline{\mathcal{R}}_\text{DDPG}} \right ] $}
\STATE{Update Target Network:}
\STATE{$\theta_{Q^\prime} \leftarrow \tau \theta_Q + (1-\tau) \theta_{Q^\prime}$} 
\STATE{$\theta_{\pi^\prime} \leftarrow \tau \theta_\pi + (1-\tau) \theta_{\pi^\prime}$}
\ENDFOR
\end{algorithmic}
\end{algorithm}

\paragraph{Hyperparameters for Regular DDPG Training.} Our hyperparameters are from~\citep{deeprl}. Both actor and critic networks are 3-layer MLPs with $[400, 300]$ hidden neurons. We run each environment for $2 \times 10^6$ steps. Actor network learning rate is $1 \times 10^{-4}$ and critic network learning rate is $1 \times 10^{-3}$ (except that for Hopper-v2 and Ant-v2 the critic learning rate is reduced to $1 \times 10^{-4}$ due to the larger values of rewards); both networks are optimized using Adam optimizer. No reward scaling is used, and discount factor is set to $0.99$. We use a replay buffer with a capacity of $1 \times 10^6$ items and we do not use prioritized replay buffer sampling. For the random process $\mathcal{N}$ used for exploration, we use a Ornstein-Uhlenbeck process with $\theta=0.15$ and $\sigma=0.2$. The mixing parameter of current and target actor and critic networks is set to $\tau=0.001$.

\paragraph{Hyperparameters for SA-DDPG Training.} SA-DDPG uses the same hyperparameters as in DDPG training. For the additional regularization parameter $\kappa$ for $\pi(s)$, we choose in $\{0.1, 0.3, 1.0, 3.0\}$ for InvertedPendulum and Reacher due to their low dimensionality and $\{30, 100, 300, 1000\}$ for other environments.. We train the actor network without state-adversarial regularization for the first $1 \times 10^6$ steps, then increase $\epsilon_t$ from 0 to the target value in $5 \times 10^5$ steps, and then keep training at the target $\epsilon$ for $5 \times 10^5$ steps. The same $\epsilon$ schedule is used for both SGLD and convex relaxation. For SGLD, we run 5 iterations with step size $\frac{\epsilon_t}{5}$ and set the temperature parameter $\beta=1 \times 10^{-5}$. For convex relaxations, we use the efficient IBP+Backward scheme~\citep{xu2020automatic}, and  a training schedule similar to~\citep{zhang2019towards} by mixing the IBP bounds and backward mode perturbation analysis bounds. The total number of training steps is thus $2 \times 10^6$, which is the same as the regular DDPG training. The target $\epsilon$ values for each task is the same as $\epsilon$ listed in Table~\ref{tab:ddpg_res} for evaluation. Note that we apply perturbation on normalized environment states. The normalization factors are the standard deviations calculated using data collected on the baseline policy (vanilla DDPG) without adversaries.

\section{Additional Details for SA-DQN}

\label{sec:dqn_details}
\paragraph{Algorithm} We present the SA-DQN training algorithm in Algorithm~\ref{alg:sa_dqn}. The main difference between SA-DQN and DQN is the additional state-adversarial regularizer $\mathcal{R}_\text{DQN}(\theta)$, which encourages the network not to change its output under perturbations on the state observation. We highlighted these changes in Algorithm~\ref{alg:sa_dqn}. Note that the use of hinge loss is not required; other loss functions (e.g., cross-entropy loss) may also be used.
\begin{algorithm}[htbp]
\caption{State-Adversarial Deep Q-Learning (SA-DQN). We highlight its differences compared to vanilla DQN in \textcolor{brown}{brown}.}
\label{alg:sa_dqn}
\begin{algorithmic}[1]
\STATE{Initialize current Q network $Q(s,a)$ with parameters $\theta$.}
\STATE{Initialize target Q network $Q^\prime(s,a)$ with parameters $\theta' \leftarrow \theta$.}
\STATE{Initial replay buffer $\mathcal{B}$}
\FOR{$t=1$ to $T$}
\STATE{With probability $\epsilon_t$ select a random action at $a_t$, otherwise select $a_t=\argmax_aQ_\theta(s_t,a;\theta)$}
\STATE{Execute action $a_t$ in environment and observe reward $r_t$ and state $s_{t+1}$}
\STATE{Store transition $\{s_t,a_t,r_t,s_{t+1}\}$ in $\mathcal{B}$.}
\STATE{Randomly sample a minibatch of $N$ samples $\{s_i,a_i,r_i,s^\prime_i\}$ from $\mathcal{B}$.}
\STATE{For all $s_i$, compute $a_i^*=\argmax_aQ_\theta(s_i,a;\theta)$.}
\STATE{Set $y_i=r_i+\gamma\max_{a^\prime}Q'_{\theta'}(s^\prime_i,a^\prime;\theta)$ for non-terminal $s_i$, and $y_i=r_i$ for terminal $s_i$.}
\STATE{Compute TD-loss for each transition: $\text{TD-}L(s_i,a_i,s^\prime_i;\theta)=\text{Huber}( y_i-Q_\theta(s_i,a_i;\theta))$}
\STATE{\textcolor{brown}{Define $\mathcal{R}_\text{DQN}(\theta):=\sum_i\max\big\{\max_{\hat{s}_i\in B(s)}\max_{a\neq a_i^*}Q_\theta(\hat{s}_i,a;\theta)-Q_\theta(\hat{s}_i,a_i^*;\theta),-c\big\}$.}}
\STATE{\textcolor{brown}{Option 1: Use projected gradient descent (PGD) to solve $\mathcal{R}_\text{DQN}(\theta)$.}}
\STATE{\qquad Run PGD to solve: $\hat{s}_i = \argmax_{\hat{s}_i\in B(s_i)}\max_{a\neq a_i^*}Q_\theta(\hat{s}_i,a;\theta)-Q_\theta(\hat{s}_i,a_i^*;\theta)$.}
\STATE{\qquad Compute the sum of hinge loss of each $s_i$:\\
\qquad $\overline{\mathcal{R}}_{\text{DQN}}(\theta)=\sum_i\max\{\max_{a\neq a_i^*}Q_\theta(\hat{s}_i,a;\theta)-Q_\theta(\hat{s}_i,a_i^*),-c\}$.}
\STATE{\textcolor{brown}{Option 2: Use convex relaxations of neural networks to solve a surrogate loss of $\mathcal{R}_\text{DQN}(\theta)$.}}
\STATE{\qquad For all $s_i$ and all $a\neq a_i^*$, obtain upper bounds on $Q_\theta(s,a;\theta)-Q_\theta(s,a_i^*;\theta)$:\\
$\qquad u_{a_i^*, a}(s_i; \theta)=\text{ConvexRelaxUB}(Q_\theta(s,a;\theta)-Q_\theta(s,a_i^*;\theta), \theta, s\in B(s_i))$}
% \STATE{\qquad For all $s_i$ and all $a\neq a_i^*$, obtain upper bounds on $Q_\theta(s,a;\theta)-Q_\theta(s,a_i^*;\theta)$ using NN verification tools (such as convex relaxation): $Q_\theta(s,a;\theta)-Q_\theta(s,a_i^*;\theta)\leq u_{Q_\theta,a_i^*,a}(s_i)$ for all $s\in B(s_i).$}
\STATE{\qquad Compute a surrogate loss for the hinge loss:\\ \qquad$\overline{\mathcal{R}}_{\text{DQN}}(\theta)=\sum_i\max\big\{\max_{a\neq a_i^*}\{u_{a_i^*,a}(s_i)\},-c\big\}$}
\STATE{Perform a gradient descent step to minimize $\frac{1}{N}[\sum_i\text{TD-}L(s_i,a_i,s^\prime_i;\theta) \mathcolor{brown}{ + \kappa_\text{DQN}\overline{\mathcal{R}}_{\text{DQN}}(\theta)}]$.}
\STATE{Update Target Network every $M$ steps: $\theta'\leftarrow\theta.$}
\ENDFOR
\end{algorithmic}
\end{algorithm}

\paragraph{Hyperparameters for Vanilla DQN training.}
For Atari games, the deep Q networks have 3 CNN layers followed by 2 fully connected layers (following~\citep{wang2016dueling}). The first CNN layer has 32 channels, a kernel size of 8, and stride 4. The second CNN layer has 64 channels, a kernel size of 4, and stride 2. The third CNN layer has 64 channels, a kernel size of 3, and stride 1. The fully connected layers have $512$ hidden neurons for both value and advantage heads. We run each environment for $6\times 10^6$ steps without framestack. We set learning rate as $6.25\times 10^{-5}$ (following~\citep{hessel2017rainbow}) for Pong, Freeway and RoadRunner; for BankHeist our implementation cannot reliably converge within 6 million steps, so we reduce learning rate to $1 \times 10^{-5}$. For all Atari environments, we clip reward to $-1, +1$ (following~\citep{mnih2015human}) and use a replay buffer with a capacity of $2\times 10^5$.

%For Acrobot (a simple control task), the deep Q network is a 3-layer MLP with [128,128] hidden neorons and we run $6\times 10^5$ steps. We set learning rate to $1\times 10^{-3}$.  We use a replay buffer with a capacity of $2\times 10^4$.

We set discount factor set to 0.99. Prioritized replay buffer sampling is used with $\alpha=0.5$ and $\beta$ increased from 0.4 to 1 linearly through the end of training. A batch size of 32 is used in training. Same as in~\citep{mnih2015human}, we choose Huber loss as the TD-loss. We update the target network every 2k steps for all environments.

\paragraph{Hyperparameters for SA-DQN training.}
SA-DQN uses the same network structure and hyperparameters as in DQN training. The total number of SA-DQN training steps in all environments are the same as those in DQN (6 million). We update the target network every 2k steps for all environments except that the target network is updated every 32k steps for RoadRunner's SA-DQN, which improves convergence for our short training schedule of 6 million frames. For the additional state-adversarial regularization parameter $\kappa$ for robustness, we choose $\kappa\in\{0.005, 0.01,0.02\}$. For all 4 Atari environments, we train the Q network without regularization for the first $1.5\times 10^6$ steps, then increase $\epsilon$ from 0 to the target value in $4\times 10^6$ steps, and then keep training at the target $\epsilon$ for the rest $5\times 10^5$ steps. %For Acrobot, this $\epsilon$ schedule starts at $2\times 10^4$th step with a length of $1\times 10^5$ steps. For convex relaxations, we use the efficient IBP+Backward scheme~\citep{xu2020automatic}, and we use a training schedule similar to~\citep{zhang2019towards} by mixing the IBP bounds and backward mode perturbation analysis bounds. The confidence constant $c$ of hinge loss is 1 in Atari environments and is 0.01 in Acrobot.

\paragraph{Training Time}
As Atari training is expensive, we train DQN and SA-DQN only 6 million frames; the rewards reported in most DQN paper (e.g.,~\citep{mnih2015human,wang2016dueling,hessel2017rainbow}) are obtained by training 20 million frames. Thus, the rewards (without attacks) reported maybe lower than some baselines. The training time for vanilla DQN, SA-DQN (SGLD) and SA-DQN (convex) are roughly 15 hours, 40 hours and 50 hours on a single 1080 Ti GPU, respectively. The training time of each environment varies but is very close.

Note that the training time for convex relaxation based method can be further reduced when using an more efficient relaxation. The fastest relaxation is interval bound propagation (IBP), however it is too inaccurate and can make training unstable and hard to tune~\citep{zhang2019towards}. We use the tighter IBP+Backward relaxation, and its complexity can be further improved to the same level as IBP with the recently developed loss fusion technique~\citep{xu2020automatic}, while providing a much better relaxation than IBP. Our work simply uses convex relaxations as a blackbox tool and we leave further improvements on convex relaxation based methods as a future work.

\section{Additional Experimental Results}
\label{sec:app_exp}

\subsection{More results on SA-PPO} 

\paragraph{Box plots of rewards for SA-PPO agents}

In Table~\ref{tab:ppo_res}, we report the mean and standard deviation of rewards for agents under attack. However, since the distribution of cumulative rewards can be non-Gaussian, in this section we include box plots of rewards for each task in Figure~\ref{fig:test}. We can observe that the rewards (median, 25\% and 75\% percentiles) under the strongest attacks (Figure~\ref{fig:att_box}) significantly improve. 

\begin{figure}
\centering
\begin{subfigure}{.5\textwidth}
  \centering
  \includegraphics[width=\linewidth]{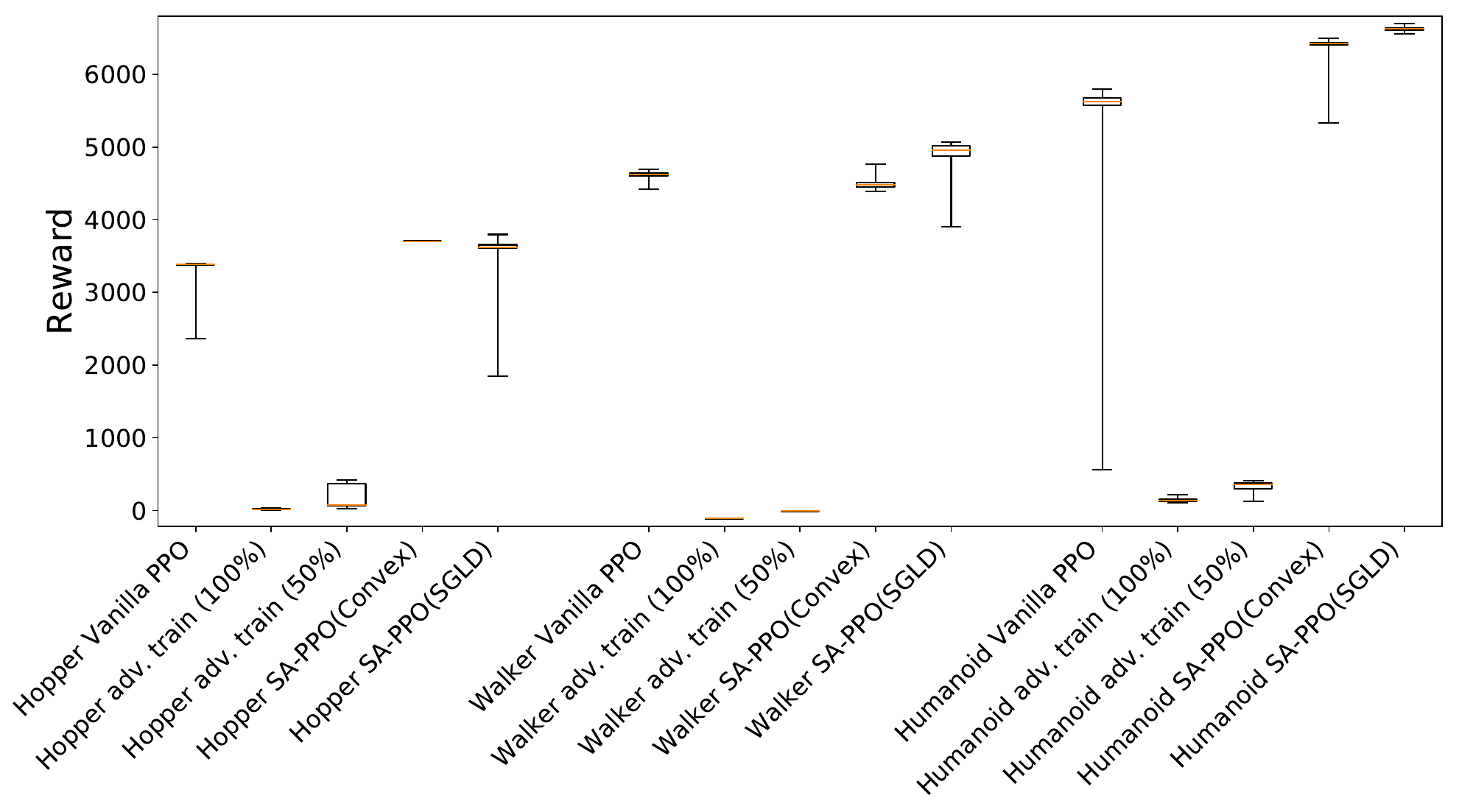}
  \caption{Natural episode rewards (no attacks)}
  \label{fig:nat_box}
\end{subfigure}%    
\begin{subfigure}{.5\textwidth}
  \centering
  \includegraphics[width=\linewidth]{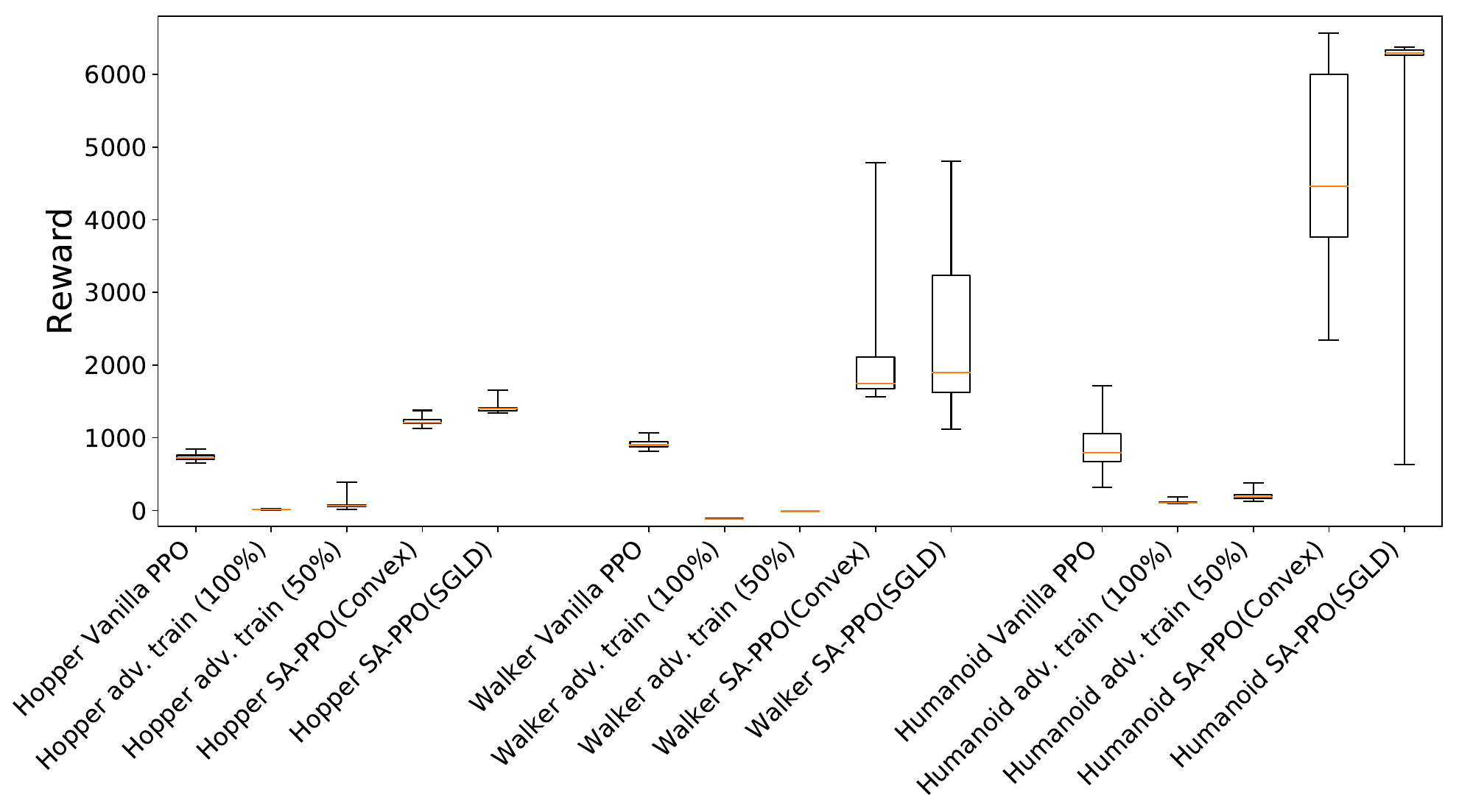}
  \caption{Rewards under the best (strongest) attacks}
  \label{fig:att_box}
\end{subfigure}
\caption{Box plots of natural rewards and rewards under the strongest (best) attacks for PPO, adversarially trained PPO and SA-PPO agents corresponding to the results presented in Table~\ref{tab:ppo_res} (Table~\ref{tab:ppo_res} only reports mean and standard deviation). Each box shows the distribution of cumulated rewards collected from 50 episodes of a single agent. The red lines inside the boxes are median rewards, and the upper and lower sides of the boxes show 25\% and 75\% percentile rewards of 50 episodes. The line segments outside of the boxes show min or max rewards.}
\label{fig:test}
\end{figure}

%\textcolor{blue}{Hongge:  complete writing of this paragraph based on your results, and include figures.}

\begin{figure}
\begin{center}
     
     \begin{tabular}{cc}
     Hopper&\begin{minipage}{.95\textwidth}
      \includegraphics[width=\linewidth]{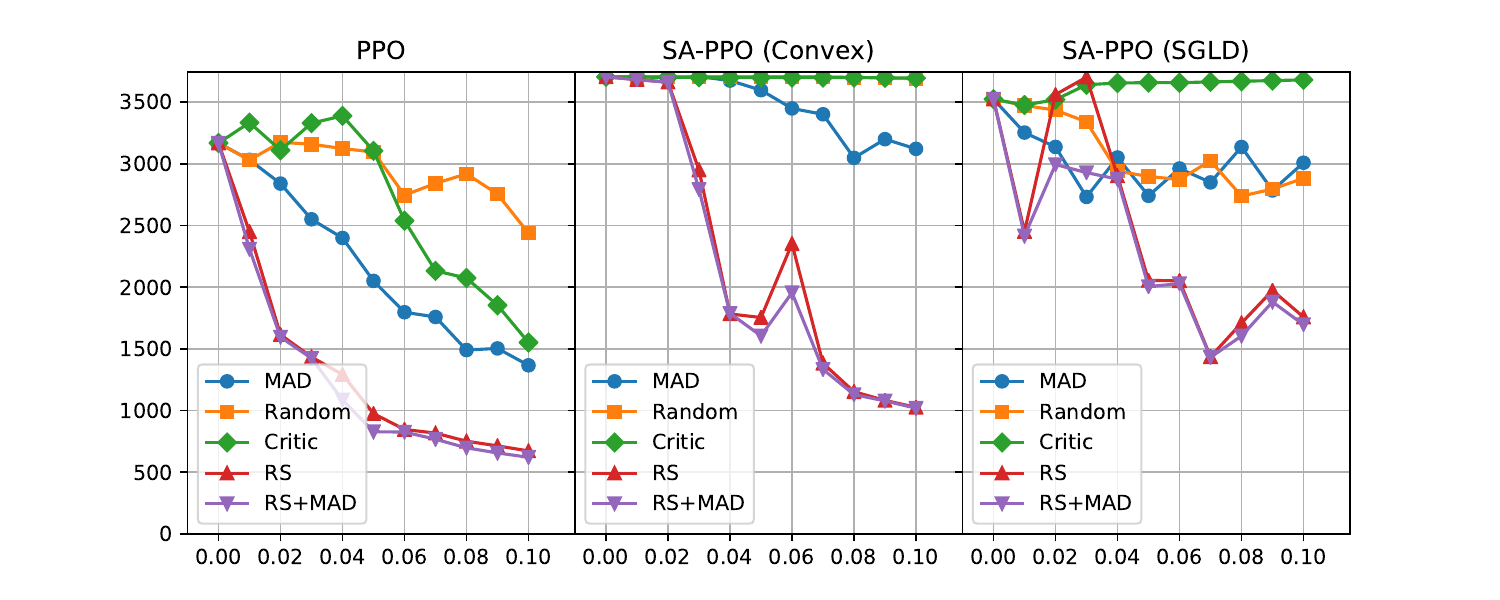}
    \end{minipage}\\
    Walker&\begin{minipage}{.95\textwidth}
      \includegraphics[width=\linewidth]{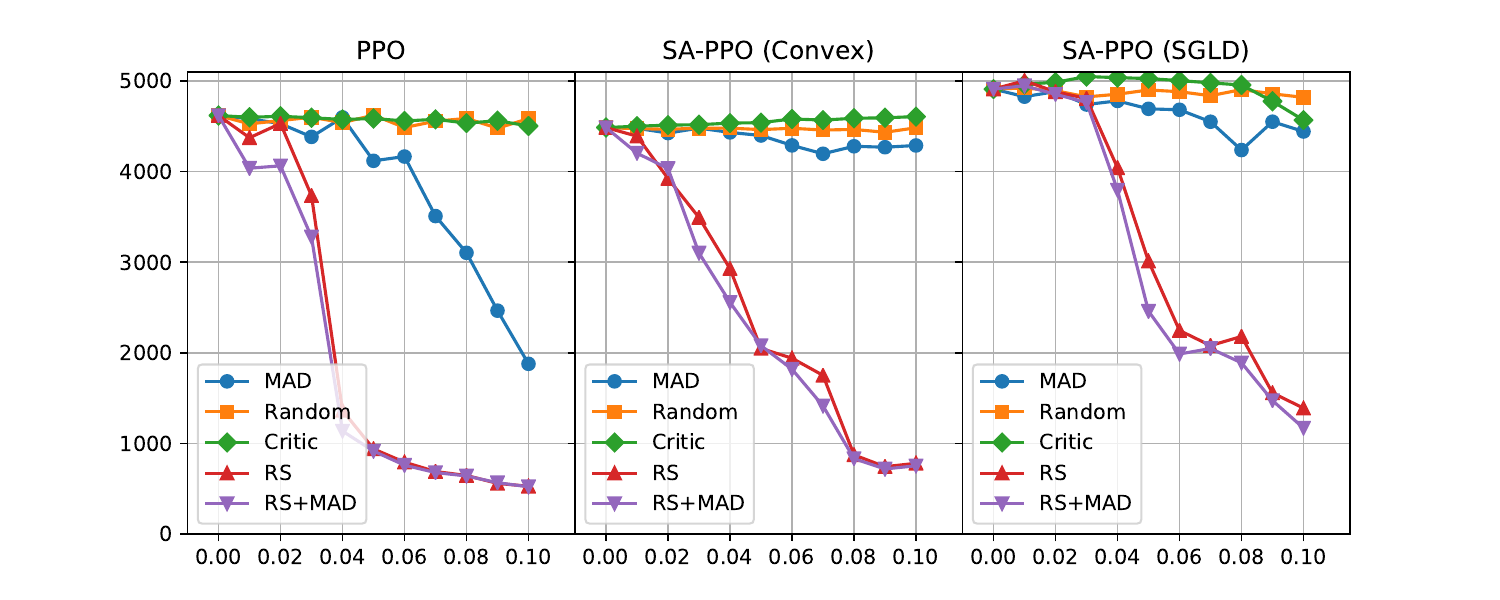}
    \end{minipage}\\
    Humanoid&\begin{minipage}{.95\textwidth}
      \includegraphics[width=\linewidth]{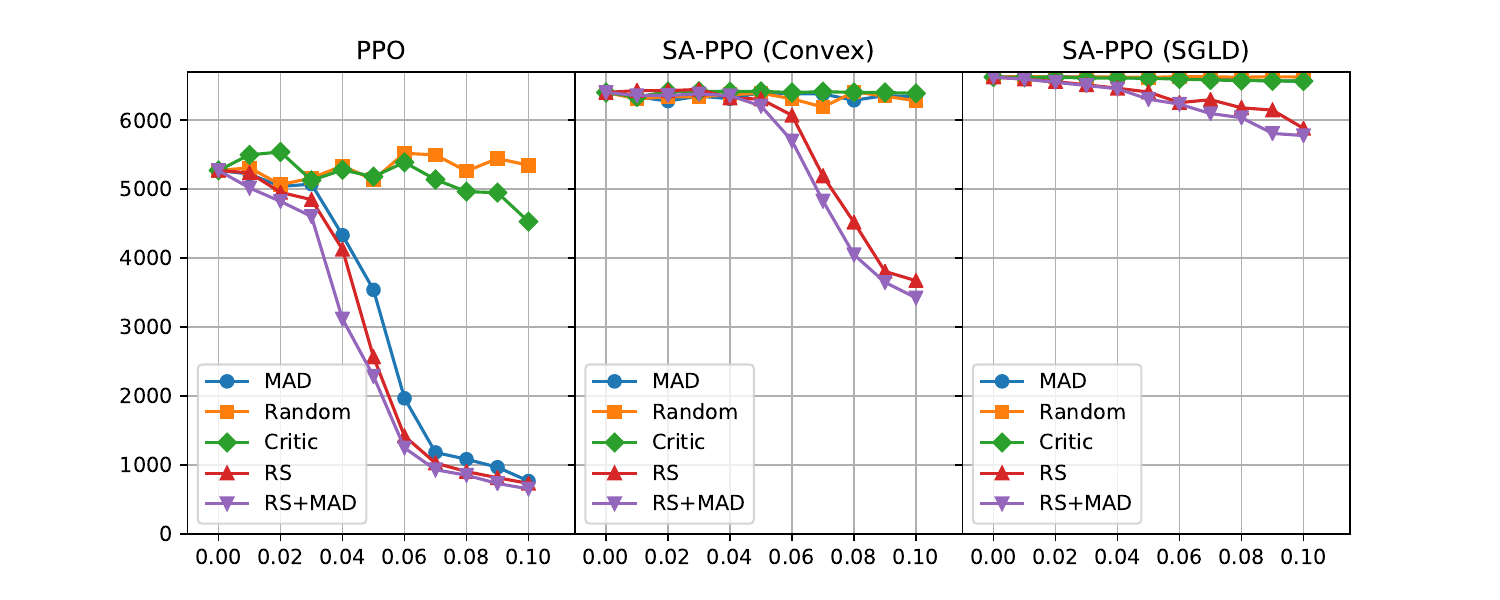}
    \end{minipage}\\
     \end{tabular}
  \caption{Attacking PPO agents under different $\epsilon$ values using 5 attacks. Each data point reported in this figure is an average of 50 episodes.}
  \label{fig:attack_eps_sweep}
\end{center}
\end{figure}

\paragraph{Evaluation using multiple $\epsilon$}
In Figure~\ref{fig:attack_eps_sweep} we show the attack rewards of PPO and SA-PPO agents with different perturbation budget $\epsilon$. We can see that the lowest attack rewards of SA-PPO agents are higher than those of PPO under all $\epsilon$ values. Additionally, Robust Sarsa (RS) attacks and RS+MAD attacks are typically stronger than other attacks. On vanilla PPO agents, the MAD attack is also competitive.
%\textcolor{blue}{Hongge:  complete writing of this paragraph based on your results, and include figures.}

\paragraph{Convergence of PPO and SA-PPO agents}

We want to confirm that our better performing Humanoid agents under state-adversarial regularization are not just by chance. We train each environment using SA-PPO and PPO \emph{at least 15 times}, and collect rewards during training. %We plot all these runs in Figure XXX. %\textcolor{blue}{Hongge:  complete writing of this paragraph based on your results, and include figures.} 
We plot the median, 25\% and 75\% percentile of rewards during the training process for all these runs in Figure~\ref{fig:convergence}. 

We can see that our SA-PPO agents consistently outperform vanilla PPO agents in Humanoid. Since we also present the 25\% and 75\% percentile of the rewards among 15 agents, we believe this improvement is not because of cherry-picking. For Hopper and Walker environments, 
SA-PPO has almost no performance drop compared to vanilla PPO.

\begin{figure}
\centering
\begin{subfigure}{.8\textwidth}
  \centering
  \includegraphics[width=\linewidth]{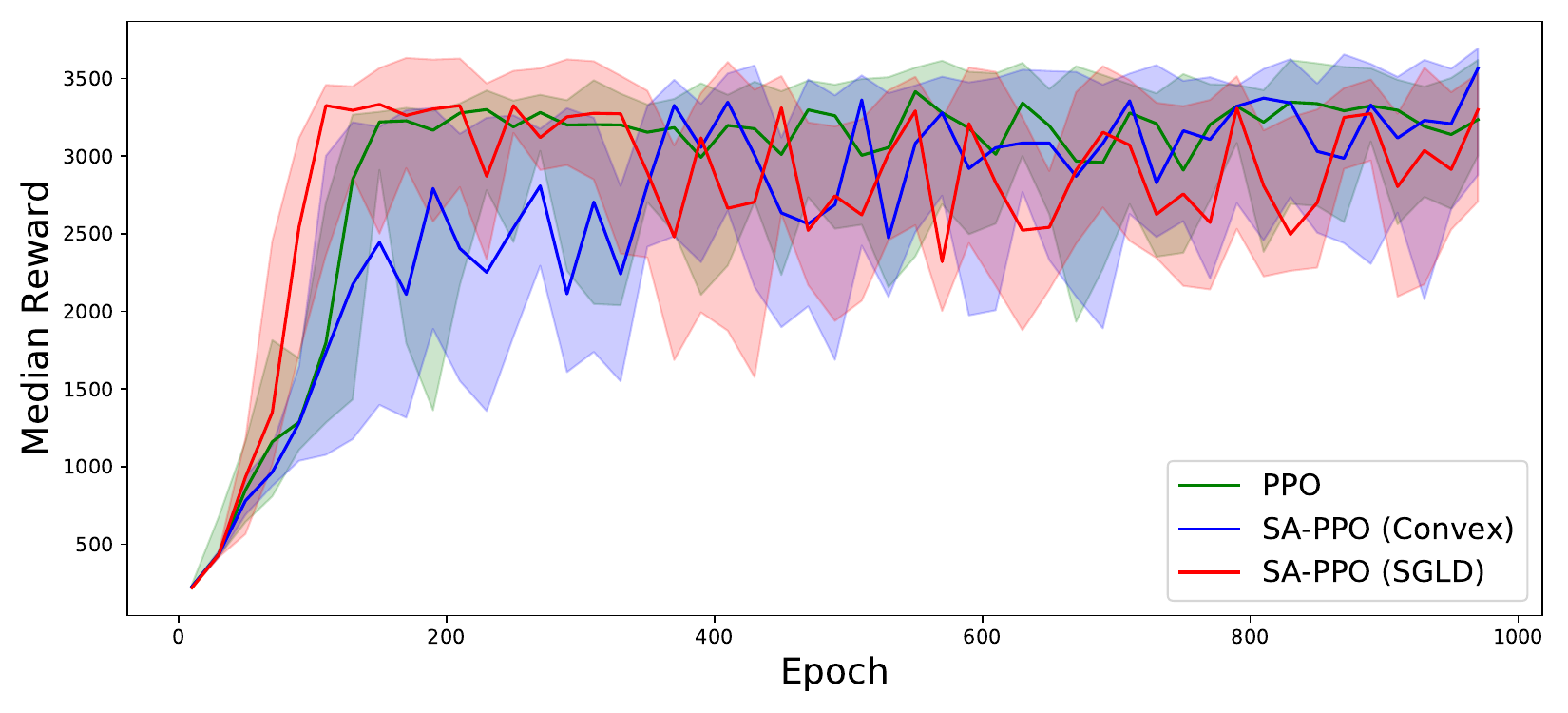}
  \caption{Hopper}
  \label{fig:hopper_cov}
\end{subfigure}\\
\begin{subfigure}{.8\textwidth}
  \centering
  \includegraphics[width=\linewidth]{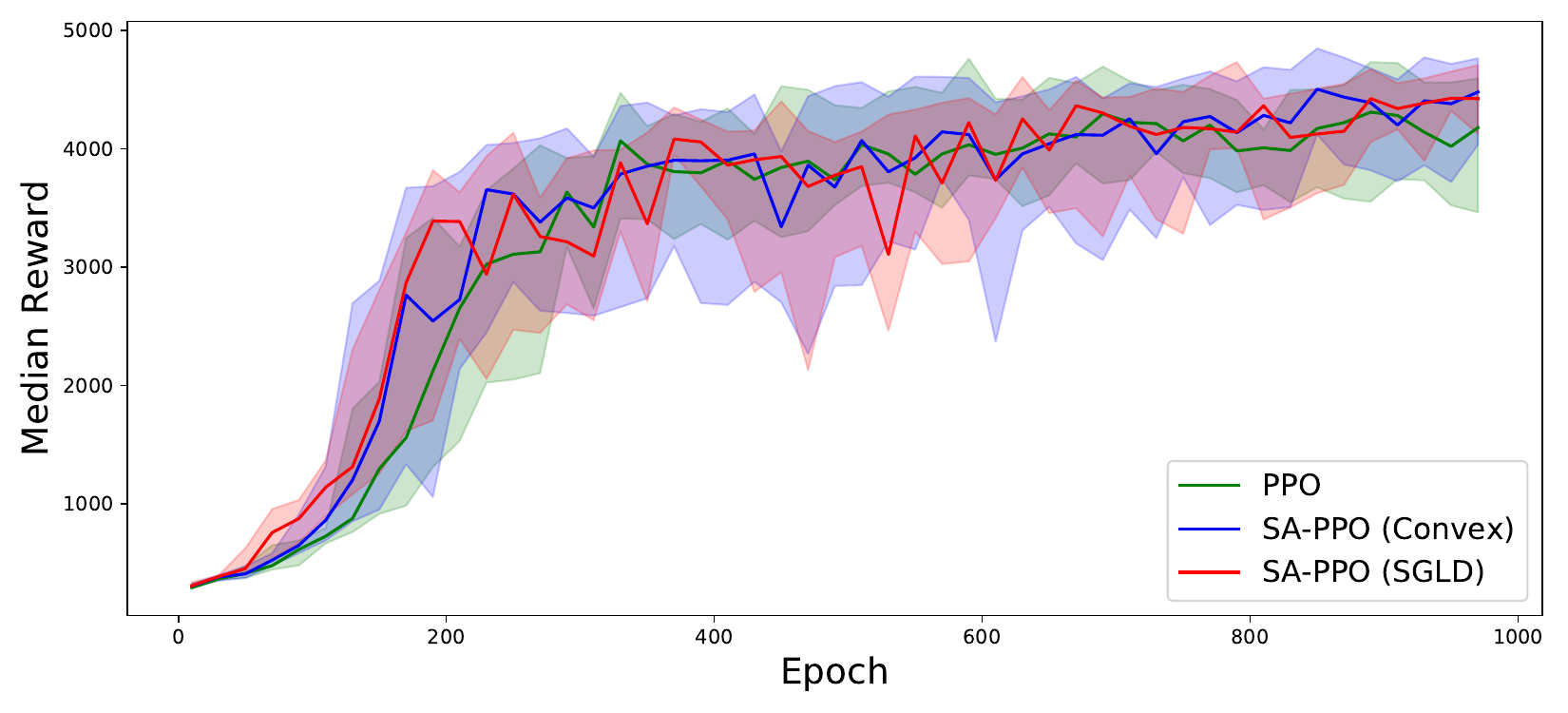}
  \caption{Walker}
  \label{fig:walker_cov}
\end{subfigure}\\
\begin{subfigure}{.8\textwidth}
  \centering
  \includegraphics[width=\linewidth]{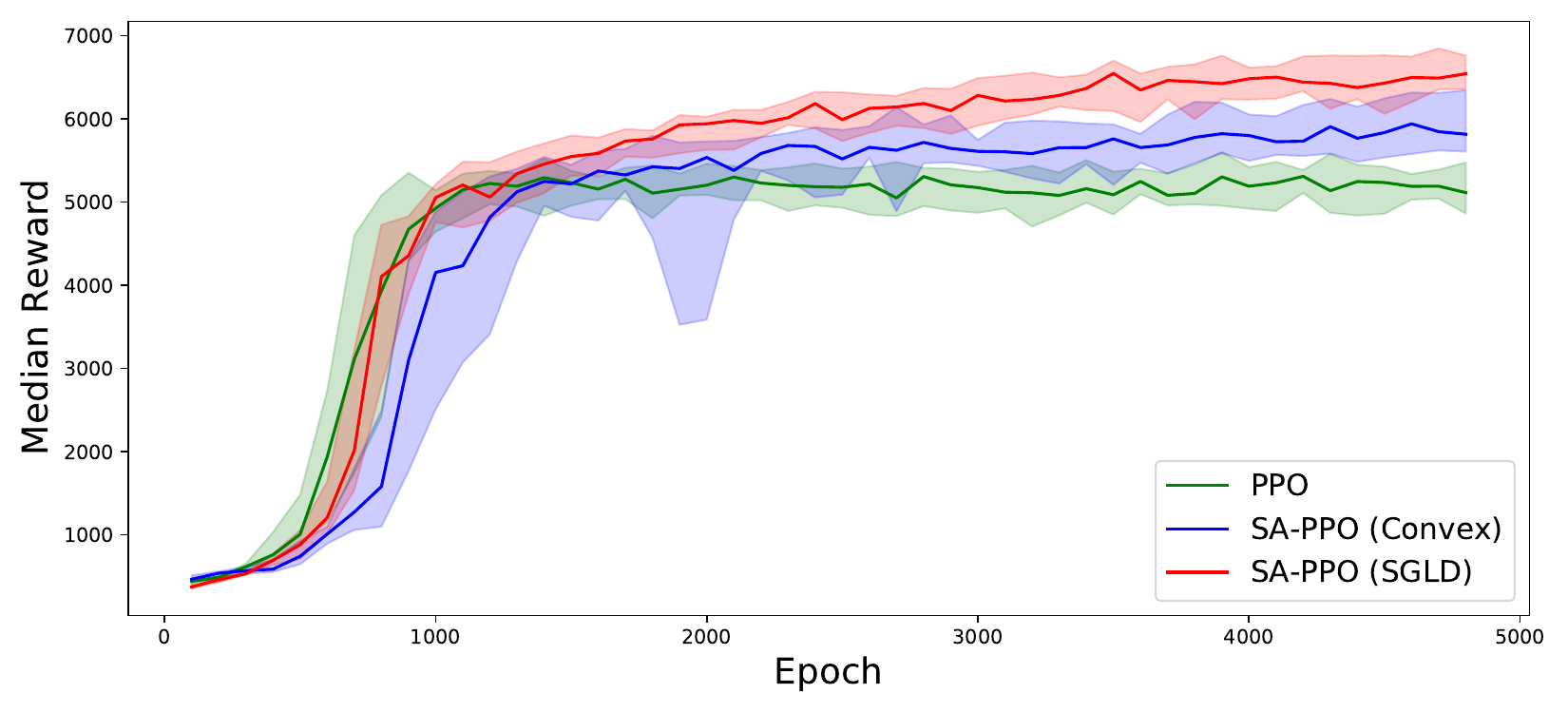}
  \caption{Humanoid}
  \label{fig:humanoid_conv}
\end{subfigure}

\caption{The median, 25\% and 75\% percentile episode reward of at least 15 PPO and 15 SA-PPO agents during training. %We report the moving average value of 10 consecutive episodes. 
The region of the shaded colors (light blue: SA-PPO solved with SGLD; light green: SA-PPO solved with convex relaxations; light red: vanilla PPO) represent the interval between 25\% and 75\% percentile rewards over these 15 different training runs, and the solid line is the median rewards over these runs.}
\label{fig:convergence}
\end{figure}

%\textcolor{blue}{Hongge:  complete writing of this paragraph based on your results, and include figures.}

\subsection{More results on SA-DDPG}

\paragraph{Reproducibility over multiple training runs.} To show that our SA-DDPG can consistently obtain a robust agent and we do not cherry-pick good results, we repeatedly train all 5 environments using SA-DDPG and DDPG \textbf{11 times} each and attack all agents. We report the median, minimum, 25\% and 75\% rewards of 11 agents in box plots. The results are shown in Figure~\ref{fig:ddpg_repo_test}. We can observe that SA-DDPG is able to consistently improve the robustness: the median, 25\% and 75\% percentile rewards under attacks are significantly and consistently better than vanilla DDPG over all 5 environments.

\begin{figure}[!tb]
\centering
\begin{subfigure}{.5\textwidth}
  \centering
 \includegraphics[width=\linewidth]{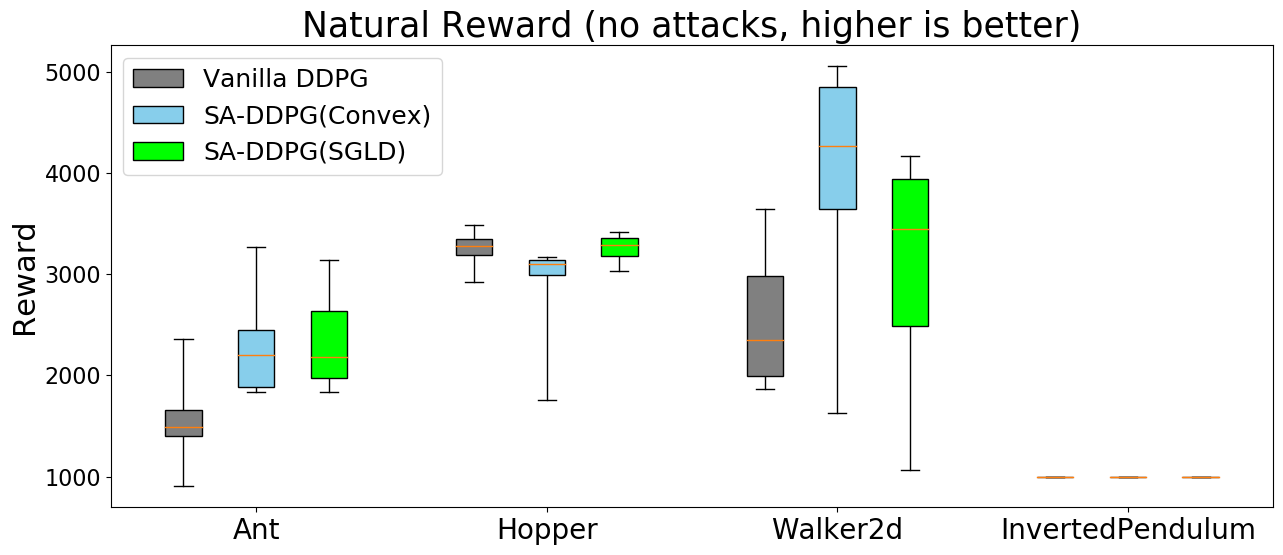}
  \caption{Natural episode rewards (no attacks)}
  \label{fig:ddpg_nat_reproduce}
\end{subfigure}%
\begin{subfigure}{.5\textwidth}
  \centering
 \includegraphics[width=\linewidth]{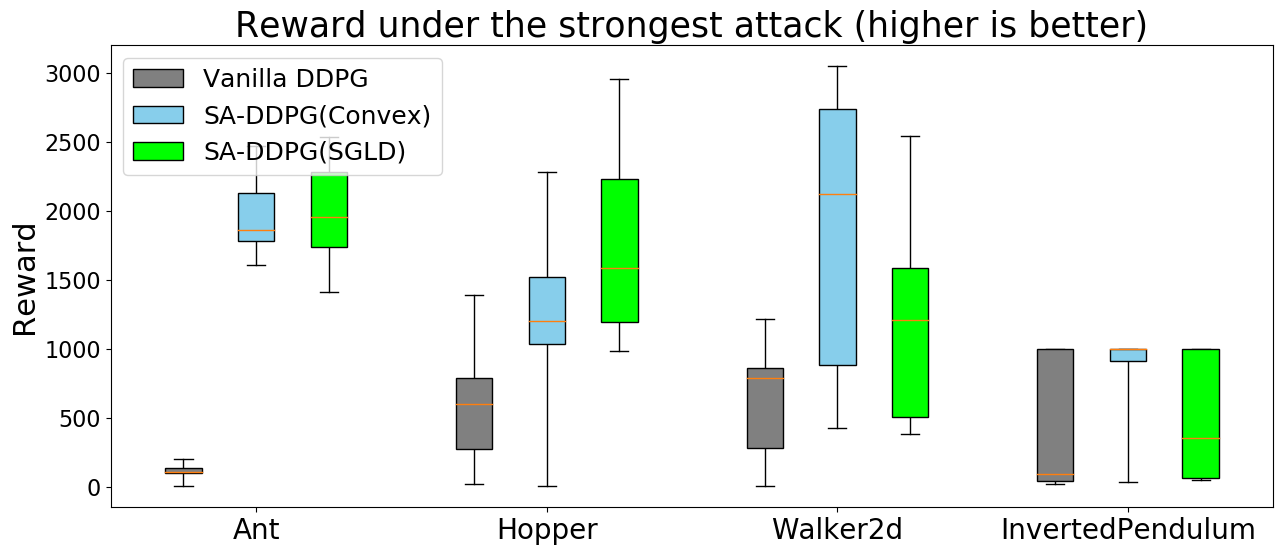}
  \caption{Rewards under the best (strongest) attacks}
  \label{fig:ddpg_att_reproduce}
\end{subfigure}
\caption{Box plots of natural and attack rewards for DDPG and SA-DDPG. Each box is obtained from \textbf{11 agents} trained with the same hyerparameters as the agents reported in Table~\ref{tab:ddpg_res} and tested for 50 episodes (each sample of the box is an average reward over 50 episodes). The red lines inside the boxes are median rewards, and the upper and lower sides of the boxes show 25\% and 75\% percentile rewards. The line segments outside of the boxes show min or max rewards.}
\label{fig:ddpg_repo_test}
\end{figure}

\paragraph{Full attack results}

In Table~\ref{tab:ddpg_full_res_} we present attack rewards on all of our DDPG agents. In the main text, we only report the strongest (lowest) attack rewards since the lowest reward determines the true agent robustness.

\begin{table*}\centering
\resizebox{\linewidth}{!}{
\begin{tabular}{c|c|c c c c c c}
\toprule
\multicolumn{2}{c|}{Environment}& Ant  & Hopper & Inverted Pendulum & Reacher & Walker2d\\\hline
\multicolumn{2}{c|}{$\epsilon$}& 0.2 & 0.075 & 0.3 & 1.5 & 0.05 \\\hline
\multicolumn{2}{c|}{State Space}& 111 & 11 & 4 & 11 & 17 \\\hline

\multirow{7}{*}{\parbox{2cm}{\centering Vanilla\\DDPG}}&Natural Reward &$1487 \pm 850$ & $3302 \pm 762$ & $1000 \pm 0$ & $-4.37 \pm 1.54$ & $1870 \pm 1418$  \\
&Critic Attack &$187 \pm 157$ & $2504 \pm 1207$ & $1000 \pm 0$ & $-24.35 \pm 5.10$ & $1301 \pm 1229$ \\

&Random Attack & $1473 \pm 795$ & $3086 \pm 1006$ & $1000 \pm 0$ & $-8.71 \pm 2.42$ & $1828 \pm 1456$ \\
&MAD Attack &  $180 \pm 200$ & $2745 \pm 1073$ & $1000 \pm 0$ & $-27.67 \pm 5.32$ & $1564 \pm 1405$ \\
&RS Attack & $336\pm 283$ & $606 \pm 124$ & $92 \pm 1$ & $-21.74 \pm 5.14$ & $959 \pm 1001$ \\
&RS+MAD &$142 \pm 180$ & $2056 \pm 1225$ & $1000 \pm 0$ & $-27.87 \pm 4.38$ & $790 \pm 985$ \\
&Best Attack & 142  & 606  & 92  &-27.87 & 790\\\hline

\multirow{7}{*}{\parbox{2cm}{\centering DDPG with\\adv. training\\(50\% steps)\\\citet{pattanaik2018robust}}}&Natural Reward &$1522 \pm 831$ & $2694 \pm 497$ & $1000 \pm 0$ & $-5.20 \pm 1.70$ & $1818 \pm 1187$ \\
&Critic Attack & $222 \pm 299$ & $1789 \pm 1143$ & $703 \pm 373$ & $-23.88 \pm 5.05$ & $1391 \pm 1083$ \\
&Random Attack &$1389 \pm 785$ & $2316 \pm 741$ & $1000 \pm 0$ & $-9.09 \pm 2.42$ & $1793 \pm 955$ \\
&MAD Attack & $92 \pm 240$ & $1497 \pm 839$ & $238 \pm 240$ & $-25.81 \pm 6.53$ & $1680 \pm 1106$ \\
&RS Attack &$129 \pm 156$ & $41 \pm 105$ & $39 \pm 0$ & $-25.45 \pm 6.70$ & $837 \pm 722$ \\
&RS+MAD &$31 \pm 179$ & $1503 \pm 851$ & $116 \pm 90$ & $-25.81 \pm 6.53$ & $1120 \pm 859$ \\
&Best Attack & 31 & 41 & 39  & -25.81  & 837   \\\hline

\multirow{7}{*}{\parbox{2cm}{\centering DDPG with\\adv. training\\(100\% steps)\\\citet{pattanaik2018robust}}}&Natural Reward & $1082 \pm 574$ & $973 \pm 0$ & $1000 \pm 0$ & $-5.71 \pm 1.80$ & $462 \pm 569$ \\
&Critic Attack & $126 \pm 148$ & $62 \pm 34$ & $174 \pm 66$ & $-21.91 \pm 3.52$ & $809 \pm 525$ \\
&Random Attack & $832 \pm 545$ & $577 \pm 431$ & $998 \pm 5$ & $-9.60 \pm 2.56$ & $751 \pm 568$ \\
&MAD Attack &$43 \pm 165$ & $56 \pm 50$ & $121 \pm 19$ & $-26.47 \pm 4.19$ & $699 \pm 484$ \\
&RS Attack & $115 \pm 286$ & $24 \pm 15$ & $82 \pm 0$ & $-22.17 \pm 4.46$ & $302 \pm 260$ \\
&RS+MAD & $-52 \pm 231$ & $56 \pm 50$ & $110 \pm 26$ & $-27.44 \pm 4.05$ & $488 \pm 406$ \\
&Best Attack & $-52 $ & $24$ & $82 $ & $-27.44$ & $302 $ \\\hline

\multirow{7}{*}{\parbox{2cm}{\centering SA-DDPG\\solved by\\SGLD}}&Natural Reward & $2186 \pm 534$ & $3068 \pm 223$ & $1000 \pm 0$ & $-5 \pm 1$ & $3318 \pm 680$ \\
&Critic Attack & $2076 \pm 556$ & $2899 \pm 439$ & $423 \pm 281$ & $-12.10 \pm 4.58$ & $1210 \pm 979$ \\
&Random Attack & $2162 \pm 524$ & $3071 \pm 196$ & $1000 \pm 0$ & $-11.41 \pm 4.96$ & $3058 \pm 848$ \\
&MAD Attack & $2128 \pm 482$ & $3093 \pm 17$ & $733 \pm 284$ & $-11.94 \pm 4.79$ & $3252 \pm 689$  \\
&RS Attack & $2038 \pm 401$ & $1729 \pm 792$ & $832 \pm 328$ & $-11.69 \pm 4.80$ & $2224 \pm 1050$  \\
&RS+MAD & $2007\pm 686$ & $1609 \pm 676$ & $724 \pm 322$ & $-12.01 \pm 4.84$ & $1933 \pm 1055$ \\
&Best Attack & $\bf 2007$  & $\bf 1609 $ & $423$ & $\bf-12.10$ & $1210$ \\\hline

\multirow{7}{*}{\parbox{2cm}{\centering SA-DDPG\\solved by\\convex relaxations}}&Natural Reward & $2254 \pm 430$ & $3128 \pm 453$ & $1000 \pm 0$ & $-5.24 \pm 2.06$ & $4540 \pm 1562$  \\
&Critic Attack &$1826 \pm 568$ & $2546 \pm 843$ & $1000 \pm 0$ & $-11.51 \pm 3.80$ & $2245 \pm 1881$  \\
&Random Attack &$2249 \pm 491$ & $3036 \pm 593$ & $1000 \pm 0$ & $-9.87 \pm 3.95$ & $4216 \pm 1616$  \\
&MAD Attack & $2106 \pm 573$ & $2959 \pm 663$ & $1000 \pm 0$ & $-12.43 \pm 3.76$ & $4135 \pm 1884$  \\
&RS Attack & $1820 \pm 635$ & $1258 \pm 561$ & $1000 \pm 0$ & $-11.40 \pm 3.56$ & $1986 \pm 1993$  \\
&RS+MAD & $2005 \pm 699$ & $1202 \pm 402$ & $1000 \pm 0$ & $-12.44 \pm 3.77$ & $2315 \pm 2127$ \\
&Best Attack &$1820$ & $1202$ & $ 1000$ & $-12.44$ &  $\bf 1986$ \\\hline
\bottomrule
\end{tabular}
}
\caption{Average episode rewards on 5 MuJoCo environments using policies trained by DDPG and SA-DDPG. Natural reward is the reward in clean environment without adversarial attacks. The ``Best Attack'' rows report the lowest reward over all five attacks (representing the strongest attack), and this lowest reward is used for robustness evaluation.}
\label{tab:ddpg_full_res_}
\end{table*}

\subsection{Robustness Certificates}

We report robustness certificates for SA-DQN in Table~\ref{tab:dqn_res}. As discussed in section~\ref{sec:certificate}, for DQN we can guarantee that an action does not change under bounded adversarial noise. In Table~\ref{tab:dqn_res}, the ``Action Cert. Rate'' is the ratio of actions that does not change under any $\ell_\infty$ norm bounded noise. In some settings, we find that 100\% of the actions are guaranteed to be unchanged (e.g., the Pong environment in Table~\ref{tab:dqn_res}). In that case, we can in fact also certify that the cumulative reward is not changed given the specific initial conditions for testing.

In SA-DDPG, we can obtain robustness certificates that give bounds on actions in the presence of bounded perturbation on state inputs. Given an input state $s$, we use convex relaxations of neural networks to obtain the upper and lower bounds for each action: $l_i(s) \leq \pi_i(\hat{s}) \leq u_i(s), \forall \hat{s} \in B(s)$. We consider the following certificates on $\pi(s)$: the average output range $\frac{\|u(s) - l(s)\|_1}{|\mathcal{A}|}$ which reflect the tightness of bounds, and the $\ell_2$ distance. %distance defined in~\eqref{eq:act_reg_ddpg}. 
Note that bounds on other $\ell_p$ norms can also be computed given $l_i(s)$ and $u_i(s)$. Since the action space is normalized within $[-1,1]$, the worst case output range is 2. We report both certificates for all five environments in Table~\ref{tab:new_ddpg_certificate}. DDPG without our robust regularizer usually cannot obtain non-vacuous certificates (range is close to 2). SA-DDPG can provide robustness certificates (bounded inputs guarantee bounded outputs). We include some discussions on these certificates in Section~\ref{sec:certificate}.

For SA-PPO, since the action follows a Gaussian policy, we can upper bound its KL-divergence under state perturbations. The results are shown in Table~\ref{tab:ppo_certificate}. Note that, by increasing the regularization parameter $\kappa$, it is possible to obtain an even tighter certificate at the cost of model performance.

The robustness certificates for SA-DDPG and SA-PPO are computed using interval bound propagation (IBP). For vanilla DDPG and PPO, we use CROWN~\citep{zhang2018efficient}, a much tighter convex relaxation to obtain the certificates, but they are often still vacuous.

\begin{table}[htbp]
\caption{Robustness certificates on bounded action changes under bounded state perturbations for DDPG agents. Results are averaged over 50 episodes. A smaller number is better. A vanilla DDPG agent typically cannot provide non-vacuous robustness guarantees.}
\label{tab:new_ddpg_certificate}
\resizebox{\linewidth}{!}{
\begin{tabular}{|c|c|c|c|c|c|c|}
\hline
\multicolumn{2}{|c|}{Settings}                                            & Ant    & Hopper & InvertedPendulum & Reacher & Walker2d \\ \hline
\multirow{2}{*}{Certificates ($\ell_2$ upper bound)}      & SA-DDPG (Convex) &0.181 & 0.050 & 0.787 & 0.202 & 0.169     \\ \cline{2-7} 
                                              & DDPG (vanilla)         & 3.972 & 2.612 & 0.992 & 1.491 & 2.484 \\ \hline
\multirow{2}{*}{Certificates ($\ell_1$ upper bound)}      & SA-DDPG (Convex) &0.454 & 0.074 & 0.787 & 0.283 & 0.301    \\ \cline{2-7} 
                                              & DDPG (vanilla)         &11.087 & 4.345 & 0.992 & 2.107 & 4.923   \\ \hline
\multirow{2}{*}{Certificates ($\ell_\infty$ upper bound)} & SA-DDPG (Convex) & 0.104 & 0.041 & 0.787 & 0.157 & 0.131     \\ \cline{2-7} 
                                              & DDPG (vanilla)         &1.734 & 1.794 & 0.992 & 1.073 & 1.570    \\ \hline
\multirow{2}{*}{Certificates (Range)}         & SA-DDPG (Convex) & 0.057 & 0.025 & 0.787 & 0.142 & 0.050     \\ \cline{2-7} 
                                              & DDPG (vanilla)         & 1.386 & 1.448 & 0.992 & 1.054 & 0.821 \\ \hline
\end{tabular}}
\end{table}

\begin{table}[htbp]
\caption{Upper bound on KL-divergence $\mathrm{D_{KL}}(\pi(a|s)\|\pi(a|\hat{s}))$ for three PPO environments. A smaller number is better. SA-PPO can reduce this upper bound significantly especially for high dimensional environments like Humanoid.}
\label{tab:ppo_certificate}
\centering
\resizebox{0.8\linewidth}{!}{
\begin{tabular}{|c|c|c|c|c|}
\hline
\multicolumn{2}{|c|}{Settings}                                & Hopper & Walker2d & Humanoid \\ \hline
\multirow{2}{*}{Certificates (KL upper bound)} & SA-PPO (Convex) & 0.1232 & 0.09831   & 3.529   \\ \cline{2-5} 
                                   & PPO (vanilla)      & 32.16 & 31.56   & 925140   \\ \hline
\end{tabular}
}
\end{table}

\begin{comment}
\section{Code submission}

We have submitted our training and evaluation code for SA-PPO, SA-DDPG and SA-DQN. We also provided pretrained models that can be directly loaded for evaluation. We highly encourage the reviewers to checkout our code submission.
\end{comment}

%%%%%%%%%%%%%%%%%%%%%%%%%%%%%%%%%%%%%%%%%%%%%%%%%%%%%%%%%%%%%%%%%%%%%%%%%%%%%%%
%%%%%%%%%%%%%%%%%%%%%%%%%%%%%%%%%%%%%%%%%%%%%%%%%%%%%%%%%%%%%%%%%%%%%%%%%%%%%%%
% DELETE THIS PART. DO NOT PLACE CONTENT AFTER THE REFERENCES!
%%%%%%%%%%%%%%%%%%%%%%%%%%%%%%%%%%%%%%%%%%%%%%%%%%%%%%%%%%%%%%%%%%%%%%%%%%%%%%%
%%%%%%%%%%%%%%%%%%%%%%%%%%%%%%%%%%%%%%%%%%%%%%%%%%%%%%%%%%%%%%%%%%%%%%%%%%%%%%%

%%%%%%%%%%%%%%%%%%%%%%%%%%%%%%%%%%%%%%%%%%%%%%%%%%%%%%%%%%%%%%%%%%%%%%%%%%%%%%%
%%%%%%%%%%%%%%%%%%%%%%%%%%%%%%%%%%%%%%%%%%%%%%%%%%%%%%%%%%%%%%%%%%%%%%%%%%%%%%%

\end{document}